\documentclass[10pt,letterpaper]{article}
\usepackage[whole]{bxcjkjatype}
\usepackage{natbib}
\usepackage{aaai22}
\nocopyright
\usepackage{times}
\usepackage{helvet}
\usepackage{courier}
\bibliographystyle{aaai22}
\pdfinfo{
/TemplateVersion (2022.1) 
}
\usepackage{tabularx}
\usepackage{xparse}
\usepackage{amsmath}
\usepackage{amsthm}
\usepackage{amssymb}
\usepackage{amsfonts}
\usepackage{xspace}
\usepackage{relsize}
\usepackage{graphicx}
\usepackage{multirow}
\usepackage{url}
\usepackage{comment}
\usepackage{math_commands}
\usepackage[svgnames]{xcolor}
\usepackage{hyperref}
\usepackage{textcomp}
\usepackage{adjustbox}
\usepackage{multirow}
\usepackage{multicol}
\usepackage{subcaption}

\usepackage{algorithm}
\usepackage[noend]{algpseudocode} 

\usepackage{microtype}  
\usepackage{booktabs}   

\usepackage{xr}

\usepackage{epsdice}

\usepackage{stackengine}
\stackMath
\newcommand\tsup[2][2]{%
 \def\useanchorwidth{T}%
  \ifnum#1>1%
    \stackon[-.5pt]{\tsup[\numexpr#1-1\relax]{#2}}{\scriptscriptstyle\sim}%
  \else%
    \stackon[.5pt]{#2}{\scriptscriptstyle\sim}%
  \fi%
}

\renewcommand{\tilde}[1]{\tsup[1]{#1}}
\newcommand{\uline}[1]{\underline{#1}}

\newcommand{\function}[1]{\textsc{#1}}

\newcommand{\mycolor}[2]{\textcolor{#1}{#2}}

\newcommand{\red}[1]{\mycolor{red}{#1}}
\newcommand{\green}[1]{\mycolor{Green}{#1}}
\newcommand{\blue}[1]{\mycolor{blue}{#1}}

\newcommand{\gray}[1]{\mycolor{gray}{#1}}

\newcommand{\orange}[1]{\mycolor{orange}{#1}}

\newcommand{\violet}[1]{\mycolor{violet}{#1}}

\newcommand{\spc}[2][c]{%
  \begin{tabular}[#1]{@{}c@{}}#2\end{tabular}}
\def\_{\\[-0.3em]}

\newcommand{\si}[1]{\mathrm{[#1]}}

\newtheorem{defi}{Definition}
\newtheorem{ex}{Example}
\newtheorem{theo}{Theorem}
\newtheorem{lemma}{Lemma}
\newtheorem{corollary}{Corollary}

\newtheorem{conv}{Convention}
\newtheorem{fact}{Fact}

\makeatletter
\let\@myref\ref

\newcommand{\refsec}[1]{Sec.\,\@myref{#1}}
\newcommand{\refseq}[1]{Sec.\,\@myref{#1}}
\newcommand{\refig}[1]{Fig.\,\@myref{#1}}
\newcommand{\reftbl}[1]{Table \@myref{#1}}
\newcommand{\refstep}[1]{Step \@myref{#1}}
\newcommand{\refalgo}[1]{Alg.\,\@myref{#1}}
\newcommand{\refchap}[1]{Chap.\,\@myref{#1}}
\newcommand{\reflst}[1]{List \@myref{#1}}
\newcommand{\refeq}[1]{Eq.\,\@myref{#1}}

\newcommand{\reftheo}[1]{Thm.\,\@myref{#1}}
\newcommand{\refline}[1]{line\,\@myref{#1}}
\newcommand{\refdef}[1]{Def.\, \@myref{#1}}
\newcommand{\refex}[1]{Example\,\@myref{#1}}
\newcommand{\refconv}[1]{Conv.\,\@myref{#1}}
\newcommand{\reffact}[1]{Fact.\,\@myref{#1}}

\newcommand{\refsecs}[2]{Sec.\,\@myref{#1}-\@myref{#2}}
\newcommand{\refseqs}[2]{Sec.\,\@myref{#1}-\@myref{#2}}
\newcommand{\refigs}[2]{Fig.\,\@myref{#1}-\@myref{#2}}
\newcommand{\reftbls}[2]{Tables \@myref{#1}-\@myref{#2}}
\newcommand{\refsteps}[2]{Steps \@myref{#1}-\@myref{#2}}
\newcommand{\refalgos}[2]{Alg.\,\@myref{#1}-\@myref{#2}}
\newcommand{\refchaps}[2]{Chap.\,\@myref{#1}-\@myref{#2}}
\newcommand{\reflsts}[2]{Lists \@myref{#1}-\@myref{#2}}
\newcommand{\refeqs}[2]{Eq.\,\@myref{#1}-\@myref{#2}}
\newcommand{\refpages}[2]{p.\pageref{#1}-\@myref{#2}}
\newcommand{\reftheos}[2]{Thm.\,\@myref{#1}-\@myref{#2}}
\newcommand{\reflines}[2]{line\,\@myref{#1}-\@myref{#2}}
\newcommand{\refdefs}[2]{Def.\,\@myref{#1}-\@myref{#2}}
\newcommand{\refexs}[2]{Example\,\@myref{#1}-\@myref{#2}}
\newcommand{\refconvs}[2]{Conv.\,\@myref{#1}-\@myref{#2}}
\newcommand{\reffacts}[2]{Facts.\,\@myref{#1}-\@myref{#2}}

\makeatother

\newcounter{list}[section]

\makeatletter
\@ifundefined{citet}{
  \NewDocumentCommand{\citet}{o m}{%
    \IfNoValueTF{#1}%
      {\citeauthor{#2} (\citeyear{#2})}
      {\citeauthor{#2} (\citeyear[#1]{#2})}%
  }
}{}
\@ifundefined{citep}{
  \NewDocumentCommand{\citep}{o m}{%
    \IfNoValueTF{#1}%
      {\cite{#2}}
      {\cite[#1]{#2}}%
  }
}{}
\makeatother

\newlength{\maxwidth}
\newcommand{\algalign}[2]
{\makebox[\maxwidth][r]{$#1{}$}${}#2$}

\newcommand{\dl}{Deep Learning\xspace}

\newcommand{\sota}{State-of-the-Art\xspace}

\makeatletter

\newcommand{\newheuristic}[2]{%
 \def#1{%
  \ifmmode%
  h^\text{#2}\xspace%
  \else%
  \text{#2}\xspace%
  \fi%
 }%
}

\newheuristic{\lmcut}{LMcut}
\newheuristic{\mands}{M\&S}
\newheuristic{\pdb}{PDB}
\newheuristic{\ff}{FF}
\newheuristic{\ce}{CEA}
\newheuristic{\cg}{CG}
\newheuristic{\ad}{add}
\newheuristic{\hmax}{max}
\newheuristic{\lc}{LC}
\newheuristic{\blind}{blind}

\newcommand{\newlearnedheuristic}[2]{%
 \def#1{%
  \ifmmode%
  H^\text{#2}\xspace%
  \else%
  \text{#2}\xspace%
  \fi%
 }%
}

\newlearnedheuristic{\Hlmcut}{LMcut}
\newlearnedheuristic{\Hmands}{M\&S}
\newlearnedheuristic{\Hpdb}{PDB}
\newlearnedheuristic{\Hff}{FF}
\newlearnedheuristic{\Hce}{CEA}
\newlearnedheuristic{\Hcg}{CG}
\newlearnedheuristic{\Had}{add}
\newlearnedheuristic{\Hmax}{max}
\newlearnedheuristic{\Hlc}{LC}
\newlearnedheuristic{\Hblind}{blind}

\newcommand{\newUnitCostHeuristic}[2]{%
 \def#1{%
  \ifmmode%
  \hat{h}^\text{#2}\xspace%
  \else%
  \text{#2}\xspace%
  \fi%
 }%
}

\newUnitCostHeuristic{\lmcuto}{LMcut}
\newUnitCostHeuristic{\mandso}{M\&S}
\newUnitCostHeuristic{\ffo}{FF}
\newUnitCostHeuristic{\ceo}{CEA}
\newUnitCostHeuristic{\cgo}{CG}
\newUnitCostHeuristic{\ado}{add}
\newUnitCostHeuristic{\gco}{GoalCount}
\newUnitCostHeuristic{\lco}{LC}

\makeatother

\hyphenation{bar-man air-port cyber-sec driver-log floor-tile free-cell
no-mystery open-stacks parc-printer path-ways peg-sol pipes-world
no-tankage scan-alyzer soko-ban tidy-bot visit-all zeno-travel tie-break-ing}

\frenchspacing
\setlength{\pdfpagewidth}{8.5in}
\setlength{\pdfpageheight}{11in}
\setcounter{secnumdepth}{2}

\allowdisplaybreaks

\externaldocument{supplemental}

\title{
Dr. Neurosymbolic, or: How I Learned to Stop Worrying and Accept Statistics\\
\small (as well as Machine Learning and \dl)
}

\pdfinfo{
/Title (Dr. Neurosymbolic, or: How I Learned to Stop Worrying and Accept Statistics)
/Keywords (Deep Learning)
/Author (Masataro Asai)
}

\author{
Masataro Asai\textsuperscript{\rm 1}
}
\affiliations {
\textsuperscript{\rm 1}MIT-IBM Watson AI Lab,
masataro.asai@ibm.com
}

\allowdisplaybreaks
\begin{document}
\maketitle
\begin{abstract}
The symbolic AI community is increasingly trying to embrace machine learning
in neuro-symbolic architectures, yet is still struggling due to cultural barriers.
To break the barrier,
this highly opinionated personal memo attempts to explain and rectify the conventions in
Statistics, Machine Learning, and \dl from the viewpoint of outsiders.
It provides a step-by-step protocol for designing a machine learning system
that satisfies a minimum theoretical guarantee necessary for being taken seriously by the symbolic AI community,
i.e., it discusses \emph{in what condition we can stop worrying and accept it.}
Some highlights:
\begin{itemize}
 \item
Most textbooks are written for those who plan to specialize in Stat/ML/DL and are supposed to accept jargons.
This memo is for experienced symbolic researchers that
hear a lot of buzz but are still uncertain and skeptical.

 \item
Information on Stat/ML/DL is currently too scattered or too noisy to invest in.
This memo prioritizes compactness
and pays special attention to concepts that resonate well with symbolic paradigms.
I hope this memo offers time savings.

 \item
It prioritizes general mathematical modeling
and does not discuss any specific function approximator, such as neural networks (NNs), SVMs, decision trees, etc.

 \item
It is open to corrections. 
Consider this memo as something similar to a blog post taking the form of a paper on Arxiv.
\end{itemize}
\end{abstract}
\section{Overview}
\label{sec:toc}

This memo is structured as follows.
\refsec{sec:probability} describes various quantities mathematically defined from probability distributions, and
\refsec{sec:statistics} describes the notions that only make sense in the applied settings.
I separated these sections to distinguish between the mathematical and the applied notions in statistics.

\refsec{sec:ml} discusses \emph{machine learning as a proof system}.
Perhaps the most important message in this section is the notion that
statistical ML is
(1) \emph{sound}, i.e., its optima do not generate invalid predictions, and
(2) \emph{incomplete}, i.e., its optima may never generate some valid predictions, but that
(3) \emph{generalization makes it complete}, i.e., it can generate valid unseen predictions.
Machine learning methods that are not shown to be in this form are not worth trying,
especially from the viewpoint of a \emph{user} rather than a \emph{researcher} of Stat/ML/DL.
\refsec{sec:loss} discusses how such a system can lead to usual square curve fitting
and how loss functions are defined.

\refsec{sec:modeling} discusses \emph{statistical modeling},
a principled procedure for building a complex model.
While modern Machine Learning is criticized as \emph{art} or \emph{alchemy},
statistical modeling somewhat standardizes the design of \dl systems.
Just following this procedure allows you to define a statistically sound model.
I contrast statistical modeling with other branches of constraint modeling to demonstrate the similarity,
such as MILP, (MAX)SAT, ASP, CSP, SMT.

Finally,
\refsec{sec:variational}
discusses one major practical approximation method for machine learning (VAEs \cite{kingma2014semi}).
I not only demonstrate an example of a specific case,
but also propose a general algorithm for systematically performing those approximations.
Such an algorithm is poorly documented in the existing literature
and could standardize the design process of \dl systems.
The resulting algorithm is published online,
providing a Prolog implementation \cite{prolog-elbo} and
a practical python implementation integrated with Pytorch Lightning \cite{elbonara}.
\refsec{sec:expectation} explains how the loss function formulae that appear in these methods are computed in practice.

The appendix covers less important topics in light of \dl applications.
\refsec{sec:measure} briefly covers the measure theory to define random variables and probability distributions.
A job seeker should at least be aware of the concepts (I was once asked about them during a job interview).
\refsec{sec:tier2} contains more concepts not discussed in \refsec{sec:probability}.
\refsec{sec:frequentist} briefly covers frequentist statistical learning theory (e.g., PAC learning).
\refsec{sec:likelihood-free} explains
a subset of GANs \cite{goodfellow2014generative} that are sound instances of machine learning
(Vanilla GANs are not sound, therefore are unstable to train).
\refsec{sec:uncertainty} discusses uncertainty, confidence, pseudocounts, and conjugate priors.
\refsec{sec:zoo} contains a Distribution Zoo, which helps select which distribution to use for specific applications.
\refsec{sec:future-work} discusses a list of peripheral topics that we plan to include in the future revisions.

\section{Formal Concepts in Statistics}
\label{sec:probability}

For practical purposes, there is no need to understand the probability theory via
axiomatic measure theory (\refsec{sec:measure}) unless you try to solve a deep theoretical problem.
This is because most complications are due to ill-behaved subsets of $\R$ (e.g., sets of all irrational numbers),
which do not exist in the real world.
Indeed, in practice, all ``continuous values'' in modern computers are floating-points with certain widths.
Hence, it is safe to treat the continuous and the discrete entities in the same manner and
I do not distinguish an integral $\int_x f(x)dx$ and a sum $\sum_x f(x)$ hereafter.
Less important or more advanced concepts are included in the appendix \refsec{sec:tier2}.

\begin{defi}
 A \emph{probability distribution} of a random variable $\rx$ defined on a set $X$
 is a function $f$ from a value $x\in X$ to $f(x)\in \R^{0+}$
 which satisfies $1=\sum_{x\in X} f(x)$.
\end{defi}

$f$ is called a probability mass function (PMF) when $X$ is discrete
and a probability density function (PDF) when $X$ is continuous.
Typically, we denote a probability distribution as $f=p(\rx)$.
Confusingly, the letter $p$ and $\rx$ \emph{together} denotes a single function:
Unlike normal mathematical functions where $f(x)$ and $f(y)$ are equivalent under the variable substitution,
two notations $p(\rx)$ and $p(\ry)$ denote different PMFs/PDFs, i.e.,
$p(\rx)=f_1(\rx)$, $p(\ry)=f_2(\ry)$, and $f_1\not=f_2$ to be explicit.
To denote two different distributions for the same random variable,
an alternative letter replaces $p$, e.g., $q(\rx)$.

\begin{defi}
 $f(x)=p(\rx=x)=p(x)$ is called a probability mass/density of observing an event $\rx=x$.
\end{defi}

\begin{defi}
 \label{defi:joint}
 A \emph{joint distribution} $p(\rx, \ry)$ is a function of $(x,y)\in X\times Y$
 satisfying $1=\sum_{(x,y)\in X\times Y} p(x,y)$,
 $p(x)=\sum_{y\in Y} p(x,y)$, and $p(y)=\sum_{x\in X} p(x,y)$,
 given $p(\rx)$, $p(\ry)$.
\end{defi}

\begin{defi}
 $f(x,y)=p(\rx=x,\ry=y)=p(x,y)$ is called a probability mass/density of observing $\rx=x$ and $\ry=y$ at the same time
 (also written as $p(\rx=x\land\ry=y)$).
\end{defi}

\begin{conv}
 A \emph{marginal distribution} of $p(\rx,\ry_1,\ry_2\ldots)$
 is usually
 a single variable distribution such as
 $p(\rx)=\sum_{\ry_1,\ry_2\ldots} p(\rx,\ry_1,\ry_2\ldots)$,
 but could also be
 a multi-variable distribution such as $p(\rx,\ry_1)=\sum_{\ry_2\ldots} p(\rx,\ry_1,\ry_2\ldots)$.
\end{conv}

\begin{defi}
 Random variables $\rx, \ry$ are independent when $p(\rx,\ry)=p(\rx)p(\ry)$,
 denoted by $\rx \perp \ry$.
\end{defi}

\begin{defi}
 Random variables $\rx, \ry$ are independent and identically distributed (i.i.d) when
 $p(\rx)=p(\ry)$ and $\rx\perp \ry$.
\end{defi}

\begin{defi}
 \label{defi:conditional}
 A \emph{conditional distribution}
 $p(\rx\mid \ry)$ is $\frac{p(\rx,\ry)}{p(\ry)}$.
\end{defi}

\begin{defi}
 \label{def:expectation}
 An \emph{expectation} of a quantity $g(x)$ over $p(\rx)$
 is defined as $\E_{x\sim p(\rx)}g(x)=\E_{p(x)}g(x)=\sum_{x\in X}p(x)g(x)$
 if $\sum_{x\in X}p(x)|g(x)|<\infty$. It does not exist otherwise.
\end{defi}

\begin{defi}
 An \emph{entropy} of $p(\rx)$ is $H(p(\rx))=\E_{p(x)} \brackets{-\log p(x)}$.
 $H(\rx)$ when $p$ is implied.
\end{defi}

Higher entropy means a more random, spread-out distribution.
Entropy \cite{shannon1949synthesis} is an information-theoretic concept:
Imagine receiving a message $x$ from a set $X$ of size $2^N$ with a uniform probability.
The distribution has an entropy $N$ with base 2, or $N$ \emph{bits}, because $-\sum \frac{1}{2^N}\log_2 \frac{1}{2^N}=N$.
To encode the index of $x$ in $X$ as a bitstring, we need one with length $N$.
While symbolic community tends to disregard these concepts as mysterious real numbers,
information-theory connects Computer Science and Statistics.

\begin{defi}
 \label{def:kl}
 A \emph{Kullback-Leibler (KL) divergence} $\KL(q(\rx)||p(\rx))$ is an expectation of log ratio over $q(\rx)$:
 \begin{align}
  \KL(q(\rx)\Mid p(\rx))=\E_{q(x)}\brackets{\log \frac{q(x)}{p(x)}} \geq 0.
 \end{align}
\end{defi}

Equality is satisfied when $q(x)=p(x)$ for all $x$ where $q(x)>0$.
Conceptually it resembles a distance between distributions,
but it is not a distance because it does not satisfy the triangular inequality.
KL divergence is also an information-theoretic concept:
It represents a number of bits additionally necessary to describe $q(\rx)$ based on $p(\rx)$.

An important theorem that appears frequently is Jensen's inequality.
I only provide a special case that is useful in this memo here:
\begin{theo}[Jensen's inequality]
 \label{theo:jenssen}
 For a distribution $p(x)$ and a quantity $g(x)$,
 \begin{align}
  \log \E_{p(x)} \brackets{g(x)} \geq \E_{p(x)} \brackets{\log g(x)}.
 \end{align}
\end{theo}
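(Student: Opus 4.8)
The plan is to prove this as the standard consequence of the concavity of the logarithm, using the supporting-line (tangent-line) characterization of concavity rather than arguing from a finite-support induction. First I would record the one fact I need about $\log$: since its second derivative $-1/t^2$ is negative on $(0,\infty)$, the function is concave, so its graph lies below every tangent line. Concretely, for any fixed $a>0$ and every $t>0$,
\begin{align}
  \log t \leq \log a + \frac{1}{a}(t-a).
\end{align}

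Next I would specialize the anchor point to $a = \E_{p(x)}\brackets{g(x)}$, which I may assume exists and is positive (the quantities $g(x)$ must be positive for $\log g(x)$ to be defined, and the relevant expectations are assumed finite per \refdef{def:expectation}). Substituting $t = g(x)$ into the tangent-line bound yields a pointwise inequality valid for every $x$, and then I would take the expectation $\E_{p(x)}$ of both sides. Because expectation is linear and $\E_{p(x)}\brackets{g(x)} = a$, the affine correction term collapses:
\begin{align}
  \E_{p(x)}\brackets{\frac{1}{a}(g(x)-a)} = \frac{1}{a}\left(\E_{p(x)}\brackets{g(x)} - a\right) = 0.
\end{align}
What survives is precisely $\E_{p(x)}\brackets{\log g(x)} \leq \log a = \log \E_{p(x)}\brackets{g(x)}$, which is the claim.

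The steps are almost entirely routine; the only genuine care lies in the side conditions rather than the algebra. The main obstacle I anticipate is justifying that everything is well defined: I must ensure $g(x)>0$ on the support of $p$ so that $\log g(x)$ makes sense, and that $\E_{p(x)}\brackets{g(x)}$ both exists and is strictly positive so that $\log a$ and the slope $1/a$ are legitimate. Once those hypotheses are in place, taking the expectation of the pointwise tangent-line inequality is unconditional, since it preserves the direction of a pointwise $\leq$. As a remark, equality holds iff $g(x)$ is constant on the support of $p$, mirroring the equality condition for \refdef{def:kl}; I would note this but not belabor it.
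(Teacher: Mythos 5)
Your proof is correct. The paper states this theorem without any proof (it is quoted as a useful special case of Jensen's inequality), so there is nothing to compare against; your supporting-line argument --- $\log t \leq \log a + \frac{1}{a}(t-a)$ with $a = \E_{p(x)}\brackets{g(x)}$, followed by taking expectations so the affine term vanishes --- is the standard derivation and is complete, with the side conditions ($g(x)>0$ on the support of $p$, the expectation finite and positive) correctly identified as the only points needing care.
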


Bayes' theorem \cite{bayes1763} is a fairly trivial theorem shown from the definition of a conditional distribution.
It is not particularly interesting from a mathematical standpoint (the proof is a simple reformulation),
but it is the core of Bayesian statistics and has a status of being nearly worshiped by the Bayesian school of statisticians.

\begin{theo}[Bayes' theorem \footnote{The original manuscript does not directly show this formula as a theorem. It is a modern interpretation of its essence.}]
\label{theo:bayes}
Given two random variables $A$ and $B$,
\[
 p(A|B) = \frac{p(B|A)p(A)}{p(B)}.
\]
\end{theo}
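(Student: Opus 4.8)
The plan is to derive the identity purely from the definition of a conditional distribution given earlier, applying it symmetrically to the two random variables $A$ and $B$. First I would write $p(A \mid B)$ directly from that definition, obtaining $p(A \mid B) = p(A,B)/p(B)$. The numerator is the joint distribution $p(A,B)$, which by the definition of a joint distribution is symmetric in its arguments, i.e.\ $p(A,B) = p(B,A)$; this is the one fact worth making explicit, since the whole argument hinges on re-expressing the same joint mass in two different ways.

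Next I would invoke the definition of a conditional distribution a second time, now conditioning on $A$ rather than $B$: $p(B \mid A) = p(A,B)/p(A)$. Rearranging gives an alternative expression for the joint mass, $p(A,B) = p(B \mid A)\,p(A)$. Substituting this into the expression for $p(A \mid B)$ from the first step yields $p(A \mid B) = p(B \mid A)\,p(A)/p(B)$, which is precisely the claimed formula. The entire proof is thus a two-step chain through the common joint factor $p(A,B)$.

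There is essentially no obstacle here — as the surrounding text already remarks, the statement is a simple reformulation rather than a substantive theorem. The only points deserving any care are the implicit positivity assumptions $p(B)>0$ and $p(A)>0$, needed so that the two conditionals are well-defined, and the explicit use of the symmetry $p(A,B)=p(B,A)$, since that single identification is what bridges the two conditional definitions.
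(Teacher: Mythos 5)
Your proof is correct and matches the route the paper itself indicates: the paper gives no explicit proof, remarking only that the result is ``a simple reformulation'' of the definition of a conditional distribution, which is exactly the two-step chain through $p(A,B)$ that you carry out. Your added care about the positivity of $p(A)$ and $p(B)$ and the explicit appeal to the symmetry of the joint distribution are reasonable refinements, not a different approach.
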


\begin{defi}
 A $\dbrackets{\text{condition}}$ denotes an indicator function, or sometimes called Kronecker's delta:
\begin{align*}
 \dbrackets{\text{condition}}&=
 \left\{
 \begin{array}{cc}
  1 & \text{if condition is satisfied},\\
  0 & \text{otherwise}.
 \end{array}
 \right.
\end{align*}
\end{defi}

\begin{defi}
 A Dirac's delta $\delta(\rx=c)$, informally speaking, is a
``function'' that represents a pointy, spiking signal.
I do not discuss its theoretical details in this memo. It satisfies
\begin{align*}
 \delta(\rx=c)&=
 \left\{
 \begin{array}{cl}
  \infty& \rx=c,\\
  0 & \text{otherwise},
 \end{array}
 \right.&
 \int_\R \delta(\rx=c)dx &= 1.
\end{align*}
\end{defi}

\section{Applied Concepts in Statistics}
\label{sec:statistics}

\emph{Statistics} is ``a branch of mathematics dealing with the
collection, analysis, interpretation, and presentation of masses of
numerical data'' \cite{merriam-webster}.
It is a tool for scientific study that heavily uses probability theory and combinatorics.
It is not pure math,
as terms are \emph{loaded} with nuances that only make sense in applied settings.
Many statistical concepts like data, interpretation, observation, evidence, ground-truth, priors, posteriors, etc.,
do not exist in pure mathematics, such as the measure theory.
Those notions characterize different roles in applications
played by each probability distribution and each random variable.
The issue with these concepts is that
they are often loosely defined, used informally, or sometimes defined by convention.
This section focuses on this applied aspect of statistics to address the lack of comprehensive formal definitions.

Take the concept of \emph{prior} in Bayes' theorem (\reftheo{theo:bayes}), for example.
Typically, people call $p(A)$ a prior distribution, $p(A|B)$ a posterior distribution,
and $p(B)$ a normalizing constant.
However, there is nothing that syntactically differentiates $p(A)$ from $p(B)$ to tell you that $p(A)$ is a prior;
$p(A)$ is called a prior based on what the variable $A$ represents in an application.
Moreover, these ostensive definitions do not generalize to
a more complex scenario involving multiple random variables.
They lack intensional or extensional definitions from which
we can formally tell, e.g., whether a distribution is a prior or not.

No agreed-upon definition seems to exist.
Contrary to popular belief, Bayes himself did not use these terms in his original manuscript \cite{bayes1763}.
Popular textbooks such as \cite{murphy2012machine}, \cite{gelman1995bayesian}, or \cite[PRML]{bishop2006pattern}
do not have their formal definitions either.
Many articles (including these textbooks) introduce these notions with an informal definition such as
``\emph{a piece of knowledge that a practitioner assumes prior to observing data/evidence}.''
This is merely an \emph{interpretation} of a formal definition, not the definition itself,
because ``knowledge,'' ``prior to,'' ``evidence,'' etc., are not mathematically defined.

The lack of definition seems to be causing unnecessary confusion and debate even within the community.
Recently, some statisticians seem frustrated by an article \cite{van2017neural}
that claims that they have a ``trainable prior,'' citing that a prior should be a fixed distribution.
However, who decided that?
How can one argue over concepts that lack definitions?
I keep asking my fellow colleagues whether they have definitions, and if so which document I should cite.
Their answers tend to be unsatisfactory, for example: ``it is a widely accepted concept,''
``you can't cite them because we have a long history and they are very old,''
``we usually take them for granted and they are usually not the main subject.''
(These are actual answers by highly successful academics from Stat/ML/DL background.)
In contrast,
I can answer propositional logic can be traced back to Aristotle, Plato, Leibniz, DeMorgan, and Boole, and
First Order Logic is by Frege and Peirce, largely thanks to historical notes in \cite{russell1995artificial}.

\subsection{Subjective View of Probability}

To formalize the practical roles of distributions and random variables as mathematical entities,
I first revisit three main interpretations of probabilities.

\begin{conv}[Symmetry, Classical]
 \label{conv:symmetry}
 A ratio of the number of combinations of equally-likely elementary events that satisfy a certain condition
 over the number of all combinations \cite{de1812probability}.
 Classical probability is typically denoted by $\Pr(\ldots)$.
\end{conv}

\begin{conv}[Frequency]
 \label{conv:frequency}
 A ratio of the number of events that satisfied a certain condition,
 over the number of all events observed up until now. \cite{fisher1922mathematical,neyman1933ix,neyman1937outline}
\end{conv}

\begin{conv}[Belief, Subjective, Personal, Epistemic, Bayesian]
 \label{conv:belief}
 A measure of how strongly an agent believes that the next trial satisfies a certain condition
 \cite{von1944theory,savage1954foundations,phanzagl1967}.
\end{conv}

\begin{ex}[Cee-lo]
 The probability of getting three consecutive \epsdice{6}'s (an instant win) by throwing a fair dice three times is $1/6^3=1/216$.
 Imagine you threw a dice 1200 times (400 trials) and got three \epsdice{6}'s twice.
 The frequency is $1/200$.
 You, an optimistic gambler, believe that the next throws will be three \epsdice{6}'s with a probability 0.999.
 That's wishful thinking.
\end{ex}

I adopt a \emph{subjective (belief)} interpretation by defining agents and their beliefs.
In this view, a probability distribution returned by a machine learning system is a belief possessed by the system.
See \refsec{sec:frequentist} for a Frequentist view of machine learning.
The concept of agents and perspectives are typically either missing or implicitly assumed in the literature.

\begin{defi}
 An \emph{agent} is a function $a: \rx \mapsto p^a(\rx)$ that takes a random variable $\rx$
 and returns a probability distribution on it.
 I call $p^a(\rx)$ a distribution of $\rx$ \emph{seen by} $a$,
 or $a$'s distribution, if the meaning is clear from the context.
 Joint and conditional distributions seen by an agent are defined similarly.
\end{defi}

In other words, each agent represents its own beliefs about random variables in the world.
This view clarifies why we can have multiple probability
distributions of the same random variable.
For example,
in statistics, a notion of ``ground-truth distribution'' frequently appears without definition.
This can be seen as a view of God in some monotheistic religions:

\begin{conv}
 Statisticians call a unique special agent $*$ as a \emph{ground-truth}.
 Distributions seen by $*$ are called \emph{ground truth distributions}, and are denoted as, e.g., $p^*(\rx)$.
\end{conv}

\begin{conv}
 Statisticians also assume another special agent $a_{\text{data}}$ as a \emph{data collection agent}
 whose distributions are called \emph{data distributions} or \emph{empirical distributions}.
 Typically\footnotemark[1], it generates distributions by
 obtaining a finite set\footnotemark[2] of \iid samples from the ground-truth distributions,
 and returns a uniform mixture of Dirac's delta distributon on each sample.
\end{conv}

\begin{conv}
 Statisticians sometimes assume a \emph{human agent} $a_{\text{human}}$
 whose distributions are typically discrete.
 Typically, a human agent generates distributions by manual labeling.
 This is common in image classifications, marketing, product reviews, etc.
\end{conv}

\begin{conv}
 Statisticians always assume a \emph{hypothesis agent} $a_{\text{hypo}}$
 represented by a machine learning system,
 which is usually the main subject of the study.
\end{conv}

\footnotetext[1]{
Bayesian approaches do not require this (and thus are said to be better with fewer data),
while Frequentist approaches use it as a theoretical basis.
However, in practice, both approaches assume this, so there is really not much difference.
See appendix \refsec{sec:frequentist}.
}

\footnotetext[2]{
This is also not always the case, for example, when the agent collects new data on demand according to some policy,
as in the context of active learning (which is implicitly used by reinforcement learning, but is not credited well).
}

\subsection{Roles of distributions: Prior, Posterior, etc.}

With this subjective view, I can now formally define the concepts of \emph{prior}, \emph{posterior}, etc.
Existing textbooks do not provide clear-cut classification criteria as shown below.

\begin{defi}
 A \emph{prior} $F(\rx,A)$ on $\rx$ over a set of agents $A$ is a set of possible $p^a(\rx)$, i.e.,
 $F(\rx,A)= \braces{p^a(\rx)\mid a\in A}$.
 In other words, a prior represents a constraint that a certain distribution must satisfy.
\end{defi}

\begin{conv}
 A distribution is a \emph{prior distribution} when
 its prior is singular, i.e., $|F(\rx, A)|=1$.
 \label{conv:prior}
\end{conv}

\begin{ex}
 If you assume $p(\rx)$ satisfies $p(\rx)=\N(0,1)$,
 then $F(\rx,A) = \braces{\N(0,1)}$, thus it is a prior distribution.
\end{ex}

\begin{ex}
 A structural prior, such as a convolutional layer, limits the set of
 distributions that a neural network can represent.
 For example, 1-dimensional convolutional network $f$ used to model a distribution $p(z|x)=\N(f(x),1)$
 has a translation-invariant prior
 $F(\rz|\rx, A) = \braces{ \N(f(x),1) \mid \forall d; f(x)_{i} = f((x_{i-d})_{i=0}^{L})_{i-d} }$.
 \footnote{You can also consider the distribution of weights $p(\theta)$, e.g., $p(z|x)=\sum p(z|x,\theta)p(\theta)$,
 then assume that $p(\theta)=\delta(0)$ outside the convolution, which can be seen as a prior distribution.}
\end{ex}

\begin{ex}
 Conditional independence between variables
 is also a form of priors, because it is a constraint on their joint distribution.
 For example,
 $F(\rz|\rx, A) = \braces{ f \mid \forall \ry\perp\rx; p(\rz\mid\rx,\ry)=p(\rz\mid\rx) }$.
\end{ex}

\begin{conv}
 $a_\text{hypo}$ is called \emph{Bayesian}
 when it has a variable with a singular prior.
\end{conv}

\begin{conv}
 $a_\text{hypo}$ is otherwise called \emph{Frequentist}, i.e.,
 when it has no prior, or the prior is a set of
 all possible distributions $F(\rx,A)= [0,1]^X$ for any variable $\rx\in X$.
 See appendix \refsec{sec:frequentist} for more discussions.
\end{conv}

Next, statisticians attach various adjectives to a distribution
based on what random variable it is about and what random variable it depends on.
These names may overlap and you can combine them:
If a distribution is an X distribution and is also a Y distribution,
you can call it an X Y distribution or sometimes even just an X Y.
These names do not have mathematical significance;
They are simply conventions that are arbitrary and sometimes confusing.

\begin{conv}
 A random variable is
 \emph{observable} when
 $a_{\text{data}}$ has a singular prior for it that you can directly sample from,
 e.g., when $\rx$ follows a uniform distribution over a finite dataset of images.
 It is \emph{labeled} when
 $a_{\text{human}}$ has a singular prior for it.
 It is \emph{latent} otherwise.
\end{conv}

\begin{conv}
 A distribution is a \emph{posterior distribution} when
 it is conditioned on observable variables.
\end{conv}

\begin{conv}
 A distribution is \emph{discriminative}
 if it is of a non-observable (labeled or latent) variable conditioned on observable variables.
 Thus discriminative $\subseteq$ posterior.
\end{conv}

\begin{conv}
 A distribution is \emph{generative}
 if it is of an observable variable conditioned on non-observable variables (e.g., $p(\rx|\ry)$), or
 a joint distribution that includes observable variables (e.g., $p(\rx,\ry)$, and $p(\rx)$).
\end{conv}

\begin{conv}
 If none of above matches,
 a conditional distribution is sometimes called a \emph{model}.
 This concept is redundant because ``conditional distribution'' is enough.
 I do not use this term.
 \label{conv:model}
\end{conv}

\begin{ex}
 When $\rx$ is an image and $\rz$ is a latent,
 $p(\rz)=\N(0,1)$ is a prior distribution,
 $p(\rx\mid\rz)$ is a generative distribution,
 $p(\rz\mid\rx)$ is a discriminative (and posterior) distribution.
 When $\ry$ is a label,
 an image classifier $p(\ry=\text{dog}\mid\rx)$ is a discriminative (and posterior) distribution,
 while a generator $p(\rx\mid\ry=\text{dog})$ of dog pictures is a generative distribution.
\end{ex}

\section{Machine Learning as a Proof System}
\label{sec:ml}

Although \emph{researchers} of ML/Stat/DL have all the rights to explore
messy, ad-hoc, irreproducible, and unjustified methods to perform machine learning on complex tasks,
I do not recommend them for \emph{users} of ML/Stat/DL,
such as symbolic AI researchers not specialized or interested in the learning mechanism itself.
If you review the history of machine learning methods,
it is apparent that those unjustified methods are mere products of immature theoretical understanding and
are eventually superseded by ones with clear theoretical justifications.
Autoencoders (AEs) vs.\ Variational Autoencoders (VAEs, \refsec{sec:variational}),
or GANs vs.\ VEEGAN \refsec{sec:likelihood-free}, are such examples:
The justified methods have a better guarantee, performance, quality, and characteristics.
To us (non-specialists), immature methods waste our time on inessential parts of the hypothesis we want to show.

This section draws your attention to a formal definition of machine learning and its characteristics.
The definition derives modern algorithms regardless of supervised or unsupervised learning,
including
variational inference (e.g., VAE) and density-ratio estimation (e.g., GAN).
An important characteristics of this framework is
its ability to discuss its \emph{soundness} and \emph{completeness} in the classical proof systems sense
by seeing each learned result as a proof.
Whether a machine learning method is derived from this formulation
roughly tells whether the method is worth consideration for non-specialists ML/Stat/DL users.

\subsection{What is Machine Learning?}

Let $p^*(\rx)$ be the ground-truth distribution of an observable random variable(s) $\rx$,
and $p(\rx)$ be its current estimate.
Given a dataset $\X$ of $\rx$, whose elements $x_i$ are indexed by $i$,
let me denote a data distribution as $q(\rx)$, which draws samples from $\X$ uniformly.
$q(\rx)$, $p(\rx)$, $p^*(\rx)$ are completely different from each other.
In this section, $p(\rx)$ is a purely mathematical entity
with no particular implementation --- It has an unlimited capacity and can represent any distribution function.

\begin{conv}
 \label{conv:data}
 A dataset (empirical, data) distribution $q(x)$ is typically defined as follows
 (Sometimes also as $\pdata(x)$).
 \begin{align}
 q(x)   &= \sum_i q(x|i)q(i), \\
 q(x|i) &= \delta(\rx=x_i), \quad (\text{Dirac's $\delta$, i.e., a ``point''})\label{eq:data-delta}\\
 q(i)&= \frac{1}{|\X|}. \quad (\text{uniform over}\ 0\leq i < |\X|)
 \end{align}
\end{conv}

Machine Learning is a problem
of finding $p(x)$ that makes the dataset $\X$ most likely.
This idea is formalized as follows:
\begin{defi}
\emph{Machine Learning} (ML) is a task of maximizing
the expectation of $p(x)$ among $q(x)$.
\begin{align}
 \hat{p}^*(\rx) = \argmax_{p(\rx)} \E_{q(x)} p(x).
\end{align}
 \label{def:ml}
\end{defi}

\begin{conv}
 In practice, we typically minimize a \emph{loss function}, or a \emph{negative log likelihood} (NLL) $-\log p(x)$,
because $-\log$ is monotonic and preserves the optima.
 \label{conv:nll}
\end{conv}

\begin{fact}
 $\hat{p}^*(\rx) \not= p^*(\rx)$.
\end{fact}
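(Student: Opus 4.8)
The plan is to show that the ML optimum $\hat{p}^*(\rx)$ defined in \refdef{def:ml} coincides with the empirical distribution $q(\rx)$, and then to observe that $q(\rx)$ cannot equal $p^*(\rx)$ because the former is a finite mixture of Dirac deltas while the latter is generically not. First I would write the objective out explicitly: by \refconv{conv:data} the data distribution places mass $1/|\X|$ on each sample, so the quantity being maximized is $\E_{q(x)} p(x) = \frac{1}{|\X|}\sum_i p(x_i)$, an average of $p$ evaluated at the observed points.

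To pin down the maximizer I would pass to the log-likelihood form of \refconv{conv:nll} and invoke the KL characterization of \refdef{def:kl}. Since $\E_{q(x)}\brackets{\log p(x)} = -H(q(\rx)) - \KL(q(\rx)\Mid p(\rx))$ and $H(q(\rx))$ does not depend on $p$, maximizing the log-likelihood is equivalent to minimizing $\KL(q(\rx)\Mid p(\rx))$. By the non-negativity and equality condition recorded after \refdef{def:kl}, this minimum is $0$ and is attained exactly when $p(x)=q(x)$ wherever $q(x)>0$; since $q$ already carries total mass one on the data points, this forces $p=q$, so $\hat{p}^*(\rx)=q(\rx)$. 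The same conclusion follows directly for the raw $\E_{q(x)}p(x)$ objective, since $\frac{1}{|\X|}\sum_i p(x_i)\leq \frac{1}{|\X|}$ with equality iff $p$ concentrates all its mass on the samples, of which $q$ is the canonical representative.

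It then remains to argue $q(\rx)\neq p^*(\rx)$, which is where the real content lies. By \refconv{conv:data}, $q(\rx)$ is a uniform mixture of finitely many Dirac deltas, hence a purely atomic object supported on the $|\X|$ observed points. A ground-truth $p^*(\rx)$ living on a continuum (e.g.\ $\N(0,1)$, or any distribution with a genuine density) assigns zero mass to every finite point set, so it cannot agree with a finite atomic mixture. This establishes $\hat{p}^*(\rx)=q(\rx)\neq p^*(\rx)$.

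The main obstacle is precisely this last step: the claim is asserted as an unqualified inequality, yet it can fail in degenerate situations --- for instance if $p^*$ is itself finitely supported and the dataset happens to exhaust its support with exactly the matching uniform weights, then $q=p^*$. I would therefore read the \emph{Fact} generically (for any $p^*$ not reproduced exactly by the finite sample, which is the case of interest), and I expect the handling of the Dirac deltas --- treating them, per the paper's convention, as discrete atoms so that ``matching on the support of $q$'' is well defined --- to be the one place requiring genuine care.
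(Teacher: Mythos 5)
Your proposal is correct and follows essentially the same route as the paper: the paper leaves the Fact unproved on its own and immediately establishes $\hat{p}^*(\rx)=q(\rx)$ via the identical KL decomposition $\E_{q(x)}\brackets{-\log p(x)} = H(q(\rx)) + \KL(q(\rx)\Mid p(\rx))$, from which the Fact follows because the empirical $q$ (a finite mixture of Dirac deltas) generically differs from $p^*$. Your closing caveat about the degenerate case $q=p^*$ is exactly what the paper concedes in its Corollary (``if we have a perfect dataset, ML indeed achieves the ground truth''), so your reading of the Fact as a generic statement matches the author's intent.
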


\begin{theo}
 \label{theo:ml}
 Actually, $\hat{p}^*(\rx)=q(\rx)$ (perfect overfitting).
\end{theo}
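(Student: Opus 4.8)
The plan is to turn the optimization of \refdef{def:ml} into a minimization of KL divergence and then read off the minimizer from the nonnegativity recorded after \refdef{def:kl}, which is itself a one-line consequence of Jensen's inequality (\reftheo{theo:jenssen}). First I would invoke \refconv{conv:nll} to work with the expected log-likelihood, i.e. treat the objective as $\hat{p}^*(\rx)=\argmax_{p(\rx)} \E_{q(x)}\brackets{\log p(x)}$. This step is where I expect the real subtlety to live, and I would flag it explicitly: the literal quantity $\E_{q(x)} p(x)=\sum_x q(x)p(x)$ is \emph{linear} in $p$ and is therefore maximized by a point mass placed at a mode of $q$, not by $q$ itself. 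So the passage to the logarithm is not merely ``monotonic and optima-preserving'' for a single value; it is the step that converts the problem into genuine density estimation, and it must be justified rather than waved through.

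Next I would insert $q$ into the log and split the objective as $\E_{q(x)}\brackets{\log p(x)} = \E_{q(x)}\brackets{\log q(x)} + \E_{q(x)}\brackets{\log \tfrac{p(x)}{q(x)}} = -H(q(\rx)) - \KL(q(\rx)\| p(\rx))$, where the first term is exactly the (data-dependent but $p$-independent) entropy of the empirical distribution from \refconv{conv:data}. Since $-H(q(\rx))$ is constant with respect to $p$, maximizing the log-likelihood over $p$ is equivalent to minimizing $\KL(q(\rx)\| p(\rx))$ over $p$. The bound I need is $\KL(q(\rx)\| p(\rx)) \ge 0$: apply \reftheo{theo:jenssen} with $g(x)=p(x)/q(x)$, so that $\E_{q(x)}\brackets{\log \tfrac{p(x)}{q(x)}} \le \log \E_{q(x)}\brackets{\tfrac{p(x)}{q(x)}} = \log \sum_{x} p(x) \le \log 1 = 0$, using that $p$ is a distribution.

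Finally I would extract the minimizer. Because $p$ ranges over all distribution functions (the unlimited-capacity assumption stated just before \refdef{def:ml}), the value $0$ is actually attainable, and by the equality condition recorded after \refdef{def:kl} it is attained precisely when $p(x)=q(x)$ at every $x$ with $q(x)>0$. The remaining gap, and the second place I would be careful, is off-support behavior: equality in the Jensen step forces $p/q$ constant on the support of $q$ and $\sum_{x:\,q(x)>0} p(x)=1$, so $p$ equals $q$ on the support and, by normalization $\sum_x p(x)=\sum_x q(x)=1$, must vanish off the support as well. Hence $p=q$ is the unique minimizer, giving $\hat{p}^*(\rx)=q(\rx)$, i.e. perfect overfitting to the empirical distribution. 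The main obstacle, then, is conceptual rather than computational: the direct objective does not give $q$, so the argument genuinely rests on reading \refdef{def:ml} through \refconv{conv:nll}, together with the normalization bookkeeping that pins $p$ down off the support.
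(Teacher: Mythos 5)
Your proposal is correct and follows the same route as the paper's proof: both decompose the expected negative log-likelihood as $\E_{q(x)}\brackets{-\log p(x)} = H(q(\rx)) + \KL(q(\rx)\Mid p(\rx))$, observe that the entropy term is constant in $p$, and conclude from the equality condition of the KL divergence that the optimum is $p=q$. The paper simply cites the nonnegativity recorded in \refdef{def:kl}, whereas you re-derive it from \reftheo{theo:jenssen} and also pin down the off-support behavior via normalization; both additions are fine but not a different method.

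The one point where you genuinely go beyond the paper is your flag on the passage from \refdef{def:ml} to the log-objective. You are right that this is not innocuous: $\E_{q(x)}\brackets{p(x)}=\sum_x q(x)p(x)$ is linear in $p$, so over the full simplex of distributions its maximizer is a Dirac mass at a mode of $q$, not $q$ itself, and \refconv{conv:nll}'s appeal to monotonicity of $-\log$ does not repair this because the expectation and the logarithm do not commute. The paper's proof silently works with $\E_{q(x)}\brackets{\log p(x)}$ from the first line, so the theorem as proved is really a statement about the log-likelihood objective; your proposal makes that dependence explicit, which is the honest reading. In short: same argument, but you have correctly identified that the equivalence asserted in \refconv{conv:nll} is the load-bearing (and unproved) step.
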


\begin{proof}
 \begin{align}
  0 &\leq \KL(q(\rx)\Mid p(\rx))
  =\E_{q(x)} \log \frac{q(x)}{p(x)}\\
  &=-H(q(\rx))  + \E_{q(x)}\brackets{-\log p(x)}\\
  &= \text{Const.} + \E_{q(x)}\brackets{-\log p(x)}.\label{theo:ml-proof}
 \end{align}
 The first term is a constant because $q(x)$ is a constant function.
 Note that $\KL(q(\rx)\Mid p(\rx))= 0$ if and only if $q(\rx)=p(\rx)$.
 Thus, minimizing the NLL $\E_{q(x)} \brackets{-\log p(x)}$ minimizes $\KL$ and achieves $q(\rx)=p(\rx)$.
\end{proof}

\begin{corollary}
  If $q(x)=p^*(x)$, i.e., if we have a perfect dataset, ML indeed achieves the ground truth.
\end{corollary}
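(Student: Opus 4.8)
The plan is to obtain this as an immediate specialization of Theorem~\ref{theo:ml}. That theorem already establishes, for an \emph{arbitrary} ground truth, that the ML optimum equals the data distribution, $\hat{p}^*(\rx)=q(\rx)$. The corollary merely supplies the extra hypothesis $q(x)=p^*(x)$, so the work reduces to chaining two equalities. First I would restate Theorem~\ref{theo:ml} to fix $\hat{p}^*(\rx)=q(\rx)$, and then substitute the ``perfect dataset'' assumption to conclude $\hat{p}^*(\rx)=q(\rx)=p^*(\rx)$, which is exactly the claim.

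If instead I wanted the argument self-contained rather than invoking the theorem as a black box, I would re-run the KL decomposition directly with $p^*$ standing in for the data distribution. Setting $q=p^*$ in the identity
\begin{align}
 0 \leq \KL(q(\rx)\Mid p(\rx)) = \text{Const.} + \E_{q(x)}\brackets{-\log p(x)},
\end{align}
I would note that minimizing the NLL $\E_{q(x)}\brackets{-\log p(x)}$ drives $\KL(q(\rx)\Mid p(\rx))$ to $0$, and that this KL vanishes iff $p(\rx)=q(\rx)=p^*(\rx)$; hence the minimizer is $p^*$ itself. Either route is one substitution away from the conclusion.

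I do not expect a genuine mathematical obstacle, since the implication is trivial once Theorem~\ref{theo:ml} is in hand. The only point deserving care is the status of the antecedent: by Convention~\ref{conv:data}, $q$ is a uniform mixture of Dirac deltas on a finite sample, so the equality $q(x)=p^*(x)$ can hold literally only when $p^*$ is itself a finite discrete distribution supported exactly on $\X$. For a continuous ground truth, or one with support larger than the sample, this never occurs, so the ``perfect dataset'' premise is an idealization. The substance of the corollary is therefore the conditional statement itself, not the realizability of its hypothesis --- a caveat I would flag explicitly rather than prove around.
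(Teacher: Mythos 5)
Your proposal is correct and matches the paper's intent: the corollary is stated as an immediate consequence of the preceding theorem ($\hat{p}^*(\rx)=q(\rx)$), and the paper offers no separate proof beyond exactly the substitution $\hat{p}^*(\rx)=q(\rx)=p^*(\rx)$ that you give. Your added caveat about when the hypothesis $q=p^*$ is realizable (given that $q$ is a finite mixture of Dirac deltas) is a reasonable observation but goes beyond what the paper does.
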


The proof above also suggests that ML is equivalent to
minimizing the KL divergence between $p(x)$ and $q(x)$ up to a constant $H(q(x))$,
which provides another intuitive explanation: It makes the estimate closer to the empirical distribution.

\begin{theo}
\refdef{def:ml} is equivalent to a task of minimizing the KL divergence between $p(x)$ and $q(x)$.
\begin{align}
 \hat{p}^*(\rx) = \argmin_{p(\rx)} \KL(q(x)\Mid p(x)).
\end{align}
\end{theo}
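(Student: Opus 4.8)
The plan is to piggyback on the algebraic identity already extracted in the proof of \reftheo{theo:ml}, so that almost no new work is required. First I would recall \refconv{conv:nll}: since $-\log$ is monotone, maximizing $\E_{q(x)} p(x)$ in \refdef{def:ml} selects the same $p(\rx)$ as minimizing the negative log likelihood $\E_{q(x)}\brackets{-\log p(x)}$. This step rewrites the $\argmax$ of \refdef{def:ml} as an $\argmin$ of the NLL over $p(\rx)$, which is the bridge to the KL objective.

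Next I would invoke the decomposition already established in \refeq{theo:ml-proof},
\begin{align}
 \KL(q(\rx)\Mid p(\rx)) = -H(q(\rx)) + \E_{q(x)}\brackets{-\log p(x)}.
\end{align}
The single observation that carries the whole argument is that $H(q(\rx))$ depends only on the fixed data distribution $q$ and not on the optimization variable $p(\rx)$; it is therefore an additive constant with respect to the minimization. Because shifting an objective by a constant leaves its set of minimizers unchanged, we get $\argmin_{p(\rx)} \KL(q(x)\Mid p(x)) = \argmin_{p(\rx)} \E_{q(x)}\brackets{-\log p(x)}$. Chaining this with the previous step identifies the minimizer of the KL divergence with the maximizer of \refdef{def:ml}, which is exactly the asserted equivalence.

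The main thing to be careful about is \emph{what} ``equivalent'' means here: the two tasks share the \emph{same optimizer}, not the same optimal value, so I must state explicitly that both transformations used --- applying the monotone $-\log$ and dropping the constant $H(q(\rx))$ --- preserve the \emph{location} of the optimum while changing the objective value. A secondary subtlety worth a remark is that the equivalence is read through \refconv{conv:nll}: it is the log likelihood (equivalently the NLL), rather than the raw expected likelihood $\E_{q(x)} p(x)$, that lines up term-for-term with the KL divergence, which is why \refdef{def:ml} must be interpreted via the NLL convention for the identification to be exact. There is no genuine computational obstacle, since the entropy term was already isolated upstream in \reftheo{theo:ml}.
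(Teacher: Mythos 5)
Your proposal is essentially the paper's own argument: the paper justifies this theorem by pointing back to the decomposition $\KL(q\Mid p) = -H(q) + \E_{q(x)}[-\log p(x)]$ from the proof of \reftheo{theo:ml}, observing that $H(q)$ is constant in $p$, and relying on \refconv{conv:nll} to pass between the $\argmax$ of \refdef{def:ml} and the $\argmin$ of the NLL. Your explicit remarks about ``same optimizer, not same optimal value'' are a correct and slightly more careful rendering of what the paper leaves implicit.
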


\paragraph{Further notes:}

Typically, we assume $\hat{p}^*(\rx)$ and $p(\rx)$ are of the same family of functions
parameterized by $\theta$ such as neural network weights,
i.e., $\hat{p}^*(\rx)=p_{\theta^*}(\rx)$ and $p(\rx)=p_{\theta}(\rx)$.
Depending on how we treat $\theta$,
machine learning can be further classified into \emph{Frequentist, Partial Bayesian, or Fully Bayesian} approaches.
Frequentist and  Partial Bayesian approaches use \emph{Maximum Likelihood Estimation (MLE)}.
See \refsec{sec:parameters} for more details on learned parameters and MLE.

\subsection{Optimal Solution to ML is Sound}
\label{sec:ml-is-sound}

The Symbolic AI community values a system's logical correctness to a great degree.
Probably the most common reason they avoid machine learning is the worry that
the system could produce wrong results.
To address this worry, I attempt to demonstrate an important implication of ML that,
if $p(\rx)$ converges to the optimum $\hat{p}^*(\rx)$,
the system never generates/predicts data $\rx$ (image visualizations, scalar or categorical predictions, or anything)
that are \emph{invalid/unreal}.
Under a certain definition below,
I propose to refer to this property of ML as the \emph{soundness} of ML.

\renewcommand{\valid}[1][X]{#1^{\checkmark}}
\newcommand{\invalid}[1][X]{#1^{\times}}

Assume the sample space $X$ of $\rx$ can be divided into
a set of valid and invalid data points $\valid$ and $\invalid$, i.e.,
\[
 \invalid = \braces{x\in X\mid p^*(x)=0}.\quad \valid = X \setminus \invalid.
\]

Statisticians may call the assumption unusual, claiming that, e.g., for an image taken by a digital camera,
any sensor noise or a cosmic ray anomaly can theoretically produce any possible value of an image array,
therefore any data point has an infinitesimal but still non-zero density.
To avoid such an issue, let's
assume $X$ is discrete.

Furthermore, I
also ignore the probability differences between the valid examples.
For example, given two valid data $x_1$ and $x_2$,
the former may be more likely ($p^*(\rx=x_1)>p^*(\rx=x_2)$)
but the model may say the otherwise ($p(\rx=x_1)<p(\rx=x_2)$).
There may also be a difference from the ground truth $p(\rx=x_1)\not= p^*(\rx=x_1)$.
To discuss a topic such as the speed of convergence to the optimum,
a more in-depth theoretical discussion is necessary, which is out of the scope of this memo.
I ignore such a difference as long as they are correctly determined as \emph{possible} ($p(\rx=x_1)>0, p(\rx=x_2)>0$),
focusing only on the validity of the samples generated from $p(x)$.

Although this setting would be unusual for statisticians,
this is a fairly reasonable, realistic, and practical scenario in the symbolic community.
In non-deterministic reasoning (rather than probabilistic reasoning),
the probability distribution of certain outcomes is not available,
but only a \emph{list} of possible outcomes is available
(e.g., FOND planning \cite{cimatti2003weak,muise2015leveraging}).
In many such applications, the goal is not to find a policy with which success is most likely (weak solution)
but to find a policy that \emph{always} succeeds even in the least-likely scenario (strong/strong cyclic solution),
which thus \emph{should not} consider the probability distributions.

Note that the dataset $\X\subseteq \valid$ represented by $q(\rx)$ contains only valid examples
because the data are \emph{indeed} observed in the real world, therefore, cannot be invalid.
Invalid data are invalid precisely because they are irreplicable in the real world.
Conversely, the system will never observe invalid data in $\invalid$.
Also, $\valid\setminus \X$ represents valid but unseen data.

We can see a probability distribution as a proof system.
Let's revisit the concept of soundness and completeness in a classical proof system:
\begin{defi}
 A proof system is \emph{sound} if everything that is provable is in fact true.
\end{defi}
\begin{defi}
 A proof system is \emph{complete} if everything that is true has a proof.
\end{defi}
\begin{defi}
 We say $p(\rx)$ \emph{proves} $x\in \valid$ when $p(x)> 0$.
\end{defi}

\begin{theo}
 An optima $\hat{p}^*(x)$ of ML is \emph{sound}, i.e.,
\begin{align*}
 \hat{p}^*(x)> 0 &\then x\in \valid. &(\iff&& x\in \invalid &\then \hat{p}^*(x)=0.)
\end{align*}
\end{theo}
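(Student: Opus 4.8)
The plan is to reduce the statement entirely to the overfitting result already in hand. First I would invoke \reftheo{theo:ml}, which asserts that the ML optimum coincides with the data distribution, $\hat{p}^*(\rx) = q(\rx)$. This collapses a question about the optimum $\hat{p}^*$ into a question about the support of $q$, which is something I can compute directly from its definition.

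Next I would read off the support of $q$ from \refconv{conv:data}. Since $q(x) = \sum_i q(x|i)q(i)$ with each $q(x|i) = \delta(\rx = x_i)$ and $q(i) = 1/|\X| > 0$, the value $q(x)$ is strictly positive exactly when $x = x_i$ for some dataset index $i$, i.e. when $x \in \X$, and is zero otherwise. Here the assumption that $X$ is discrete is what lets me phrase this cleanly as probability mass concentrated on $\X$, rather than having to argue about densities that might be positive off-support. Combined with the first step, this gives the equivalence $\hat{p}^*(x) > 0 \iff x \in \X$.

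The final step uses the observation recorded just before the theorem, namely $\X \subseteq \valid$: the dataset consists only of data actually observed in the real world, and invalid points are invalid precisely because they can never be observed. Chaining the implications yields $\hat{p}^*(x) > 0 \Rightarrow x \in \X \Rightarrow x \in \valid$, which is the claim. The contrapositive form $x \in \invalid \Rightarrow \hat{p}^*(x) = 0$ then follows at once, since $\valid$ and $\invalid$ partition $X$, so $x \in \invalid$ forces $x \notin \X$ and hence $q(x) = \hat{p}^*(x) = 0$.

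There is no genuine technical obstacle here; the argument is essentially bookkeeping layered on top of \reftheo{theo:ml}. The one point deserving care is the status of $\X \subseteq \valid$, which is not a mathematical deduction but a modeling assumption about what ``observed data'' means. I would therefore state it explicitly as the single empirical premise on which the guarantee rests, so that the logical character of the result is transparent: soundness holds \emph{relative to} a dataset that is itself drawn only from valid points, matching the earlier \emph{Corollary} that perfect data recovers the ground truth.
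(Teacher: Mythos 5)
Your argument is essentially the paper's own (first) proof: it invokes Theorem \ref{theo:ml} to identify $\hat{p}^*(x)$ with $q(x)$, reads off the support of $q$ from Convention \ref{conv:data}, and uses the premise that the dataset contains only valid points. The paper additionally offers a second, independent argument (redistributing any mass on invalid points to valid ones and deriving a contradiction with optimality), but your route matches the primary proof and is correct, and your explicit flagging of $\X\subseteq\valid$ as the lone empirical premise is a fair clarification.
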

\begin{theo}
 $\hat{p}^*(x)$ can be \emph{incomplete}, i.e.,
\begin{align*}
 x\in \valid &\not\then \hat{p}^*(x)> 0. &(\iff&& \hat{p}^*(x)=0 &\not\then x\in \invalid.)
\end{align*}
\end{theo}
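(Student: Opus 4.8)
The plan is to obtain incompleteness as an immediate corollary of the overfitting result \reftheo{theo:ml}, which identifies the optimum with the empirical distribution. Because the statement only asserts that $\hat{p}^*(x)$ \emph{can} be incomplete, it suffices to exhibit a single scenario in which some valid point receives zero mass; I am negating a universal claim, so one witness is enough.

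First I would invoke \reftheo{theo:ml} to substitute $\hat{p}^*(\rx) = q(\rx)$. Then, by \refconv{conv:data}, $q$ is the uniform mixture of Dirac deltas placed on the dataset points, so its support is exactly $\X$: concretely, $q(x) > 0$ holds if and only if $x \in \X$, and $q(x) = 0$ for every $x \notin \X$. This is the only structural fact about the optimum that the argument needs.

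Next I would note that $\X$ is a finite and, in any realistic setting, strictly partial sample of the valid region, so the valid-but-unseen set $\valid \setminus \X$ is nonempty. Picking any $x \in \valid \setminus \X$ gives $x \in \valid$ while $\hat{p}^*(x) = q(x) = 0$, which is exactly the failure of completeness. The same witness settles the parenthetical restatement, since here $\hat{p}^*(x) = 0$ yet $x \notin \invalid$.

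The main point to handle carefully is not a computation but the honest statement of the genericity condition: completeness fails precisely when the data fail to exhaust $\valid$, i.e., whenever $\valid \setminus \X \neq \emptyset$. I would therefore present this side by side with the preceding soundness theorem --- where $\X \subseteq \valid$ guarantees $q$ never places mass on $\invalid$ --- to emphasize that the single fact that the support of $q$ equals $\X$ yields soundness and incompleteness simultaneously, and to foreshadow that it is generalization, not bare optimization, that restores completeness as announced in the overview.
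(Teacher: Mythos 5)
Your proposal is correct and follows essentially the same route as the paper, which proves both soundness and incompleteness in one stroke by citing \reftheo{theo:ml} ($\hat{p}^*(x)=q(x)$, perfect overfitting) and observing that the support of $q$ is exactly $\X$ per \refconv{conv:data}. You simply spell out the witness $x\in\valid\setminus\X$ explicitly, which the paper leaves implicit under the word ``trivial.''
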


\begin{proof}
 Trivial, because $\hat{p}^*(x)=q(x)$ (perfect overfitting).
 Each statement follows naturally from $q(x)$ (\refconv{conv:data}).
\end{proof}

However, this first proof does not convey the full extent of the surprise.
To fully embrace it, I need another proof:

\begin{proof}
ML achieves the soundness by maximizing $p(\rx)$ for real data $q(\rx)$,
which reduces $p(\rx)$ for invalid data \emph{that it has not even seen}
because a probability distribution sums/integrates to 1: $\sum_\rx p(\rx)=1$.
If there is still an invalid point that has a positive mass,
you can move the mass to valid points and further maximize $p(\rx)$.
See \refig{fig:soundness} for the illustration.

Let $\hat{p}^*(x)>0$ for some $x \in \invalid$. We define a new distribution $p'(\rx)$
by moving all probability mass assigned to $\invalid$ to $\valid$.
Let $C = \sum_{x\in \invalid} \hat{p}^*(x)$, i.e., the mass assigned to $\invalid$.
Obviously $0\leq C \leq 1=\sum_{x\in X} \hat{p}^*(x)$.
Then we can achieve the desired effect by scaling the distribution:
\begin{align*}
 p'(x) &=
 \left\{
 \begin{array}{cl}
  \hat{p}^*(x) / \parens{1-C}  & x \in \valid\\[0.5em]
  0 & x \in \invalid
 \end{array}
 \right.\\
 \E_{q(x)} \hat{p}^*(x) &\leq \E_{q(x)} p'(x)=\frac{1}{1-C} \E_{q(x)} \hat{p}^*(x).
\end{align*}
which contradicts that $\hat{p}^*(x)$ is maximized.
\end{proof}

\subsection{Generalization Makes ML Complete}

The incompleteness was caused by the infinite capacity in $p(x)$ that can express any function.
It can perfectly overfit the data $\X$ by assigning 0 to everything not in $\X$, \emph{including the valid ones}.
This makes the model susceptible to \emph{out-of-distribution} examples and generates wrong predictions.
To address this, one should limit the expressivity of $p(x)$
so that it generalizes beyond $\X$, i.e.,
to start assigning non-zero to unseen valid examples $\valid\setminus \X$
while keep assigning 0 to invalid examples $\invalid$.
See \refig{fig:completeness} for the illustration.

\begin{defi}
 For a set of distributions $F$,
 let $C(x)$ be an equivalence class of $x$ under $F$, i.e.,
\[
 C(x)=\braces{x'\in X \mid \forall f\in F; f(x')=f(x)}.
\]
\end{defi}
\begin{ex}
 Convolutional layers model translation invariant distributions $F$ and
 \emph{cannot discern the translated inputs}.
 $C(x)$ has horizontally/vertically shifted $x$.
\end{ex}
\begin{ex}
 Transformer \cite{vaswani2017attention} model permutation invariant distributions $F$ and
 \emph{cannot discern the permuted sequence}.
 $C(x)$ has all permutations of $x$.
\end{ex}
\begin{defi}
 Let $Y=\braces{C(x)\mid x\in X}$.
 Define $\Y$, $\valid[Y]$, and $\invalid[Y]$ similarly.
 I say $F$ \emph{generalizes from $\X$ to $\valid$} when
 $\Y=\valid[Y]$, i.e.,
 the equivalence classes of $\X$ covers $\valid$.
\end{defi}

\begin{lemma}
 If $\X=\valid$, then $\hat{p}^*(\rx)$ is complete.
\end{lemma}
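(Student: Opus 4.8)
The plan is to reduce completeness directly to two earlier results: the perfect-overfitting identity $\hat{p}^*(\rx)=q(\rx)$ of \reftheo{theo:ml} and the explicit form of the data distribution in \refconv{conv:data}. Unfolding the definitions of \emph{complete} and of \emph{proves}, the claim ``$\hat{p}^*(\rx)$ is complete'' means exactly the implication $x\in\valid \then \hat{p}^*(x)>0$, so the whole task is to verify that every valid point receives strictly positive mass under $\hat{p}^*$.

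First I would pin down the support of $q$. Working in the discrete setting adopted earlier, $q(x|i)=\delta(\rx=x_i)$ is a point mass, i.e.\ the indicator $\dbrackets{x=x_i}$, so $q(x)=\frac{1}{|\X|}\sum_i\dbrackets{x=x_i}$. This sum is strictly positive precisely when $x$ coincides with some sampled point and vanishes otherwise, giving $q(x)>0 \iff x\in\X$. Next I would substitute the hypothesis $\X=\valid$: for every $x\in\valid=\X$ we obtain $q(x)>0$, and by \reftheo{theo:ml} this equals $\hat{p}^*(x)>0$. That is exactly the completeness implication, so the lemma follows.

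Given the two prior results, the argument is essentially immediate and there is no real obstacle; the only step deserving care is the equivalence $q(x)>0 \iff x\in\X$. This is where the discreteness assumption (introduced for the soundness discussion) carries the weight, since it lets us treat $\delta(\rx=x_i)$ as an ordinary indicator contributing positive mass at, and only at, the sampled points, rather than reasoning about densities. It is also worth remarking that the hypothesis $\X=\valid$ is doing all the work on the ``unseen valid points'' side: completeness fails in general precisely because $q$ assigns $0$ to $\valid\setminus\X$, and assuming $\X=\valid$ empties that set.
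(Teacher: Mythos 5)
Your argument is correct and is exactly the reasoning the paper intends: it leaves this lemma without an explicit proof, but its adjacent soundness/incompleteness proofs use the same two ingredients ($\hat{p}^*(\rx)=q(\rx)$ from \reftheo{theo:ml} and the support of $q$ from \refconv{conv:data}), so your unfolding of $q(x)>0\iff x\in\X$ plus the hypothesis $\X=\valid$ is the intended one-line justification.
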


\begin{theo}
 Suppose $F$ generalizes from $\X$ to $\valid$ and $p^*(\rx)\in F$.
 Suppose no two data points in $\X$ maps to the same class.
 Then $\hat{p}^*_F(x)$, the optima under $F$, is complete:
 \[
 \hat{p}^*_F(x) = \argmax_{p(\rx)\in F} \E_{q(x)} p(x).
 \]
 \label{theo:complete}
\end{theo}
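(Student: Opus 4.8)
The plan is to recast the objective in its negative-log-likelihood form and exploit the fact that the logarithm makes a vanishing assignment infinitely costly. By \refconv{conv:nll} and \reftheo{theo:ml}, maximizing $\E_{q(x)}p(x)$ is carried out as minimizing $\E_{q(x)}\brackets{-\log p(x)}$, equivalently minimizing $\KL(q(\rx)\Mid p(\rx))$ up to the additive constant $H(q(\rx))$. So I would read the stated $\argmax$ as defining $\hat{p}^*_F$ to be the minimizer of $\KL(q(\rx)\Mid p(\rx))$ over $p(\rx)\in F$ (the bare expected likelihood $\E_{q(x)}p(x)$ would instead reward concentrating all mass on a single class and would defeat completeness, so the log form is essential here). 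Since every $p\in F$ is by construction constant on each equivalence class $C(x)$, each candidate is determined by its per-class values $P(C)=p(x)$ for $x\in C$, and the data points $x_1,\dots,x_{|\X|}$, which lie in distinct classes by hypothesis, contribute the terms $\log P(C(x_i))$ to the objective.

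Next I would pin the optimum down using the feasibility of the ground truth. Because $p^*(\rx)\in F$ and each $x_i\in\X\subseteq\valid$, we have $p^*(x_i)>0$ for every $i$, so $\KL(q(\rx)\Mid p^*(\rx))$ is finite; hence the minimizer satisfies $\KL(q(\rx)\Mid \hat{p}^*_F(\rx))\leq \KL(q(\rx)\Mid p^*(\rx))<\infty$. The key step is the contrapositive: if some $p\in F$ had $p(x_j)=0$ for a data point $x_j$, then the summand $q(x_j)\log\frac{q(x_j)}{p(x_j)}$ would diverge to $+\infty$, so $p$ could not be optimal. Therefore $\hat{p}^*_F(x_i)>0$, i.e.\ $P(C(x_i))>0$, for every data point $x_i$.

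Finally I would propagate this positivity across each class and invoke generalization to cover all valid points. Fix any $x\in\valid$. The generalization hypothesis $\Y=\valid[Y]$ together with the assumption that no two data points share a class means the valid classes are exactly $\braces{C(x_1),\dots,C(x_{|\X|})}$, so $C(x)=C(x_i)$ for some $i$. Since $\hat{p}^*_F\in F$ is constant on $C(x_i)$, we obtain $\hat{p}^*_F(x)=\hat{p}^*_F(x_i)=P(C(x_i))>0$, which is precisely the completeness claim $x\in\valid\then\hat{p}^*_F(x)>0$.

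I expect the main obstacle to be the first step rather than the last: one must justify that the operative objective is the log-likelihood / KL form, for which a zero assignment is prohibitively penalized, rather than the bare expected likelihood, and one must assume the minimizer over $F$ is attained so that the finite-divergence bound applies. The roles of the two structural hypotheses are then clean: $p^*\in F$ supplies a finite-divergence reference distribution that is positive on all of $\valid$, while $\Y=\valid[Y]$ together with class-distinctness transfers per-data-point positivity to the whole of $\valid$.
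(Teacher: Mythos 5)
Your proof is correct at the paper's level of rigor, but it takes a genuinely different route. The paper argues on the quotient space: it forms $Y=\braces{C(x)\mid x\in X}$, notes that the optimum on $Y$ trained on $\Y$ is complete because $\Y=\valid[Y]$, and then pushes this back down to $X$ via the identity $\hat{p}^*_F(x)=\hat{p}^*(\ry=C(x))/|C(x)|$ --- this is where class-constancy of the members of $F$ and the one-datum-per-class hypothesis do their work, while the role of $p^*(\rx)\in F$ is left implicit (essentially to guarantee that the lifted distribution is actually available in $F$). You instead stay on $X$ and use $p^*(\rx)\in F$ as an explicit feasible point with finite $\KL(q(\rx)\Mid p^*(\rx))$, so that any candidate vanishing on a data point is infinitely penalized and cannot be optimal; positivity then spreads over each class by constancy and over all of $\valid$ by $\Y=\valid[Y]$. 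Your route makes the use of $p^*\in F$ transparent, needs only positivity of the optimum rather than its exact form, and in fact never really needs the no-two-points-per-class hypothesis (the paper uses it only to make its push-down formula exact); what it costs is the up-front commitment to the log/KL reading of the objective and an attainment assumption for the minimizer. Your observation that the literal linear objective $\E_{q(x)}p(x)$ would reward concentrating all mass on a single class and thereby falsify completeness is sharper than anything in the paper's own proof: the theorem as written silently depends on the NLL/KL form (exactly as \reftheo{theo:ml} already does), and your argument is the one that makes that dependence explicit.
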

\begin{proof}
 Let $Y=\braces{C(x)\mid x\in X}$.
 Define $\Y$, $\valid[Y]$, and $\invalid[Y]$ similarly.
 The optima $\hat{p}^*(\ry)$ on $\ry\in Y$ using $\Y$ is complete because $\Y=\valid[Y]$.
 Since
 $\hat{p}^*_F(x)=\frac{\hat{p}^*(\ry=C(x))}{|C(x)|}$ by assumption,
 $\hat{p}^*_F(x)$ is also complete.
\end{proof}

\begin{figure}[tb]
\centering
\includegraphics[width=\linewidth]{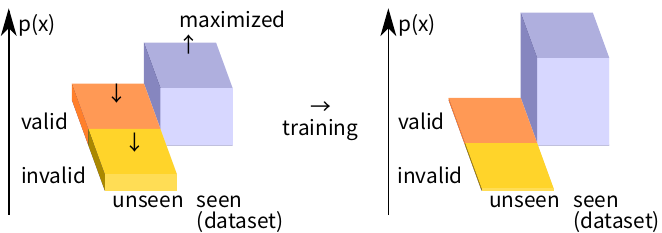}
\includegraphics[width=\linewidth]{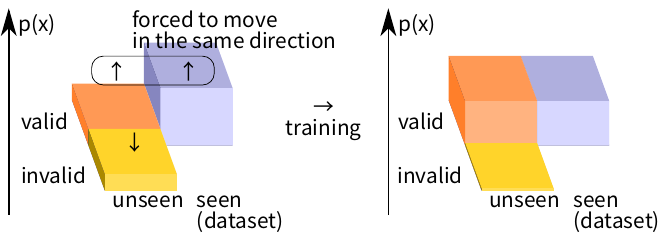}
\caption{
(Top)
An illustration of maximizing $p(x)$ from data. Without generalization, the result is a perfect overfitting,
which results in a sound but incomplete model.
(Bottom)
Maximizing $p(x)$ from data with constraints
that force all valid examples to move in the same direction.
The generalization achieves a sound and complete model.
 }
\label{fig:soundness}
\label{fig:completeness}
\end{figure}

Generalization improves the \emph{sample efficiency},
i.e., you can learn from fewer data, and you do not need a perfect dataset.
Instead, you only need a single instance from each class $C(x)$.

\subsection{In Practice...}

To summarize, informally, there are three conditions for the ground truth to be approximated well:
\begin{enumerate}
 \item $q(x)$ is good.
 \item $p(x)$ is expressive enough to be sound.
 \item $p(x)$ is restricted enough to be complete.
\end{enumerate}

In practice, there are number of reasons that a trained model is unsound and/or incomplete:
insufficient data ($F$ not generalizing from $\X$ to $\valid$),
suboptimal solutions (e.g., early stop),
insufficient generalization (assigning zero to $\valid\setminus \X$),
or over-generalization (assigning non-zero to $\invalid$).

If everything breaks down in practice, why should we care?
It is because some approaches are \emph{unsound even in this idealized optima}.
This soundness of ML is weak and idealistic, but it is still better than nothing because
it significantly prunes the \emph{design space}.
Non-specialist users of Stat/ML/DL in the symbolic AI community,
likely already bamboozled by the mess in the machine learning literature,
be advised: \textbf{Ignore unsound approaches}.

Note that
any existing approach could be shown to become a sound ML with a minor modification.
For example, although previous work on classical learning schemes such as MAXSAT-based learner \cite{YangWJ07}
has not been analyzed in this way, it may turn out to be sound and complete.

\paragraph{Further notes:}

My analysis focuses on the \emph{support} of the density/mass functions,
i.e., its non-zero regions.
In 1-dimensional settings, the edges of the support are the \emph{extrema} (e.g., minimum) of the random variable.
While the mainstream statistics deals with the \emph{means} based on Central Limit Theorem (\reftheo{theo:clt}),
extrema are dealt by \emph{Extreme Value Statistics} based on Extremal Limit Theorem (\refsec{zoo:gumbel}).

Averages are useful, but extrema deserve more attention.
While the mainstream ML focuses on the \emph{most likely} behavior,
real-world safety-critical applications must know the model's highly \emph{unlikely} limit behaviors.
It even makes sense in creative applications like text-to-image models \cite[DALL-E]{dalle,dalle2}:
A novel art emerges from an exaggeration toward the extremes, not from regression to the incompetent norms.
As another example,
we are not only interested in the average travel time to the office,
but also in the worst case (to join a meeting)
and the best case (to know how good my route is; to take the risk to improve the plan).
Distribution Zoo (\refsec{sec:zoo}) covers more details on this topic.

Recently, \emph{Contrastive Learning} has seen great empirical success
and has attracted theoretical attention.
Its theoretical justification is provided by Noise Contrastive Estimation \cite{gutmann2010noise}:
It approximately generates $\invalid$ to actively minimize $p(x)$ for $x\in\invalid$ (\refig{fig:contrastive}).
Examples include
\emph{contrastive loss} \cite{chopra2005learning} in face verification,
\emph{negative sampling} \cite{mikolov2013distributed} in natural language processing,
and \emph{PU-learning} \cite{elkan2008learning}, which learns from a positive and an unlabeled dataset.

VC-dimensions, PAC-learnability of a concept class, Central Limit Theorem, etc.,
analyze more general continuous cases (they are also Frequentist \refsec{sec:frequentist}).

\begin{figure}[tbp]
\centering
\includegraphics[width=\linewidth]{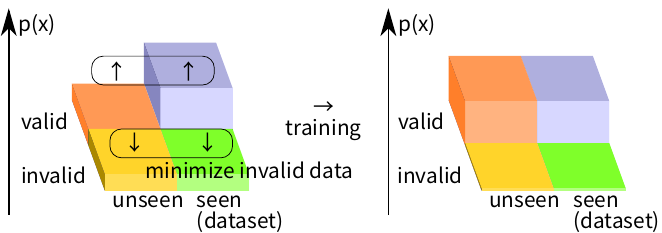}
\caption{
Contrastive learning / Noise Contrastive Estimation \cite{gutmann2010noise}
maximizes $p(x)$ for valid data and minimizes $p(x)$ for invalid data.
The minimization is explicit, unlike non-contrastive learning.
}
\label{fig:contrastive}
\end{figure}

\subsection{Instances of ML}

ML is a general framework applicable to various tasks.

\begin{ex}[Supervised Learning]
 \label{def:supervised}
 Assume an input variable $\rx$ and an output variable $\ry$.
 The dataset $\X=(x_i)_{i=0}^{N}$ and $\Y=(y_i)_{i=0}^{N}$
 represents $N$ input-output pairs.
 \begin{align*}
 q(x,y) &= \sum_i q(x,y|i)q(i)= \sum_i q(x,y|i)\frac{1}{N}, \\
 q(x,y|i) &= q(x|i)q(y|i)=\delta(\rx=x_i)\delta(\ry=y_i),\\
  \hat{p}^*(x,y) &= \argmax_p \E_{q(x,y)} p(x,y)\\
  \propto \hat{p}^*(y|x) &= \argmax_p \E_{q(x,y)} p(y|x).\quad (p(x)=\text{Const.})
 \end{align*}
\end{ex}

\begin{ex}[Classification/Regression]
 A supervised learning task is called a \emph{classification/regression} if
 the output variable is discrete/continuous, respectively.
 When the output is discrete and has $C$ categories,
 \begin{align*}
  q(y|i) &= \cat(\ldots, 0,1,0, \ldots)=\dbrackets{\ry=y_i}\\
 \E_{q(x,y|i)} \log p(y|x) &= \sum_{j=0}^C \dbrackets{y_i = j} \log p(y = j|x).
 \end{align*}
\end{ex}

In other words, $\dbrackets{y_i = j} = 1$ if $j$ is the correct answer
and $\dbrackets{y_i = j} = 0$ otherwise.
Notice that this is a definition of cross entropy for a categorical variable.
A binary classification task is a special case with $C=2$.

\section{Loss Functions: Do the Right Thing}
\label{sec:loss}

You may have read somewhere that \dl is just a glorified square fitting.
It is true that square errors are abundant in \dl, but
why so many methods use them and how do they justify it?
Why they also sometimes use absolute errors?
So far, I have been discussing $\argmax_{p(x)}\E_{q(x)} p(x)$ or $\argmin_{p(x)}\E_{q(x)} - \log p(x)$.
But what are these loss functions, anyways?

\begin{fact}
 The actual form of the loss function is defined by the choice of the distribution.
 \label{fact:loss}
\end{fact}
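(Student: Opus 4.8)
The plan is to read \reffact{fact:loss} as a computational demonstration rather than a deep theorem. By \refconv{conv:nll}, the quantity actually being minimized is the expected negative log likelihood $\E_{q(x)}\brackets{-\log p(x)}$, so the per-example loss \emph{is} the function $x \mapsto -\log p(x)$. Since $q(x)$ is the fixed empirical distribution, this expectation is just the dataset average $\frac{1}{|\X|}\sum_i -\log p(x_i)$, and the only thing that can change the shape of the objective is the parametric family we pick for $p$. So to justify the fact I would substitute several standard families into $-\log p(x)$ and read off the familiar loss each one produces, discarding additive terms that are constant in the optimization variable (these do not affect the $\argmin$).

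First I would treat the Gaussian case, which explains the ``glorified square fitting'' folklore. Modeling a (conditional) mean by $p(x) = \N(\mu,\sigma^2)$, with $\mu$ the model's prediction, I would compute
\begin{align*}
 -\log p(x) = \frac{(x-\mu)^2}{2\sigma^2} + \frac{1}{2}\log\parens{2\pi\sigma^2},
\end{align*}
so for fixed $\sigma$, minimizing the NLL over $\mu$ is exactly minimizing the squared error $(x-\mu)^2$. In the supervised setting (\refdef{def:supervised}) with $\mu = f(x)$, this is precisely the usual regression L2 loss.

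Next I would run through the other standard choices to show the form genuinely tracks the distribution. A Laplace model $p(x)\propto \exp(-|x-\mu|/b)$ yields $-\log p(x) = |x-\mu|/b + \text{const}$, i.e.\ the absolute (L1) error, which answers ``why do we sometimes use absolute errors?'' A categorical or Bernoulli model reproduces the cross-entropy already computed in the classification example, namely $-\log p(x) = -\sum_j \dbrackets{x=j}\log p(x=j)$. Each family thus pins down a different loss, which is the content of the fact.

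The main thing to be careful about --- there is no real obstacle here --- is making precise what counts as ``the loss.'' It is the part of $-\log p(x)$ that depends on the optimization variable (the predicted parameter), whereas normalization constants such as $\frac{1}{2}\log(2\pi\sigma^2)$ or $\log b$ are dropped because they do not shift the $\argmin$. Note that when the scale parameter ($\sigma$, $b$) is itself learned, these ``constants'' re-enter the objective, so the honest statement is that the choice of distribution determines the loss \emph{up to terms constant in the quantities being optimized}.
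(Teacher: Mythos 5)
Your proposal matches the paper's own treatment almost exactly: the paper also justifies \reffact{fact:loss} by substituting the Gaussian PDF into the NLL of \refconv{conv:nll} to obtain the shifted/scaled squared error of \refeq{eq:gaussian-nll}, then the Laplace PDF to obtain the shifted/scaled absolute error, with the categorical/cross-entropy case appearing in the earlier classification example. Your closing caveat about dropped normalization terms re-entering when the scale is learned is likewise the paper's own point in \refsec{sec:point} and its note on distributional estimation, so there is nothing to correct.
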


For example,
a model designer can assume that $\rx$ follows a specific distribution such as
a Gaussian distribution:
\begin{align}
 x\sim p(x)=\N(x\mid \mu,\sigma)=\frac{1}{\sqrt{2\pi\sigma^2}}e^{-\frac{(x-\mu)^2}{2\sigma^2}}. \label{eq:gaussian-pdf}
\end{align}
A machine learning system predicts the value of $\mu$ and $\sigma$,
in which case the NLL (\refconv{conv:nll}) is a squared error of prediction $\mu$ shifted and scaled using $\sigma$:
\begin{align}
 -\log p(x)=\frac{(x-\mu)^2}{2\sigma^2}+\log \sqrt{2\pi\sigma^2}. \label{eq:gaussian-nll}
\end{align}
As another example,
the loss function for a Laplace distribution $\frac{1}{2b}e^{-\frac{|x-\mu|}{b}}$
is a shifted and scaled absolute error $\frac{|x-\mu|}{b} +\log 2b$.

Now the reader may have many questions:
Why the Gaussian distribution is the typical choice?
How can it be theoretically justified?
When and why we should use Laplace distribution, or any other distribution?
These are answered by the \emph{Maximum Entropy Principle} \cite{maxent,jaynes1968prior}:
It is because Gaussian distribution is the \emph{maximum entropy distribution}
among all distributions with range $[-\infty,\infty]=\R$ with the same mean and the variance.

\begin{defi}
 The \emph{maximum entropy distribution} $f^*$ among a set of distributions $F$ is the one with the largest entropy
 $f^*=\argmax_{f\in F} H(f)$.
 In other words, it is ``most random'' in $F$, thus has the least ``unintended'' assumptions among $F$.
\end{defi}

\begin{theo}[Maximum Entropy Principle]
 \label{theo:maxent}
 The optimal distribution for a random variable
 is the maximum entropy distribution among distributions that
 satisfy the user-supplied constraint (domain knowledge). \cite{maxent,jaynes1968prior}
\end{theo}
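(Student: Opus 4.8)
The plan is first to make precise what ``optimal to use for modeling'' means, since the statement as given is informal. I would adopt a robustness (minimax) reading consistent with the memo's agent framework: all the hypothesis agent actually knows is that the ground-truth distribution $p^*(\rx)$ lies in the constraint set $F$ supplied as domain knowledge; everything else is unknown. I then declare a modeling distribution $q(\rx)$ \emph{optimal} if it minimizes the worst-case negative log likelihood (the loss of \refconv{conv:nll}) against an adversarial choice of the truth inside $F$, i.e.\ if it solves
\[
 \min_{q}\ \max_{p^*\in F}\ \E_{p^*(x)}\brackets{-\log q(x)}.
\]
This is the natural formalization of ``making the fewest unintended assumptions'': we refuse to pay extra description length for any structure not forced by $F$.

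Next I would rewrite the inner objective with the cross-entropy decomposition
$\E_{p^*(x)}\brackets{-\log q(x)} = H(p^*(\rx)) + \KL(p^*(\rx)\Mid q(\rx))$,
which follows immediately from the definition of entropy and \refdef{def:kl}. Since $\KL\ge 0$ by \refdef{def:kl}, for any fixed $q\in F$ the adversary can already secure the value $H(p^*(\rx))$ by taking $p^*=q$, so the serious candidates for $q$ are themselves members of $F$.

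The crux is then to exhibit the saddle point at $q=p^{\mathrm{ME}}$, the maximum-entropy member of $F$. When $F$ is specified by moment/linear constraints (the ``same mean and variance'' case that produces the Gaussian of \refeq{eq:gaussian-pdf}), the max-entropy solution has exponential-family form, so that $-\log p^{\mathrm{ME}}(x)$ is an affine combination of the constraint functions. Hence $\E_{p^*(x)}\brackets{-\log p^{\mathrm{ME}}(x)}$ equals the same value $H(p^{\mathrm{ME}}(\rx))$ for \emph{every} $p^*\in F$, because all admissible $p^*$ share those moments. This ``Pythagorean'' identity delivers both inequalities at once: against $q=p^{\mathrm{ME}}$ the worst case is exactly $H(p^{\mathrm{ME}}(\rx))$, while for any other $q$,
\begin{align*}
 \max_{p^*\in F}\E_{p^*(x)}\brackets{-\log q(x)}
 &\ge \E_{p^{\mathrm{ME}}(x)}\brackets{-\log q(x)}\\
 &= H(p^{\mathrm{ME}}(\rx)) + \KL(p^{\mathrm{ME}}(\rx)\Mid q(\rx))\\
 &\ge H(p^{\mathrm{ME}}(\rx)),
\end{align*}
with equality only when $q=p^{\mathrm{ME}}$, by strict convexity of $\KL$ in its second argument. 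Thus $p^{\mathrm{ME}}$ is the unique minimax-optimal modeling distribution and the minimax value is its entropy.

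The main obstacle is precisely this crux step: establishing that $-\log p^{\mathrm{ME}}$ has constant expectation over all of $F$ requires deriving the exponential-family form via Lagrange multipliers on the entropy functional and using that $F$ is a moment-constraint set rather than an arbitrary convex set. For general nonlinear constraints the clean identity fails, and one must instead invoke a genuine minimax theorem, paying attention to convexity and compactness of $F$ and lower semicontinuity of the loss. Finally I would tie the result back to the memo's theme: the constraints encode \emph{soundness}, carving out $\valid$, and maximizing entropy subject to them is exactly what keeps the model maximally \emph{complete} --- it spreads mass over everything not explicitly forbidden rather than silently zeroing out admissible points.
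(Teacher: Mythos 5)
The paper does not actually prove this statement: it is presented as a named \emph{principle}, justified only by the citation to Jaynes and by the informal gloss attached to the definition of a maximum entropy distribution (``most random, thus has the least unintended assumptions''). In the paper's framing, ``optimal to use for modeling'' is never formally defined, so the theorem functions as a modeling convention rather than a provable claim. Your proposal does something genuinely different and more ambitious: it fixes a precise meaning of optimality (minimax expected negative log likelihood over the constraint set $F$) and then derives the principle as a saddle-point result. This is essentially the standard game-theoretic justification of maximum entropy (Tops{\o}e; Gr\"unwald and Dawid), and your argument is correct for the case the memo actually cares about --- $F$ cut out by moment constraints --- where the exponential-family form of $p^{\mathrm{ME}}$ makes $\E_{p^*(x)}\brackets{-\log p^{\mathrm{ME}}(x)}$ constant over $F$, so the two chained inequalities close the saddle point at value $H(p^{\mathrm{ME}}(\rx))$. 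What your route buys is a real theorem where the paper has an axiom; what it costs is exactly what you flag: the equalizer identity requires $F$ to be a set of linear/moment constraints and requires $p^{\mathrm{ME}}$ to dominate the supports of all members of $F$ (otherwise the inner expectation can be $+\infty$), and for general convex $F$ you must instead invoke a genuine minimax theorem with compactness and semicontinuity hypotheses. One step to tighten: the remark that ``the serious candidates for $q$ are themselves members of $F$'' is not yet an argument, since the adversary can only play $p^*=q$ when $q\in F$ --- but it is also unnecessary, because your final chain of inequalities already handles arbitrary $q$ by lower-bounding the supremum with the single admissible choice $p^*=p^{\mathrm{ME}}\in F$.
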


\begin{theo}
 Gaussian distribution $\N(\mu,\sigma)$ is the maximum entropy distribution $p(x)$ for
 $x\in\R$ with a finite mean $\mu=\E_{p(x)}\brackets{x}$ and a finite variance $\sigma^2=\E_{p(x)}\brackets{(x-\mu)^2}$.
\end{theo}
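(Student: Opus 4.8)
The plan is to prove this via the non-negativity of KL divergence (\refdef{def:kl}), which gives a slicker argument than the calculus-of-variations route and reuses machinery already developed in this memo. Let $g(x)=\N(x\mid\mu,\sigma)$ denote the Gaussian with the prescribed mean and variance, and let $p(x)$ be an arbitrary distribution on $\R$ sharing the same mean $\mu$ and variance $\sigma^2$. First I would write down $\KL(p(\rx)\Mid g(\rx))\geq 0$ and expand it as $-H(p(\rx))-\E_{p(x)}\brackets{\log g(x)}$, so that $H(p(\rx))\leq -\E_{p(x)}\brackets{\log g(x)}$.

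The crucial step is to evaluate $-\E_{p(x)}\brackets{\log g(x)}$ and observe that it depends on $p$ only through its fixed moments. Using the Gaussian NLL already computed in \refeq{eq:gaussian-nll}, namely $-\log g(x)=\frac{(x-\mu)^2}{2\sigma^2}+\log\sqrt{2\pi\sigma^2}$, taking expectations under $p$ gives $\frac{1}{2\sigma^2}\E_{p(x)}\brackets{(x-\mu)^2}+\log\sqrt{2\pi\sigma^2}=\frac{1}{2}+\log\sqrt{2\pi\sigma^2}$, since $\E_{p(x)}\brackets{(x-\mu)^2}=\sigma^2$ by assumption. Because this value involves only the constrained variance, it is identical to $-\E_{g(x)}\brackets{\log g(x)}=H(g(\rx))$, the entropy of the Gaussian itself (which of course also has mean $\mu$ and variance $\sigma^2$). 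Chaining the inequalities yields $H(p(\rx))\leq H(g(\rx))$ for every admissible $p$, and the equality condition of KL (\refdef{def:kl}) shows equality holds iff $p=g$, establishing the Gaussian as the unique maximizer.

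The main obstacle is the substitution swapping $\E_{p(x)}\brackets{\log g(x)}$ for $\E_{g(x)}\brackets{\log g(x)}$: this is the heart of the argument, and it works precisely because $\log g$ is a quadratic whose expectation is pinned down by the constrained mean and variance and nothing more. I would also note the alternative Lagrangian derivation: maximizing $-\int p\log p$ against the normalization, mean, and variance constraints yields $p(x)\propto \exp\parens{\lambda_0+\lambda_1 x+\lambda_2(x-\mu)^2}$, an exponentiated quadratic that is manifestly Gaussian, after which the work shifts to solving for the multipliers and to confirming the stationary point is a genuine maximum (which follows from concavity of entropy over the convex feasible set). The KL route sidesteps the second-order check entirely, which is why I prefer it.
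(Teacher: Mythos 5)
Your argument is correct, but there is nothing in the paper to compare it against: the memo states this theorem bare, with no proof, leaning on the cited maximum-entropy literature \cite{maxent,jaynes1968prior}. Your KL route is the standard one and it meshes well with the memo's own machinery: non-negativity of $\KL$ (\refdef{def:kl}), the Gaussian NLL \refeq{eq:gaussian-nll}, and the Gaussian entropy value $\frac{1}{2}+\log\sqrt{2\pi\sigma^2}$ that the appendix quotes elsewhere. The key step --- that the cross entropy $-\E_{p(x)}\brackets{\log g(x)}$ collapses to $H(g(\rx))$ because $-\log g$ is a quadratic pinned down by the constrained moments --- is exactly right, and the equality condition of $\KL$ gives uniqueness. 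Two minor caveats worth a footnote rather than a worry: the argument implicitly needs $H(p(\rx))$ to be well-defined (if $H(p(\rx))=-\infty$ the inequality holds trivially), and absolute continuity of $p$ with respect to $g$ is automatic since the Gaussian density is strictly positive on $\R$; both are the sort of measure-theoretic fine print the memo explicitly sets aside. Your closing remark about the Lagrangian derivation correctly identifies why the KL route is preferable here --- it delivers the global maximum and uniqueness in one stroke instead of only a stationary point.
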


\begin{fact}
 ML applications often lack the domain knowledge on a variable
 other than it has a finite mean and variance.
 Thus they use Gaussians = square errors.
\end{fact}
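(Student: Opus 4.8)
The plan is to prove this via the non-negativity of the KL divergence (\refdef{def:kl}), which is cleaner than a Lagrange-multiplier / calculus-of-variations argument and reuses machinery already established in this memo. Let $g(x)=\N(x\mid\mu,\sigma)$ denote the Gaussian with the prescribed mean and variance, and let $p(\rx)$ be \emph{any} distribution on $\R$ sharing the same finite mean $\mu=\E_{p(x)}\brackets{x}$ and variance $\sigma^2=\E_{p(x)}\brackets{(x-\mu)^2}$. The goal is to show $H(p(\rx))\leq H(g)$, with equality only when $p=g$, and to note that $g$ itself is feasible so that the bound is actually attained.

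First I would expand the non-negative divergence $\KL(p(\rx)\Mid g)$ into an entropy term and a cross-entropy term:
\begin{align*}
 0\leq \KL(p(\rx)\Mid g) = \E_{p(x)}\brackets{\log\frac{p(x)}{g(x)}} = -H(p(\rx)) - \E_{p(x)}\brackets{\log g(x)}.
\end{align*}
The crux of the argument --- and the step I expect to be the main obstacle --- is evaluating the cross-entropy $-\E_{p(x)}\brackets{\log g(x)}$ and recognizing that it equals $H(g)$. Here I would invoke the explicit quadratic form $-\log g(x)=\frac{(x-\mu)^2}{2\sigma^2}+\log\sqrt{2\pi\sigma^2}$ from \refeq{eq:gaussian-nll}. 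Because this is quadratic in $x$, its expectation under $p(\rx)$ depends on $p$ \emph{only} through its mean and variance:
\begin{align*}
 -\E_{p(x)}\brackets{\log g(x)} = \frac{\E_{p(x)}\brackets{(x-\mu)^2}}{2\sigma^2}+\log\sqrt{2\pi\sigma^2} = \frac{1}{2}+\log\sqrt{2\pi\sigma^2}.
\end{align*}

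The key observation is that this value is determined entirely by $\mu$ and $\sigma^2$, and $g$ itself carries exactly those moments, so the identical computation with $p$ replaced by $g$ yields $-\E_{g(x)}\brackets{\log g(x)}=H(g)$ equal to the same constant. Hence $-\E_{p(x)}\brackets{\log g(x)}=H(g)$. This is precisely where the finite-mean-and-variance hypothesis is indispensable: it is what moment-matches the cross-entropy to $g$'s own entropy, and without it the argument collapses. Substituting back gives $0\leq -H(p(\rx))+H(g)$, i.e.\ $H(p(\rx))\leq H(g)$, so the Gaussian is at least as random as any competitor with the same first two moments.

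Finally I would close the argument by addressing uniqueness and attainment. By the equality condition of \refdef{def:kl}, $\KL(p(\rx)\Mid g)=0$ if and only if $p=g$, so the maximizer is unique. Since $g$ trivially satisfies both moment constraints, it is a feasible member of the constrained family, and therefore it is the genuine maximum-entropy distribution rather than merely an upper bound. I expect no subtlety in this last step; the only delicate point remains the cross-entropy identity above, whose whole force rests on $\log g$ being a degree-two polynomial in $x$.
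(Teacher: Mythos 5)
The statement you were asked about is not a theorem with a proof in the paper --- it is an empirical ``Fact'' about ML practice (\emph{applications usually know nothing about a variable beyond a finite mean and variance, hence they pick Gaussians, hence their NLL is a square error}), and the paper simply asserts it. What you have actually proved is the \emph{adjacent} mathematical theorem, stated in the paper without proof: that $\N(\mu,\sigma)$ is the maximum-entropy distribution on $\R$ among all distributions with the given finite mean and variance. Your argument for that theorem is correct and standard: $0\leq\KL(p\Mid g)=-H(p)-\E_{p(x)}\brackets{\log g(x)}$, and because $-\log g$ is quadratic in $x$, the cross-entropy term depends on $p$ only through its first two moments, so the moment constraints force $-\E_{p(x)}\brackets{\log g(x)}=\frac{1}{2}+\log\sqrt{2\pi\sigma^2}=H(g)$, giving $H(p)\leq H(g)$ with equality iff $p=g$. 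This is cleaner than the Lagrange-multiplier route and correctly isolates where the hypothesis is used.

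However, even granting that theorem, your proposal does not close the chain the Fact actually asserts. The full justification needs three more links that you do not supply: (i) the empirical premise that practitioners typically possess no constraint beyond finite mean and variance, which is a sociological observation and not provable; (ii) the Maximum Entropy Principle (\reftheo{theo:maxent}), which is the normative bridge from ``the Gaussian maximizes entropy under these constraints'' to ``therefore one \emph{should} use the Gaussian'' --- without it, maximizing entropy is just one modeling choice among many; and (iii) the computation in \refeq{eq:gaussian-nll} showing that the Gaussian NLL is a shifted and scaled square error, which is what turns ``use Gaussians'' into ``use square errors.'' So treat your argument as a correct proof of one supporting lemma rather than of the stated Fact.
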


\begin{fact}
 Practitioners \textbf{must} choose the right distribution family based on the domain knowledge.
 Choose it wisely because it decides the loss function.
 \textbf{Don't do random hacks.}
\end{fact}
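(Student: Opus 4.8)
The plan is to treat this \textbf{Fact} not as advice but as a precise claim about a deterministic map, prove that the map (distribution family) $\mapsto$ (loss function) is well defined, and then obtain the two normative clauses ("based on domain knowledge" and "don't do random hacks") as corollaries of results already established in \refsec{sec:ml} and \refsec{sec:loss}. The substantive core to be proved is the middle clause, "it decides the loss function"; the other two clauses are consequences once that map and the Maximum Entropy Principle are in place.

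First I would make "decides the loss function" precise. By \refconv{conv:nll}, the quantity ML minimizes is the negative log likelihood $-\log p(x)$, so the loss is literally $-\log$ composed with the chosen density/mass $p$. Hence, once a parametric family $p_\theta$ is fixed, substituting its closed form into $-\log$ produces a unique functional expression in the predicted parameters $\theta$, with no remaining freedom; the functional form is determined, and only the way one predicts $\theta$ is left open. This is exactly the content of \reffact{fact:loss}, which I would restate as: the assignment from a family to its NLL is a genuine function, so selecting the family fixes the loss. The two computations already present — the Gaussian giving the scaled, shifted square error in \refeq{eq:gaussian-nll}, and the Laplace giving the scaled absolute error — serve as witnesses that \emph{distinct} families yield \emph{distinct} losses, so the map is non-vacuous and the choice genuinely matters.

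Next, for "based on domain knowledge," I would invoke the Maximum Entropy Principle (\reftheo{theo:maxent}): domain knowledge is encoded as the constraint set, and the distribution one \emph{should} pick is the maximum-entropy member satisfying those constraints. Composing with the previous step yields the pipeline $(\text{domain knowledge}) \to (\text{constraints}) \to (\text{max-entropy family}) \to (\text{NLL loss})$, so the loss is a deterministic image of the domain assumptions. A mismatched family then silently carries unintended structure (the excess assumptions that a non-maximum-entropy or wrong-support distribution encodes relative to the true constraints), i.e.\ one optimizes a different objective than intended — the precise sense in which the family must be chosen "wisely." Finally, "don't do random hacks" I would argue by contraposition against the soundness result of \refsec{sec:ml}: an arbitrary hand-picked loss $\loss(x,\theta)$ need not equal $-\log p(x)$ for any normalizable $p$, so it corresponds to no coherent probabilistic model, and the soundness/completeness guarantees proved only for the objective $\argmax_{p}\E_{q(x)}p(x)$ of \refdef{def:ml} no longer apply.

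The main obstacle is not any single calculation but that the statement is normative advice rather than a formal proposition: the real work lies in fixing a precise reading under which it becomes provable. Within that reading, the one technical subtlety I must handle carefully is ensuring the additive normalizer (e.g.\ the $\log\sqrt{2\pi\sigma^2}$ term in \refeq{eq:gaussian-nll}, routinely dropped when $\sigma$ is held fixed) is counted as part of "the loss," since otherwise the map from family to loss is ill defined exactly when the predicted parameters include the scale; only with the full NLL does the deterministic-map claim hold without exception.
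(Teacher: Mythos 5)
Your reading matches the paper's own justification: the paper supports this Fact not with a formal proof but with exactly the chain you reconstruct --- \reffact{fact:loss} (the loss is the NLL of the chosen family), the Gaussian/Laplace computations as witnesses that different families yield different losses, the Maximum Entropy Principle (\reftheo{theo:maxent}) for encoding domain knowledge, and the soundness discussion of \refsec{sec:ml} for rejecting ad-hoc losses. Your added care about the normalizer term (dropped only as the deliberate simplification discussed in \refsec{sec:point}) is consistent with the paper rather than a departure, so this is essentially the same approach.
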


\begin{theo}
 Laplace distribution $L(\mu,b)$ is the maximum entropy distribution $p(x)$ for
 $x\in\R$ with a finite mean $\mu=\E_{p(x)}\brackets{x}$ and a finite $b=\E_{p(x)}\brackets{|X-\mu|}$. \cite{kotz2001laplace}
\end{theo}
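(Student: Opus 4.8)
The plan is to mirror the Gaussian argument and establish the claim via the non-negativity of the KL divergence (\refdef{def:kl}), avoiding any calculus of variations. Let $q(x)=L(\mu,b)(x)=\frac{1}{2b}e^{-|x-\mu|/b}$ denote the candidate Laplace density, and let $p(x)$ be an arbitrary density on $\R$ satisfying the two constraints $\E_{p(x)}\brackets{x}=\mu$ and $\E_{p(x)}\brackets{|x-\mu|}=b$. The goal is to show $H(p(\rx))\leq H(q(\rx))$ with equality exactly when $p=q$.

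First I would compute the self-information of the Laplace density, $-\log q(x)=\log(2b)+\frac{|x-\mu|}{b}$. The key step is to observe that its expectation under \emph{any} constraint-satisfying $p$ is pinned entirely by the constraints:
\begin{align*}
 \E_{p(x)}\brackets{-\log q(x)}
 =\log(2b)+\frac{1}{b}\E_{p(x)}\brackets{|x-\mu|}
 =\log(2b)+1.
\end{align*}
Since the identical computation holds when $p=q$, the right-hand side also equals $H(q(\rx))$; the crucial point is that $\E_{p(x)}\brackets{-\log q(x)}=H(q(\rx))$ regardless of which $p$ we pick, as long as it meets the constraints. Here the normalization of $p$ absorbs the constant $\log(2b)$, the mean-absolute-deviation constraint absorbs the linear term, and the mean constraint $\E_{p(x)}\brackets{x}=\mu$ fixes the center about which $|x-\mu|$ is measured.

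Then I would apply KL non-negativity directly. Writing
\begin{align*}
 0\leq\KL(p(\rx)\Mid q(\rx))
 =\E_{p(x)}\brackets{-\log q(x)}-H(p(\rx))
 =H(q(\rx))-H(p(\rx))
\end{align*}
yields $H(p(\rx))\leq H(q(\rx))$ at once, with equality holding precisely when $p=q$ by the equality condition of \refdef{def:kl}. This identifies $L(\mu,b)$ as the maximum entropy distribution among all distributions on $\R$ with the prescribed mean and mean absolute deviation, which by \reftheo{theo:maxent} is the optimal distribution to use, as required.

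I expect the only real obstacle to be bookkeeping rather than conceptual: one must ensure that $\E_{p(x)}\brackets{-\log q(x)}$ is finite and that the constraint set is matched \emph{exactly} to the two terms of $-\log q(x)$, so that this cross term collapses to $H(q(\rx))$ for every admissible $p$. One subtlety worth stating explicitly is that the mean constraint is needed not for the entropy comparison itself but to pin down $\mu$ so that $p$ and $q$ are centered identically; without it the absolute-deviation term would be measured about different points and the cancellation would fail. This is the same mechanism that makes the Gaussian case work, with the quadratic exponent $(x-\mu)^2$ replaced by the piecewise-linear $|x-\mu|$.
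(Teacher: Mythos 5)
Your argument is correct: the Gibbs-inequality step $0\leq\KL(p\Mid q)=\E_{p(x)}\brackets{-\log q(x)}-H(p(\rx))$, combined with the observation that $-\log q(x)=\log(2b)+|x-\mu|/b$ is an affine function of exactly the constrained statistic so that its $p$-expectation collapses to $\log(2b)+1=H(q(\rx))$ for every admissible $p$, is the standard proof, and your remark about the role of the mean constraint (it only serves to make the centering point $\mu$ common to $p$ and $q$) is accurate. For comparison purposes, note that the paper itself supplies no proof of this theorem at all --- it is stated bare with a citation to Kotz et al.\ (2001), just as the analogous Gaussian maximum-entropy theorem is stated without proof --- so your derivation is strictly more than the paper offers rather than a divergence from it.
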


\begin{ex}
Use absolute errors = Laplace distribution \textbf{if and only if}
the model designer can expect anomalies in the dataset but
a finite $\E_{p(x)}\brackets{|X-\mu|}$ exist.
The resulting loss function (absolute errors) has a less steep loss curve
that makes the training robust to anomalous inputs.
\end{ex}

\begin{ex}
Sometimes even a finite mean doesn't exist.
Consider Cauchy distribution $C(x_0,\gamma)$ with a \emph{median} $x_0$.
\end{ex}

Each maximum entropy distribution is specific to each assumption.
For example, the maximum entropy distribution for positive reals is Gamma distribution $\Gamma(k,\theta)$.
In other words, if you assume a variable to be positive, you should not use a Gaussian.
Distribution zoo (\refsec{sec:zoo}) contains a list of maximum entropy distributions.

I personally have many objections against the current usage of statistical modeling
in the symbolic AI community / planning and scheduling community
mainly due to the violation of this established principle.
However, I would like to leave this topic for another occasion.

\subsection{Point Estimate and Mean Square Errors}
\label{sec:point}

The NLL of a Gaussian (\refeq{eq:gaussian-nll}) is already close to the mean square error that you may have seen often,
but it is still different from just a square error:
It has a scale $1/2\sigma^2$ and an offset $\log \sqrt{2\pi\sigma^2}$.
Why don't people use the NLL?
Is NLL better or is mean square error better?

\begin{fact}
 The correct characterization is NLL.
 Square error is a hack/simplification derived from NLL.
 (But see the note at the end of this section for alternative explanations.)
\end{fact}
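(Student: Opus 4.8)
The plan is to establish the Fact in two parts: first that NLL is the objective the ML framework actually prescribes, and second that the familiar mean square error is recovered from it only as a special-case reduction. For the first part I would simply invoke \refdef{def:ml} together with \refconv{conv:nll}: the population objective is $\argmin_{p(\rx)} \E_{q(x)}\brackets{-\log p(x)}$, and this is the very quantity whose optimum was shown sound and complete throughout the section. So ``NLL is the correct characterization'' is not a theorem to be discovered but the defining objective restated; the substantive content of the Fact is the \emph{derivation} of square error as a downstream simplification.

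For the second part I would specialize the model to a Gaussian, $p(x)=\N(x\mid\mu,\sigma)$, with the learner predicting the location $\mu$. Starting from the per-example NLL in \refeq{eq:gaussian-nll}, namely $-\log p(x)=\frac{(x-\mu)^2}{2\sigma^2}+\log\sqrt{2\pi\sigma^2}$, I would hold $\sigma$ fixed (treated as a hyperparameter rather than an estimated quantity) and observe two things. The additive offset $\log\sqrt{2\pi\sigma^2}$ does not depend on $\mu$, so it drops out of any $\argmin$ over $\mu$; and the prefactor $\frac{1}{2\sigma^2}$ is a strictly positive constant, so scaling by it leaves the location of the minimizer unchanged. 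Taking the expectation over $q(x)$ and using linearity then gives
\begin{align*}
 \argmin_{\mu} \E_{q(x)}\brackets{-\log p(x)}
 = \argmin_{\mu} \E_{q(x)}\brackets{(x-\mu)^2},
\end{align*}
which is exactly the (empirical) mean square error. This makes precise the claim that square error is a ``hack/simplification derived from NLL'': it is the NLL of a fixed-variance Gaussian with the variance-dependent scale and offset discarded.

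The step I expect to be the main obstacle is not a computation but pinning down \emph{in what sense} the reduction is lossy, i.e.\ why NLL is strictly more faithful. I would close the argument by exhibiting the case where $\sigma$ is itself predicted: then the offset $\log\sqrt{2\pi\sigma^2}$ and the scale $\frac{1}{2\sigma^2}$ both depend on the learned parameters and couple the estimation of $\mu$ and $\sigma$, so dropping them changes the optimum and the equivalence above fails. Consequently square error cannot in general replace NLL, and the two coincide only under the fixed-variance Gaussian assumption; this asymmetry is what elevates NLL to ``the correct characterization.'' Any deviation from the Gaussian family (e.g.\ the Laplace or Gamma cases discussed earlier) would likewise produce a different loss from the same NLL recipe, reinforcing that the loss is an \emph{output} of the distributional choice rather than an independent primitive.
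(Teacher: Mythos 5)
Your proposal is correct and follows essentially the same route as the paper: the paper likewise takes the Gaussian NLL of \refeq{eq:gaussian-nll}, fixes $\sigma$ to an arbitrary constant so that the scale $1/2\sigma^2$ and offset $\log\sqrt{2\pi\sigma^2}$ can be dropped without moving the optimum, and averages over $q(i)=1/|\X|$ to recover the mean square error, with the lossiness of the reduction pinned to the distributional-estimation case where $\sigma$ is also predicted. Your version merely makes the argmin-invariance step explicit, which the paper leaves informal.
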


\begin{fact}
 Practitioners often don't bother with the variance.
 Thus they set $\sigma$ to an arbitrary constant and omit it from the loss function,
 resulting in a square error $(x-\mu)^2$.
 By averaging the NLL over $q(i)=1/|\X|$, we obtain a mean square error.
\end{fact}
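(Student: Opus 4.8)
The plan is to begin from the Gaussian NLL already derived in \refeq{eq:gaussian-nll} and proceed in two stages matching the two sentences of the statement: first show that, once $\sigma$ is frozen to a constant, minimizing the NLL over $\mu$ is equivalent to minimizing the bare square error; then expand the ML objective of \refdef{def:ml} under the empirical distribution of \refconv{conv:data} to recover the \emph{mean} square error. The only ingredients required are (i) that affine reparameterizations with a positive slope preserve the $\argmin$, exactly the monotonicity idea behind \refconv{conv:nll}, and (ii) that the ML loss is an average of per-sample NLLs weighted by $q(i)=1/|\X|$.

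First I would handle the reduction to $(x-\mu)^2$. From \refeq{eq:gaussian-nll} the per-sample loss is
\begin{align*}
 -\log p(x)=\frac{1}{2\sigma^2}(x-\mu)^2+\log\sqrt{2\pi\sigma^2}.
\end{align*}
When $\sigma$ is held fixed, the coefficient $\frac{1}{2\sigma^2}$ is a positive constant and $\log\sqrt{2\pi\sigma^2}$ is an additive constant, both independent of $\mu$. Since $\argmin_\mu \parens{a\, g(\mu)+b}=\argmin_\mu g(\mu)$ for any $a>0$ and any $b$, the minimizer in $\mu$ of the NLL coincides with that of $(x-\mu)^2$. This is precisely why the discarded scale and offset can be omitted: they cannot move the optimum, so the square error is a legitimate optimization surrogate for the fixed-variance Gaussian NLL.

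For the second sentence I would expand the objective. By \refdef{def:ml} together with \refconv{conv:data}, the quantity minimized is
\begin{align*}
 \E_{q(x)}\brackets{-\log p(x)}=\sum_i q(i)\parens{-\log p(x_i)}=\frac{1}{|\X|}\sum_i \parens{-\log p(x_i)},
\end{align*}
where the Dirac deltas $q(x|i)=\delta(\rx=x_i)$ collapse the expectation to evaluation at each $x_i$ and $q(i)=1/|\X|$. Substituting the per-sample square error from the previous stage, with $\mu_i$ the model's prediction for sample $i$, turns this into $\frac{1}{|\X|}\sum_i (x_i-\mu_i)^2$, which is exactly the mean square error.

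The statement is essentially bookkeeping, so there is no deep obstacle; the single point that needs care is the \emph{scope} of the reduction. Dropping $\log\sqrt{2\pi\sigma^2}$ and the factor $\frac{1}{2\sigma^2}$ is sound only because $\sigma$ is treated as a fixed constant rather than a learned parameter: were $\sigma$ optimized jointly with $\mu$, the $\log\sqrt{2\pi\sigma^2}$ term would couple the two and the square error would no longer share the same optimum. I would therefore state explicitly that the claimed equivalence is an $\argmin$-equivalence in $\mu$ at fixed $\sigma$, not a literal equality of loss values, matching the paper's own remark that the square error is a ``hack/simplification derived from NLL.''
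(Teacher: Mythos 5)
Your proposal is correct and follows essentially the same route as the paper: it starts from the Gaussian NLL in \refeq{eq:gaussian-nll}, freezes $\sigma$ so the scale $\frac{1}{2\sigma^2}$ and offset $\log\sqrt{2\pi\sigma^2}$ become optimum-preserving constants (the same monotonicity reasoning as \refconv{conv:nll}), and then averages the per-sample losses under $q(i)=1/|\X|$ from \refconv{conv:data} to recover the mean square error. Your explicit caveat that the reduction is an $\argmin$-equivalence valid only at fixed $\sigma$ is a welcome sharpening that matches the paper's own remark on distributional estimation, where $\mu$ and $\sigma$ are jointly optimized and the full NLL must be retained.
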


In many machine learning applications,
there is often no need to predict the variance.
A trained model returns a single most-likely value
rather than a distribution over possible values.
The value returned by such a model is called a \emph{point estimate}:
When we model the output distribution as a Gaussian $\N(x\mid \mu,\sigma)$,
we predict $\mu$, the point where the probability is the largest (\emph{mode}).

Given a distribution, a point estimate can use any of the statistics,
including the mean, the median, the mode, or even a certain top quantile.
Mean/median/mode are identical in Gaussian distributions,
but this is not always the case with other distributions.

\begin{conv}
 A machine learning model is performing a \emph{point estimation}
 if it returns a single representative value (statistic) of a distribution
 instead of the distribution itself.
\end{conv}

\begin{conv}
 \emph{Maximum A-Posteriori} (MAP) estimate is a point estimate using the \emph{mode}.
\end{conv}

\begin{ex}
 The $\mu$ of a Gaussian is a point estimate.
\end{ex}

\begin{ex}
 The $\mu$ of a Laplace is a point estimate.
\end{ex}

\begin{ex}
 The $\mu$ of a Gaussian is a MAP estimate because the mean and the mode of a Gaussian are the same.
\end{ex}

\begin{ex}
 The top 95\% quantile of a Gaussian is a point estimate but is not a MAP estimate.
\end{ex}

Finally,
we can obtain another explanation from Hanlon's razor (never attribute malice to incompetence):
Many ML practitioners are simply not specialized in statistics,
thus are cargo-culting the statisticians who use $(x-\mu)^2$ without understanding the details.
This is also not helped by the fact that many ML textbooks
(e.g., cheap textbooks with titles like ``Machine Learning 101 using Excel'')
use square fitting as the first material to try,
without explaining its theoretical background.
\textbf{Do not fall into this trap.}

\paragraph{Further notes:}

While Frequentist approaches may appear more generous about the choice of loss functions,
only a subset of methods and losses have proven theoretical guarantees (PAC),
which is discussed in \refsec{sec:frequentist}.

In a \emph{distributional estimation} of Gaussians,
the model predicts two values $\mu$ and $\sigma$.
They are simultaneously optimized using the NLL without omitting $\sigma$.
This is useful for quantifying the \emph{uncertainty}
the model has on its own prediction \cite{kendall2017uncertainties}.
See a longer discussion on the uncertainty in \refsec{sec:uncertainty}.

\section{Generative / Statistical Modeling}
\label{sec:modeling}

Modern machine learning tasks often involve tasks beyond a simple prediction.
Such tasks, e.g., action model learning, image generation, multi-modal transfer, reinforcement learning, etc.,
require multiple interdependent latent variables.
With latent variables, things are not as straightforward as before.
However,
few authors of \dl literature attempt to justify their training schemes
with theoretical or statistical clarity.
This often results in an unreliable, irreproducible system that
requires heavy hyperparameter tuning and ad-hoc loss functions.
Finally, the lack of consistent \emph{procedure} for constructing a \dl system
resulted in a common criticism that its development is like \emph{alchemy}.

In order to make \dl less of alchemy,
this section provides a simple, principled guide to building a complex but statistically justified system yourself.
Keep in mind that unsound methods are theoretically fragile or incorrect
because they lack the soundness (\refsec{sec:ml-is-sound}).
\emph{Avoid ad-hoc hacks that make no sense!}
\footnote{
An irony is that even such an ad-hoc method often happens to work empirically
due to the extreme flexibility of neural networks,
the \emph{best-effort} nature of the task where correctness is less important, and
the culture of \emph{cherry-picking}.
}

\begin{conv}
 \emph{Statistical Modeling} is a general scientific procedure
 which roughly consists of the following steps \cite[section 1.1]{gelman1995bayesian}:
 \begin{enumerate}
  \item List observable (and labeled) variables.
        \label{line:statistical-modeling-step1}
  \item Hypothesize a list of latent variables for the mechanism that you believe to be behind the observations.
        \label{line:statistical-modeling-step2}
  \item Hypothesize the causal dependencies between the variables to specify the mechanism,
        and factorize the generative distribution based on the dependency.
        \label{line:statistical-modeling-step3}
  \item Hypothesize what distribution each variable should follow, including the priors.
        This is done as follows:
        \begin{enumerate}
         \item First, choose the distribution family based on Maximum Entropy Principle (\reftheo{theo:maxent}),
               e.g., $\N$.
               To find the right one for your case, consult Distribution Zoo (\refsec{sec:zoo}).
         \item Second, choose the parameters, e.g., $\mu$, $\sigma$ of $\N(\mu,\sigma)$.
               For conditional distributions, they are often
               outputs of trainable functions that take dependent variables,
               e.g., using $p(\rx|\rz)=\N(\mu=f(\rz),\sigma)$ where $f$ is a neural network.
               For distributions without dependent variables, assign constants (=prior distribution).
        \end{enumerate}
        \label{line:statistical-modeling-step4}
  \item Using data, test the hypothesis, i.e., your hypothetical latent mechanism,
        by training a system with a sound method and evaluate the result with test data.
        How to perform it efficiently is beyond the scope of this section.
        See \refsec{sec:variational} and \refsec{sec:likelihood-free}.
        \label{line:statistical-modeling-step5}
 \end{enumerate}
 \label{conv:statistical-modeling}
\end{conv}

The focus is on the first 4 items, which provide a \emph{specification} for the mechanism.
The dependencies (item 3) describe the structure of the mechanism,
and the distributions (item 4) describe the nature of the structure,
e.g., categorical with $\cat$, boolean with $\bern$, continuous $\R$ with $\N$,
continuous positive accumulation with $\Gamma$, and so on.
Consult Distribution Zoo (\refsec{sec:zoo}) for this choice.

Readers familiar with mathematical modeling (e.g., SAT, MAXSAT, MILP, CSP, SMT, ASP)
would easily see the similarity between statistical modeling and those paradigms.
Both first define a list of variables with their types, then define constraints over the variables.

\begin{conv}
 A \emph{statistical model} refers to a set of statements/assumptions made in item 1-4.
 The term ``model'' here is more than what is implied in Convention \ref{conv:model}.
\end{conv}

\begin{conv}
 If a statistical model mainly concerns with
 a generative distribution, it is called a \emph{generative model}.
\end{conv}

\begin{conv}
 Dependencies between variables defined in step 3 can be seen as a graph $G=(V,E)$
 whose nodes $V$ are variables and edges $E$ are dependencies.
 If such a graph is shown, it is often called a \emph{graphical model},
 a \emph{probabilistic graphical model} (PGM), or
 a \emph{structured probabilistic model}.
\end{conv}

\begin{conv}
 The graph typically forms a directed acyclic graph (DAG).
 Such a model is called a \emph{Bayesian network} or a \emph{directed graphical model}.
\end{conv}

\begin{conv}
 In a graphical model,
 stochastic variables are shown in circles;
 deterministic variables in squares;
 repetitions in plates;
 observable variables in gray nodes; and
 latent variables in white nodes.
\end{conv}

\begin{ex}
 \label{ex:vae}
 Variational AutoEncoder \cite[VAE]{kingma2014semi} is a simple graphical model (\refig{fig:graphical-model-vae}).
 The goal of training a VAE is to obtain a compact latent representation of images.
 Following the statistical modeling,
 \begin{enumerate}
  \item Let $\rx$ be an image.
  \item Let $\rz$ be a latent vector.
  \item Assume that $\rx$ depends only on $\rz$.
        Thus the generative distribution $p(\rx)$ is factored into:
        \[
         p(\rx)=\sum_\rz p(\rx,\rz)=\sum_\rz p(\rx|\rz)p(\rz).
        \]
  \item Assign $p(\rz)=\N(0,1)$, $p(\rx|\rz)=\N(f(\rz),\sigma)$, where $f$ is a decoder neural network and $\sigma$ is arbitrary.
 \end{enumerate}
\end{ex}

\begin{ex}
 Hidden Markov Model \cite{juang1991hidden} is a classic statistical model (\refig{fig:graphical-model-hmm})
 often used for speech modeling.
 It assumes that each latent state depends on the previous latent state.
 In this example, I depict only a single step, but it is originally unrolled for a sequence.
 Following the statistical modeling,
 \begin{enumerate}
  \item Let $\rx^0$ and $\rx^1$ be a pair of observations of the predecessor and the successor states (e.g., speech data).
  \item Let $\rz^0$ and $\rz^1$ represent their respective latent states.
  \item We assume that $\rx^0$ depends only on $\rz^0$,
        $\rx^1$ depends only on $\rz^1$,
        and $\rz^1$ depends only on $\rz^0$.
        Thus the generative distribution $p(\rx^0,\rx^1)$ is factored into:
        \[
        p(\rx^0,\rx^1)=\sum_{\rz^0,\rz^1} p(\rx^0|\rz^0)p(\rx^1|\rz^1)p(\rz^1|\rz^0)p(\rz^0).
        \]
  \item Assign
        $p(\rz^0)=\N(0,1)$,
        $\forall t\in \braces{0,1}; p(\rx^t|\rz^t)=\N(f_1(\rz^t),\sigma)$,
        $p(\rz^1|\rz^0)=\N(f_2(\rz^0),f_3(\rz^0))$,
        where $f_1, f_2, f_3$ are neural networks and $\sigma$ is arbitrary.
 \end{enumerate}
\end{ex}

\begin{ex}
 Latplan \cite{Asai2022} learns discrete latent states and latent actions from images
 (\refig{fig:graphical-model-latplan}).
 In addition to HMMs, it has a latent variable of actions that affect $\rz^1$.
 \begin{enumerate}
  \item Let $\rx^0$ and $\rx^1$ be a pair of images.
  \item Let $\rz^0$ and $\rz^1$ represent their respective latent states.
        Let $\ra$ represent an action.
  \item We assume that
        $\rx^0$ depends only on $\rz^0$,
        $\rx^1$ depends only on $\rz^1$,
        $\rz^1$ depends on $\rz^0$ and $\ra$ (action affects the states),
        and $\ra$ depends on $\rz^0$ (due to preconditions, $\rz^0$ affects which action is possible).
        Thus the generative distribution $p(\rx^0,\rx^1)$ is factored into:
        \[
        p(\rx^0,\rx^1)=\sum_{\rz^0,\rz^1,\ra} p(\rx^0|\rz^0)p(\rx^1|\rz^1)p(\rz^1|\rz^0,\ra)p(\ra|\rz^0)p(\rz^0).
        \]
  \item (Omitted: beyond the scope of this section.)
 \end{enumerate}
\end{ex}

\begin{figure*}[tb]
 \centering
 \hfill
 \begin{subfigure}{0.3\textwidth}
  \centering
  \includegraphics[height=4em]{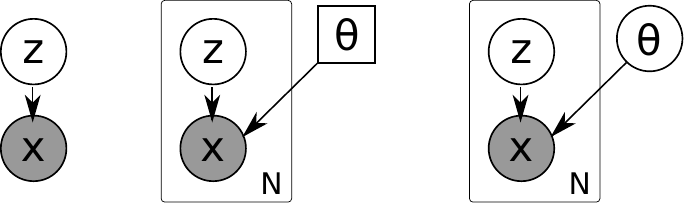}
  \caption{Variational Autoencoder.}
  \label{fig:graphical-model-vae}
 \end{subfigure}
 \hfill
 \begin{subfigure}{0.3\textwidth}
  \centering
  \includegraphics[height=5em]{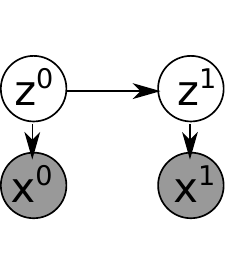}
  \caption{Hidden Markov Model (single step).}
 \label{fig:graphical-model-hmm}
 \end{subfigure}
 \hfill
 \begin{subfigure}{0.3\textwidth}
  \centering
  \includegraphics[height=4em]{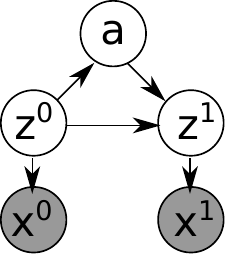}
  \caption{A latent dynamics model in Latplan.}
 \label{fig:graphical-model-latplan}
 \end{subfigure}
 \hfill
 \caption{
 \textbf{(a)}
 An observed variable $\rx$ and a latent variable $\rz$ are in a gray and a white node.
 In the center, you see a \emph{plate} notation, which indicates that we sample $\rx$ from $\rz$ independently $N$ times.
 You also see a parameter $\theta$ in $p_\theta(\rz|\rx)$ which is fixed over multiple sampling.
 $\theta$ is in a square node because it is a deterministic parameter stored in a neural network decoder as its weights.
 In Full Bayesian models (\refsec{sec:parameters}), weights are stochastic and sampled as shown on the right \cite{jospin2022hands}.
 \textbf{(b)} Hidden Markov Model (single time step), where states and actions are latent variables.
 Usually, HMM is unrolled into a sequence.
 \textbf{(c)} Latplan's latent action model (single time step), where states and actions are latent variables.
 }
\end{figure*}

Now that I have shown several generative models (focused on directed graphical models),
I describe how to train them next.
While pushing the envelope of available methods is an interesting topic,
I focus on two groups of training methods in the following section.

\section{Variational Method}
\label{sec:variational}

To maximize $\E_{q(x)}p(x)$, machine learning algorithms must compute the generative model $p(x)$,
which requires computing the integral/sum over the latent variables, e.g., $p(x)=\int p(x|z)p(z)dz$.
This is called Probabilistic Inference (PI):
\begin{defi}[Probabilistic Inference]
 Compute $p(\rx=x)$.
\end{defi}
This integration is ``intractable'' when the latent variables are high-dimensional.
To my surprise, although most textbooks mention this,
they rarely mention its exact complexity class.
PI is \#\textbf{P}-complete \cite{dagum1993approximating,roth1996hardness,dagum1997optimal}
which is at least as hard as \textbf{NP}-complete.
\#\textbf{P}-completeness is shown by a reduction to \#SAT (counting SAT) \cite{valiant1979complexity},
a problem of counting all solutions to a CNF formula,
by assuming all random variables are boolean.
Informally, \#SAT is compiled to PI as follows:
Given a \#SAT instance,
we convert each variable $v_i$, each clause $C_j$, and the satisfiability of the formula $T$,
as a boolean random variable.
Given a random assignments, $p(T=\text{true})$ equals to $\frac{\#\text{solutions}}{\#\text{all assignments}}$,
thus PI can solve a \#SAT instance.

To maximize $\E_{q(x)}p(x)$,
each iteration of machine learning must solve a PI, a \#\textbf{P}-complete problem.
To avoid this complexity, the \emph{variational method} instead computes its lower bound approximation (\emph{variational inference}).
The most basic example of a variational method is a VAE \cite{kingma2014semi}.
Variational methods use so-called \emph{variational distributions} to perform the approximation.
Like many other statistical conventions,
variational distributions are typically explained with interpretations rather than with formal definitions.
I provide the definitions below:

\begin{conv}
 The \emph{first principle derivation} is a derivation
 that uses the probability axioms (e.g., \refdefs{defi:joint}{defi:conditional}) only.
\end{conv}

\begin{conv}
 A distribution is \emph{exact}
 if it is derived from a generative model with first principles.
 \emph{Variational} otherwise.
\end{conv}

\begin{ex}
 Given $p(\rz)$ and $p(\rx|\rz)$,
 $p(\rz|\rx)=\frac{p(\rx|\rz)p(\rz)}{p(\rx)}$ (\reftheo{theo:bayes}) is exact.
 Any $q(\rz|\rx)\not=p(\rz|\rx)$ is variational.
\end{ex}

\begin{conv}
 A posterior distribution is often called a \emph{true posterior distribution} if it is exact.
 Confusingly \textbf{this does not imply} that it is a \textbf{ground truth} posterior distribution.
\end{conv}

\begin{conv}
 A \emph{variational model} is a set of variational distributions.
\end{conv}

\begin{defi}[Variational Inference]
 Given a variational model, compute a lower bound of log probability $\elbo(x)\leq \log p(\rx=x)$ in polynomial time.
\end{defi}
 The bound is typically called an \emph{Evidence Lower BOund (ELBO)} or a \emph{variational lower bound}.

\begin{conv}[Amortized Inference]
VIs used to train a separate copy of a variational model for each input data.
Modern methods \cite{kingma2014semi,rezende2014stochastic}
use \emph{amortized inference} which maintains
a single variational model for multiple observations.
For example, a VAE \cite{kingma2014semi} has a single encoder neural network $q(z|x)$ shared by all observation $x$.
\end{conv}

\subsection{Example: VAE}

Variational methods maximize $\E_{q(x)}\log p(x)$ by maximizing its ELBO.
Let me demonstrate a variational method performed on a VAE (\refex{ex:vae}).
Let $p^*(\rx)$ be the ground-truth distribution of $\rx$,
$p(\rx)$ be its current estimate,
$q(\rx)$ be its dataset distribution,
and $q(\rz|\rx)$ be its variational distribution, which is represented by an \emph{encoder} neural network
that maps an image to a latent state.
The design of $q(\rz|\rx)$ is arbitrary and can be done separately from the generative model.
It is considered an approximation of the \emph{true posterior} $p(\rz|\rx)$.

Using a variational posterior $q(\rz|\rx)$,
it derives the lower bound of the objective as follows:
\begin{align}
 &\text{ML Task:} \argmax_p \E_{q(x)} \log p(x)\\
 &\log p(x)
 = \log \sum_z p(x|z) p(z)                                            \label{eq:vae-0}\\
 &= \log \sum_z \green{q(z|x)} p(x|z) \frac{p(z)}{\green{q(z|x)}}                      \label{eq:vae-1} \\
 &= \red{\log} \parens{\blue{\E_{q(z|x)}} \brackets{p(x|z) \frac{p(z)}{q(z|x)}}}    \label{eq:vae-2} \\
 &\geq \blue{\E_{q(z|x)}} \brackets{\red{\log} \parens{p(x|z) \frac{p(z)}{q(z|x)}}} \label{eq:vae-3} \\
 &= \E_{q(z|x)} \log p(x|z) - \E_{q(z|x)} \log \frac{q(z|x)}{p(z)} \label{eq:vae-4} \\
 &= \E_{q(z|x)} \log p(x|z) - \KL(q(z|x)\Mid p(z)).                \label{eq:vae-5}
\end{align}
\refeqs{eq:vae-0}{eq:vae-1} simply multiplies $1=\frac{\green{q(z|x)}}{\green{q(z|x)}}$.
\refeqs{eq:vae-1}{eq:vae-2} is the definition of expectation (\refdef{def:expectation}).
\refeqs{eq:vae-2}{eq:vae-3} used Jensen's inequality (\reftheo{theo:jenssen}) that exchanges \blue{expectation} and \red{logarithm}.
\refeqs{eq:vae-4}{eq:vae-5} is a definition of KL divergence (\refdef{def:kl}).
When $q(\rz|\rx)$ is expressive enough and when the ELBO is maximized,
then $q(\rz|\rx)=p(\rz|\rx)$.
\refsec{sec:expectation} discusses the details of how to actually \emph{compute} each term in \refeq{eq:vae-5}
that includes expectations $\E$ and $\KL$.
Finally, I mention an autoencoder:

\begin{fact}
A non-variational autoencoder loss lacks the $\KL$ in \refeq{eq:vae-5}.
It is unsound (\refsec{sec:ml}) because it is not guaranteed to maximize $\E_{q(x)} p(x)$.
\textbf{Do not use it.}
\end{fact}

\subsection{A General Guide for Variational Distributions}

As mentioned above, the choice of variational distributions is arbitrary.
This gives us the flexibility to add as many heuristic design decisions into their neural networks as you wish
without sacrificing the theoretical integrity.
One way to see variational distributions is to \emph{attach} heuristic guidance to each random variable
based on the domain knowledge.
A VAE assumes that $z$ could be encoded from $x$ by a particular (e.g. Convolutional) neural network.
This is why variational distributions are sometimes called \emph{guides} in
\emph{automated variational inference} and \emph{probabilistic programming language}
frameworks \cite{goodman2012church,wingate2013automated,ranganath2014black}.

Using this intuition, the general strategy for designing variational distributions can be described as follows.
For each latent variable $\rz$:
\begin{enumerate}
 \item $\rz$ should have a single generative distribution $p(\rz|\ldots)$.
       You must already have one made during the statistical modeling.
       ``$\ldots$'' can be empty, in which case $p(\rz)$ is a fixed prior distribution.
 \item $\rz$ should have \emph{at least} one variational distribution $q(\rz|\ldots)$
       Its dependency ``$\ldots$'' does not have to match those of $p(\rz|\ldots)$.
       You can have more than one $q(\rz|\ldots)$ (there is no reason to restrict it to a single distribution),
       and their dependencies may also differ from one another.
 \item The variational distribution $q(\rz|\ldots)$ must be in the same distribution family as $p(\rz|\ldots)$.
       This typically gives the KL divergence
       $\KL(q(\rz|\ldots)\Mid p(\rz|\ldots))$ 
       an analytical form.
 \item Design $q(\rz|\ldots)$ so that they are ``surer/pointier/more informative'' than $p(\rz|\ldots)$
       so that it serves as a guide.
       If possible, make $q(\rz|\ldots)$ depend on more variables than $p(\rz|\ldots)$ does,
       which will make it surer due to having more information.
\end{enumerate}

\begin{ex}
An example of item 4 can be found in Latplan \cite{Asai2022}.
The action variable $\ra$ has
$p(\ra\mid\rz_0)$, a distribution predicted from the current state,
and
$q(\ra\mid\rx_0, \rx_1)$, a distribution predicted from the images before and after the transition.
The former is intrinsically more ambiguous because it lacks access to what has actually happened.
\end{ex}

Note that
the ``guide'' analogy works only when $p(\rz|\ldots)$ is trainable.
It does not make much sense when
$p(\rz|\ldots)$ is a prior, i.e. a constant distribution such as $p(\rz)=\N(0,1)$.

To train the resulting model,
you must derive an ELBO that contains multiple KL divergences and reconstruction losses.
The next section discusses how to perform this derivation for a complex model.

\subsection{Deriving an ELBO: A General Algorithm}

While the VAE provides a nice introductory example for how to derive a lower bound,
the tutorial is not sufficient for a more complex graphical model.
Here I describe a general algorithm for deriving the ELBO for a more complex graphical model.

Let $P=\braces{p(\cdot|\cdot)\ldots}$
and $\green{Q=\braces{q(\cdot|\cdot)\ldots}}$ be a set of distributions in the generative and the variational model.
We use $\cdot$ to represent a set of random variables that we don't care (a wildcard).
$P$ and $\green{Q}$ are defined by the user as inputs.
For example, Latplan used $P=\braces{p(x_0|z_0),p(x_1|z_1),p(z_1|z_0,a),p(a|z_0),p(z_0)}$ and
$\green{Q=\braces{q(z_0|x_0),q(z_1|x_1),q(a|x_0,x_1)}}$.
Let $X$ be a set of observable (and labeled) variables, and $Z$ be a set of latent variables.
$P$ can be seen as representing a factorization of $p(X)$
obtained in the \refline{line:statistical-modeling-step3} in \refconv{conv:statistical-modeling}, i.e.,
\begin{align*}
 p(X)
 &=\sum_{Z} p(X,Z)
 =\sum_{Z} \prod_{p(\ldots)\in P} p(\ldots).
\end{align*}
For example, the factorization in Latplan is
\begin{align*}
 p(x_0,x_1)
 &=\sum_{a,z_0,z_1}p(x_0|z_0)p(x_1|z_1)p(z_1|z_0,a)p(a|z_0)p(z_0).
\end{align*}

We select a subset $\green{Q'\subseteq Q}$ so that
for all $\green{q(A|\cdot)\in Q'}$,
there is a matching $\violet{p(A|\cdot)}\in P$ of the same set of random variables $A$
(we don't care about the dependency difference).
For example, Latplan used
$\green{Q'_1}=\braces{\green{q(z_0|x_0)},\green{q(a|x_0,x_1)}}$ where
$\green{q(z_0|x_0)}$ matches $\violet{p(z_0)}$ and $\green{q(a|x_0,x_1)}$ matches $\violet{p(a|z_0)}$.
Latplan also used $\green{Q'_2}=\green{Q}$.
Note that the opposite may not hold: Not every $\orange{p(\cdot|\cdot)}\in P$ has a corresponding distribution in $\green{Q'}$.
The choice of $\green{Q'}$ splits $P$ into three disjoint subsets ($P=\violet{P_1}\cup \orange{P_2} \cup \gray{P_3}$):
$\violet{P_1}$ contains all latent distributions with a matching $\green{q}$,
$\orange{P_2}$ contains those without a matching $\green{q}$, and
$\gray{P_3}$ is a set of distributions of observed variables.
Using these subsets, the lower bound of $\log p(X)$ is obtained as follows:

\begin{align}
 &\log p(X)
  =\log \sum_{Z} \prod_{p(A|\cdot)\in \violet{P_1}\cup \orange{P_2}\cup \gray{P_3}} p(A|\cdot)
 \notag
 \\
 &=\log \sum_{Z} \prod_{p(\cdot|\cdot)\in \orange{P_2}\cap \gray{P_3}} p(\cdot|\cdot) \prod_{\violet{p(A|\cdot)\in P_1}} \green{q(A|\cdot)} \frac{\violet{p(A|\cdot)}}{\green{q(A|\cdot)}}
 \label{eq:general-elbo2}
 \\
 &=\blue{\log} \red{\E}_{\substack{\orange{p(\cdot|\cdot)\in P_2},\\ \green{q(\cdot|\cdot)\in Q'}}}
 \brackets{
 \prod_{\gray{p(\cdot|\cdot)\in P_3}} \gray{p(\cdot|\cdot)}
 \prod_{\violet{p(A|\cdot)\in P_1}}   \frac{\violet{p(A|\cdot)}}{\green{q(A|\cdot)}}}
 \label{eq:general-elbo3}
 \\
 &\geq \red{\E}_{\substack{\orange{p(\cdot|\cdot)\in P_2},\\ \green{q(\cdot|\cdot)\in Q'}}}
 \brackets{
 \blue{\log}
 \prod_{\gray{p(\cdot|\cdot)\in P_3}} \gray{p(\cdot|\cdot)}
 \prod_{\violet{p(A|\cdot)\in P_1}} \frac{\violet{p(A|\cdot)}}{\green{q(A|\cdot)}}}
 \notag
 \\
 &= \E_{\substack{\orange{p(\cdot|\cdot)\in P_2},\\ \green{q(\cdot|\cdot)\in Q'}}}
 \brackets{
 \sum_{\gray{p(\cdot|\cdot)\in P_3}} \hspace{-0.7em} \log \gray{p(\cdot|\cdot)}
 + \hspace{-1em}\sum_{\violet{p(A|\cdot)\in P_1}} \hspace{-0.7em} \log \frac{\violet{p(A|\cdot)}}{\green{q(A|\cdot)}}}
 \label{eq:general-elbo5}
\end{align}

In \refeqs{eq:general-elbo2}{eq:general-elbo3}, note that the variables in $\orange{P_2}\cup \green{Q'}$ is $Z$.
\refeq{eq:general-elbo5} is
a sum of the reconstruction losses for the observables in $\gray{P_3}$ and
the $\KL$s
(or equivalents
\footnote{
For example,
$\E_{q(z|x)}\E_{q(y|z)}\log \frac{q(z|x)}{p(z|y)}$ is not a KL divergence due to $\E_{q(y|z)}$.
We can't remove $\E_{q(y|z)}$ as $p(z|y)$ depends on $y$.
})
for the latents in $\violet{P_1}$.

Note that each ELBO depends on $\green{Q'}$, which has exponentially many combinations.
$\green{Q'_1}$
results in $\orange{P_2}=\braces{\orange{p(z_1|z_0,a)}}$ and two $\KL$s
while
$\green{Q'_2}$
results in $\orange{P_2}=\emptyset$ and three $\KL$s.
In two ELBOs, $z_1$ follows different distributions ($\orange{p(z_1|z_0,a)}$ vs. $\green{q(z_1|x_1)}$)
which affects $\log\gray{p(x_1|z_1)}$.
\begin{align}
 Q'_1:\ &\E\substack{\green{q(z_0|x_0)}\\\orange{p(z_1|z_0,a)}\\\green{q(a|x_0,x_1)}}
 \brackets{
 \textstyle
 \substack{
 \log\gray{p(x_0|z_0)}\\
 + \log\gray{p(x_1|z_1)}}
 \substack{
 + \log \frac{\violet{p(z_0)}}{\green{q(z_0|x_0)}}\\
 \quad + \log\frac{\violet{p(a|z_0)}}{\green{q(a|x_0,x_1)}}
 }
 }
 \notag
 \\
 Q'_2:\ &\E\substack{\green{q(z_0|x_0)}\\\green{q(z_1|x_1)}\\\green{q(a|x_0,x_1)}}
 \brackets{
 \textstyle
 \substack{
 \log\gray{p(x_0|z_0)}\\
 + \log\gray{p(x_1|z_1)}}
 \substack{
 + \log \frac{\violet{p(z_0)}}      {\green{q(z_0|x_0)}}\\
 \quad + \log \frac{\violet{p(z_1|z_0,a)}}{\green{q(z_1|x_1)}}\\
 \qquad + \log \frac{\violet{p(a|z_0)}}    {\green{q(a|x_0,x_1)}}}
 }
 \notag
\end{align}

Not all lower bounds are useful.
To illustrate the issue, look at the second ELBO of a VAE ($Q=\braces{\green{q(z|x)}},Q'=\emptyset$):
\begin{align}
  \log p(x) \geq \E_{\orange{p(z)}} {\log \gray{p(x|z)}}\label{eq:second-vae-elbo}
\end{align}
It is less tight (= worse) than the normal VAE ELBO because
the generator $p(x|z)$ uses $z$ from a fixed distribution $p(z)$, ignoring the input data
and not training the encoder $q(z|x)$.

The criteria for selecting $\green{Q'}$ is not known.
Latplan empirically showed that
averaging ELBOs from $Q'_1,Q'_2$ was sufficient, but did not test $2^3$ combinations.
One heuristic is to form a set $\mathbf{Q'}=\braces{Q'_1,Q'_2\ldots}\subseteq 2^Q$
so that
(1) all trainable networks are covered once by $\orange{P_2}\cup \green{Q'_i}$ and
(2) ignore $\green{Q'}$s that ignore the input.
\refeq{eq:second-vae-elbo} violates both criteria.

The usefulness may also depend on how we estimate the expectation.
VAEs have a continuous distribution $p(z)=\N(0,1)$;
therefore $z$ must be estimated by Monte Carlo sampling.
However, this is not always necessary:
If $p(z)$ is a categorical distribution $p(z)=\cat(1/4,\ldots,1/4)$ of 4 categories,
I can enumerate 4 cases and compute an exact weighted sum,
which could be a better lower bound.
\refsec{sec:expectation} discusses more about how to compute an expectation.

\paragraph{Further Notes}

VAEs tend to generate blurry images.
The cause of this phenomena was identified as \reffact{fact:loss}
(assign a particular distribution, such as Gaussian, to observable variables $\rx$).
This gave rise to
\emph{likelihood-free} methods of machine learning that avoids assigning distributions to $p(\rx)$,
which includes Generative Adversarial Networks \cite[GANs]{goodfellow2014generative} and its variants.
However, many GAN variants (including the vanilla GAN) are unsound, leading to unstable training.
I discuss a sound likelihood-free method in \refsec{sec:likelihood-free}.

\section{Obtaining an Expectation}
\label{sec:expectation}

Having laid out the derivation of the loss functions,
we finally discuss how to actually compute them.
In doing so, computing an expectation $\E_{p(x)} g(x)$ is critical.
There are mainly three ways to compute an expectation.
\begin{enumerate}
 \item \textbf{A closed form is available.}
       This is often the case when $g(x)$ is a PDF of a distribution $q(x)$
       of the same family as $p(x)$.
       A KL divergence is also such an instance.
 \item \textbf{The random variable is discrete.}
       If it is a low-dimensional discrete variable,
       you can enumerate all cases and compute the expectation exactly.
 \item \textbf{Numerical sampling.}
       Otherwise, you must \emph{estimate} the expectation via random sampling.
       Monte-Carlo sampling is one such instance.
\end{enumerate}

\begin{defi}
 Given \iid random variables $\rx_1, \rx_2, \ldots, \rx_N$ all following $p(\rx)$,
 i.e., $p(\rx)=p(\rx_i)$ and $\rx_1\perp\ldots\perp \rx_N$,
 and its samples $x_i\sim p(\rx_i)$,
 the Monte Carlo (MC) estimate of $\E_{p(x)} g(x)$ is defined as $\frac{1}{N}\sum_{i=0}^N g(x_i)$.
\end{defi}
\noindent In practice, however, the MC estimate is extremely simplified.
\begin{fact}
 \label{fact:mc-single-sample}
 Each expectation is obtained by a Monte Carlo estimate with $N=1$, as popularized in \cite{kingma2014semi}.
\end{fact}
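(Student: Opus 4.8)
The plan is to show that the $N=1$ Monte Carlo estimate is an \emph{unbiased} estimator of $\E_{p(x)} g(x)$, and then argue that unbiasedness is the only property the training pipeline actually requires, so that reducing $N$ from large values to $1$ costs nothing asymptotically.

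First I would verify unbiasedness directly from \refdef{def:expectation}. For any $N$, linearity of expectation together with the i.i.d.\ assumption gives $\E\brackets{\frac{1}{N}\sum_{i} g(x_i)} = \frac{1}{N}\sum_i \E_{x_i\sim p(x)} g(x_i) = \E_{p(x)} g(x)$; specializing to $N=1$ yields $\E_{x_1\sim p(x)}\brackets{g(x_1)} = \E_{p(x)} g(x)$. Thus a single sample already hits the target in expectation, and the only quantity that degrades when $N=1$ is the variance, which scales like $1/N$.

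Second, I would explain why this variance penalty is harmless in context. The expectation in question never appears in isolation: by \refdef{def:ml} and \refconv{conv:data} it sits inside an outer average over the data distribution $q(x)$ (equivalently over $q(i)=1/|\X|$), and training proceeds by stochastic gradient descent over many minibatches and many epochs. Each datum in each minibatch in each iteration contributes its own independent single-sample inner estimate, so the effective number of Monte Carlo draws accumulated over a full training run is on the order of the minibatch size times the number of iterations, which is large. The optimizer's running accumulation averages the per-step noise away.

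Third---and this is the crux---I would invoke the fact that stochastic gradient descent requires only an \emph{unbiased} gradient estimate, not a low-variance one, to converge to the same optimum (in expectation) as exact gradient descent. Because expectation and differentiation commute under mild regularity (for continuous $z$ this is arranged by the reparameterization trick popularized in \cite{kingma2014semi}), the gradient of the single-sample loss is an unbiased estimate of the gradient of the true expected loss. Hence replacing each exact expectation by its $N=1$ estimate leaves the optimum that SGD targets unchanged. The main obstacle is that the statement is a practical convention rather than a clean theorem, so the real work is pinning down in what sense $N=1$ ``obtains'' the expectation: the honest answer is unbiasedness-plus-SGD, and the delicate point is that one must reason about the \emph{gradient} estimator rather than the value estimator---the value estimate at $N=1$ is genuinely noisy, but the gradient estimate stays unbiased, and unbiasedness is exactly SGD's convergence requirement. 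A secondary subtlety I would flag but not fully resolve is that obtaining a usable (low-variance, unbiased) gradient through the sampling step for continuous variables relies on the reparameterization trick; the naive score-function estimator is also unbiased but has far higher variance, which would make the $N=1$ choice much harder to defend.
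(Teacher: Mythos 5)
The paper does not actually prove this \emph{Fact} at all: it is stated as an empirical observation about practice, and the entirety of the paper's justification is the sentence that follows it (``In other words, no averaging is performed in the source code'') plus the citation to \cite{kingma2014semi}. Your proposal therefore takes a genuinely different route --- you supply the standard theoretical rationale that the paper omits, namely that the single-sample estimator satisfies $\E_{x_1\sim p(x)}\brackets{g(x_1)}=\E_{p(x)}g(x)$, that only unbiasedness of the \emph{gradient} estimator (obtained via reparameterization so that differentiation and expectation commute) is needed for stochastic optimization, and that the variance penalty is absorbed by the outer averaging over minibatches and iterations. This argument is essentially correct and is the accepted folklore justification; your distinction between the noisy value estimate and the unbiased gradient estimate, and your caveat about the score-function estimator's variance, are exactly the right places to be careful. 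Two small qualifications: the Fact as written is a descriptive claim about what practitioners do, so what you have proved is the adjacent normative claim that doing so is sound rather than the Fact itself; and ``unbiasedness is the only property SGD requires'' is slightly too strong, since standard convergence results also need a bounded-variance (or bounded second moment) condition on the stochastic gradients --- harmless here, but worth stating if you want the argument to be airtight. Your approach buys a reader an actual reason to believe the convention; the paper's approach buys only the assurance that the convention exists.
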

In other words, no averaging is performed in the source code. 
This helps deciphering a complex formula in a paper:
\begin{fact}
 Except for cases 1 and 2 (close form / discrete cases),
 an expectation $\E$ in a complex formula
 should read as a single random sampling from a distribution.
\end{fact}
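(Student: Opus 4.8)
The statement is operational rather than deeply mathematical, so the plan is to justify it by eliminating alternatives and then unfolding a single definition. First I would invoke the trichotomy stated just above it: any expectation $\E_{p(x)} g(x)$ is evaluated either (1) in closed form, (2) by exact enumeration when the variable is discrete and low-dimensional, or (3) by numerical sampling. Since the statement explicitly excludes cases 1 and 2, the only remaining mechanism is case 3, and the task reduces to identifying what ``numerical sampling'' concretely does to the symbol $\E$.

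Next I would substitute the definition of the Monte Carlo estimate. By that definition, the MC estimate of $\E_{p(x)} g(x)$ is $\frac{1}{N}\sum_{i=0}^{N} g(x_i)$ with each $x_i \sim p(\rx)$ drawn i.i.d. The crucial step is to apply Fact~\ref{fact:mc-single-sample}, which fixes $N=1$ in practice. Under this convention the sum collapses to the single term $g(x_1)$, so evaluating $\E_{p(x)} g(x)$ amounts to drawing one sample $x_1 \sim p(\rx)$ and returning $g(x_1)$. This is exactly the reading that an expectation ``should be read as a sampling operation from a distribution,'' which establishes the claim.

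The main obstacle is not a calculation but a matter of status: the claim rests entirely on the $N=1$ convention of Fact~\ref{fact:mc-single-sample}, which is an engineering choice popularized by \cite{kingma2014semi}, not a consequence of any estimator optimality. The ``proof'' is therefore really a justification of a notational convention, and the one thing I must be careful about is scoping. I would make explicit that the reading applies \emph{term by term} to each $\E$ as it appears in a nested objective, so that a composite loss carrying several expectations (as in the general ELBO of the previous section) becomes a cascade of independent single-sample draws rather than one global average. I would close with the honest caveat that the $N=1$ estimate, while unbiased, is high variance: the statement concerns how to \emph{parse} the symbol when deciphering a paper, not the statistical quality of the resulting estimate.
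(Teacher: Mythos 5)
Your proposal is correct and matches the paper's own (implicit) justification: the paper likewise derives this reading from the trichotomy of evaluation methods plus the $N=1$ Monte Carlo convention of Fact~\ref{fact:mc-single-sample}, and then illustrates the term-by-term scoping you describe with the VAE ELBO example, where each nested $\E$ becomes a separate single-sample draw. Your added caveat about the $N=1$ estimate being unbiased but high-variance is a reasonable supplement the paper does not state explicitly, but it does not change the argument.
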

\begin{fact}
  In a minibatch training, the expectation over the empirical distribution $\E_{q(x)}$ is
  done by computing the average over $B$ batch samples from the dataset.
\end{fact}

\begin{ex}
 The VAE's ELBO (\refeq{eq:vae-3}, including $\E_{q(x)}$) is
 \[
 \E_{q(x)}[\E_{q(z|x)} [\log p(x|z)] - \KL(q(z|x)\Mid p(z))].
 \]
 In a minibatch training, this should read as follows:
 \begin{algorithmic}[1]
  \State $\E_{q(x)}$ = Sample $x\sim q(x)$ for $B$ times and average it,
  \State $\E_{q(z|x)}$ = Sample $z\sim q(z|x)$ once = encode $x$ to $z$ once,
  \State Compute $L_1=\log p(x|z)$ (closed form, see \refsec{sec:loss}),
  \State Compute $L_2=\KL(q(z|x)\Mid p(z))$ (closed form),
  \State \Return $L_1-L_2$.
 \end{algorithmic}
\end{ex}

\section{Conclusion}

This memo discusses a concise protocol for designing a machine learning system
with a minimum reasonable theoretical guarantee.
I targeted a general computer science audience
not necessarily specialized in ML/Stats/DL,
especially those in the symbolic AI community.

In the first half of the memo,
I reviewed a minimal condition
that machine learning methods must satisfy
in order for the symbolic AI community to take it seriously.
I kept the discussion general enough that it is agnostic to the statistical model or the implementation.
(1) I minimally covered the basic (but often not easily accessible) statistical concepts.
(2) I defined machine learning as a standard optimization problem.
(3) Inspired by traditional theorem proving terminologies,
I defined the soundness and the completeness of machine learning.
(4) Based on the completeness, I shed light on the generalization in machine learning.
One novel aspect of this discussion was its focus on the support (non-zero region) of probability distributions,
a deliberate choice made for non-deterministic reasoning in symbolic AI.
It suggests that statistical learning methods need more focus on Extreme Value Theory to ensure the safety.

In the second half of the memo, I then
standardized the protocol for performing machine learning
while maintaining the guarantees discussed above.
I discussed
(1) the connection between loss functions and the choice of distributions,
(2) the maximum entropy principle for choosing a distribution,
(3) a principled procedure for designing a complex statistical model,
(4) a general guide for designing a complex variational model,
(5) an algorithm for deriving its loss formula, and finally,
(6) computing this formula.

In addition to providing the protocol for designing ML systems,
this memo would make existing papers less demanding to read,
give readers more confidence,
and as a result make them more accepting toward statistical approaches.
In other words, the true goal of the memo is to
shed a cautiously optimistic light on machine learning and
bridge the gap between connectionist and symbolic AI communities,
which would hopefully spark the development of neuro-symbolic systems that bring the best of both worlds.

\section*{References}

References appear after the appendix.

\clearpage
\appendix

\section*{Appendix}

\section{Axiomatic Measure / Probability Theory}
\label{sec:measure}

(This section is based on \citet{rohatgi2015introduction} and \citet{falconer2004fractal}.)
To define probability, I should minimally cover its measure-theoretic definition \cite{kolmogorov1933foundations}.
I don't delve into the details because it is not the core topic of this article.
However, to have a keyword that the readers can search later may be useful.
It might also be a good idea to keep these notions in mind if you are a job seeker:
Remember the existence of these notions
just in case someone asks you about them during a machine learning job interview.

Basically a measure is a mathematical generalization of volume,
where you can integrate the density to obtain the total mass.

\begin{defi}
 Given two sets $X,Y$, we denote a set of functions from $X$ to $Y$
 as $Y^X$ or $X\to Y$.
\end{defi}

\begin{defi}
 Given a set $\Omega$, we denote $2^\Omega$ as a power set of $\Omega$, i.e., the set of all subsets of $\Omega$.
 This is a special case of a set of functions, where $Y=\braces{0,1}$ is denoted by $2$.
\end{defi}

\begin{ex}
 When $\Omega$ is a set of numbers that you could get from throwing a dice,
 $\Omega=\braces{1,2,3,4,5,6}$,
 $2^\Omega=\braces{\emptyset, \braces{1}, \ldots, \braces{6}, \braces{1,2}, \ldots, \braces{5,6}, \braces{1,2,3}, \ldots, \Omega}$.
\end{ex}

\begin{defi}[Measure Axiom]
 Given a set $\Omega$, and a set of its subsets $\Sigma\subseteq 2^\Omega$,
 a function $\mu: \Sigma \to \R$ is a \emph{measure} iff
 \begin{enumerate}
  \item $\forall x \in \Sigma; \mu(x) \geq 0$,
  \item $\mu(\emptyset) = 0$,
  \item $\mu(A) \leq \mu(B)$ if $A \subseteq B$, and
  \item for a countably infinite sequence of sets $\braces{S_i}_{i=0}^\infty$,
        $\mu(\bigcup_i S_i) \leq \sum_i \mu(S_i)$.
        Equality holds when $S_i$ are mutually disjoint ($S_i\cap S_j = \emptyset$ if $i\not=j$).
 \end{enumerate}
\end{defi}

There is a certain condition called $\sigma$-algebra that $\Sigma$ must satisfy,
but its theoretical details are not important to us.
In short,
if $\Omega$ is a set of real numbers $\R$,
$\Sigma$ should ``behave well'' in order for $\mu$ to be well-defined
because some pathological sets such as $\R\setminus \Q$ or a Cantor set can cause complications
when $\mu$ defines and generalizes integration and summation.
So far, I can informally assume that $\Omega$ and $\Sigma$ are well-behaved.

\begin{defi}[Probability Axiom]
 A function $\mu$ is a probability measure when it is a measure,
 $\mu: \Sigma\to{} [0,1]$, and $\mu(\Omega) = 1$.
\end{defi}

\begin{ex}
 $\mu(\braces{1,3,5}) = 0.5$, i.e., the probability of observing an odd number from a fair dice is 0.5.
 $\mu(\braces{1,2,3,4,5,6}) = 1$ and $\mu(\emptyset) = 0$.
\end{ex}

\begin{defi}
 For a probability measure $\mu$ on $\Omega$ and $\Sigma$,
 $\Omega$ is called a \emph{sample space},
 $\Sigma$ is called an \emph{event space},
 $x\in\Sigma$ is called an \emph{event}.
 $(\Omega, \Sigma, \mu)$ is called a \emph{probability space}
 if a complement and a union of events are defined, i.e.,
 \begin{enumerate}
  \item $x \in \Sigma \iff \Omega \setminus x \in \Sigma$, and
  \item $x, y \in \Sigma \then x\cup y \in \Sigma$.
 \end{enumerate}
\end{defi}

$\Omega$ is a set of possible outcomes, $\Sigma$ is a set of (measurable) subset of possible outcomes,
and $\mu$ is a probability for each (measurable) subset.
Here the adjective ``measurable'' is used only to avoid the complications of $\R$,
and thus you can safely ignore them.
Finally,

\begin{defi}
 Given a probability space $(\Omega, \Sigma, \mu)$,
 and $(E, \mathcal{E})$ where $\mathcal{E}$ is a well-behaving subset of $2^E$,
 a random variable $X$ is a function from $\Omega$ to $E$ such that $\forall e\in \mathcal{E}; X^{-1}(e)\in \Sigma$.
 $E$ is also called an \emph{observation space}. $\Omega$ is also called a \emph{background space}.
\end{defi}

\begin{ex}
 Let $\Omega=[0,1)$.
 $E=\braces{1,2,3,4,5,6}$. An example of $X$ is
 $X([0,\frac{1}{6}))=1, \ldots, X([\frac{5}{6},1))=6$.
 In this case, $\Sigma=\braces{[0,\frac{1}{6}), \ldots [\frac{5}{6},1), [0,\frac{2}{6}), [0,\frac{1}{6})\cup [\frac{2}{6},\frac{3}{6}), \ldots, [0,1)}$.
\end{ex}

The distinction of the background space and the observation space is made
only for generalized, more complicated cases
\footnote{See \href{https://stats.stackexchange.com/questions/10789/why-are-random-variables-defined-as-functions}{this stackexchange post}}.
The ``observation space'' and ``background space'' have nothing
to do with ``observed variables'' and ``latent variables'' discussed later.

\section{Formal Concepts in Statistics : Tier 2}
\label{sec:tier2}

Here I cover less important concepts.
There are several variants of entropy that I can't think but they exist just for confusing readers.

\begin{theo}[Inclusion-exclusion principle]
 \[
 p(\rx=x\lor\ry=y)=p(\rx=x)+p(\ry=y)-p(\rx=x\land\ry=y)
 \]
\end{theo}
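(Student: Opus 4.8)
The plan is to reduce the disjunctive event to a sum over mutually exclusive outcomes, where additivity on disjoint sets applies cleanly. First I would interpret $p(\rx=x\lor\ry=y)$ via the joint distribution $p(\rx,\ry)$: it is the total mass placed on every pair $(x',y')$ with $x'=x$ or $y'=y$. The region $\braces{x'=x}\cup\braces{y'=y}$ partitions into three disjoint pieces, namely $\braces{x'=x,y'\neq y}$, $\braces{x'\neq x,y'=y}$, and $\braces{x'=x,y'=y}$, so additivity gives
\begin{align*}
 p(\rx=x\lor\ry=y) = p(\rx=x,\ry\neq y) + p(\rx\neq x,\ry=y) + p(\rx=x,\ry=y).
\end{align*}

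Next I would use the marginalization property built into the definition of a joint distribution to decompose the single-variable marginals. Splitting the event $\rx=x$ according to whether $\ry=y$ holds yields $p(\rx=x)=p(\rx=x,\ry=y)+p(\rx=x,\ry\neq y)$, and symmetrically $p(\ry=y)=p(\rx=x,\ry=y)+p(\rx\neq x,\ry=y)$. Substituting these into the right-hand side of the claimed identity, the two copies of the intersection term $p(\rx=x,\ry=y)$ collapse to a single copy under the subtraction, and what remains is exactly the three-term disjoint sum above, closing the argument.

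An even shorter route, if I wanted to avoid manipulating regions explicitly, is to verify the pointwise indicator identity $\dbrackets{\rx=x\lor\ry=y}=\dbrackets{\rx=x}+\dbrackets{\ry=y}-\dbrackets{\rx=x\land\ry=y}$ by checking the four truth-value cases, and then take the expectation $\E_{p(x,y)}$ of both sides, using linearity of expectation together with $\E_{p(x,y)}\dbrackets{\rx=x}=p(\rx=x)$. I expect no genuine obstacle here, as this is essentially a reformulation of the definitions; the only point requiring care is making the meaning of the disjunctive event precise in terms of the joint mass, since the excerpt defines $p(\rx=x,\ry=y)$ explicitly but leaves $p(\rx=x\lor\ry=y)$ to be inferred from the underlying probability space.
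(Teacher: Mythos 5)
Your argument is correct, but note that the paper itself offers no proof of this theorem at all: it simply states the identity and adds a historical attribution to Kolmogorov, so there is nothing to compare your derivation against. Both of your routes are standard and sound. The first (partition the union region into the three disjoint pieces $\{\rx=x,\ry\neq y\}$, $\{\rx\neq x,\ry=y\}$, $\{\rx=x,\ry=y\}$, apply additivity, then recover the marginals by splitting each on whether the other event holds) is exactly the finite-additivity argument one would expect, and your bookkeeping of the doubled intersection term is right. The second route, verifying the pointwise identity $\dbrackets{\rx=x\lor\ry=y}=\dbrackets{\rx=x}+\dbrackets{\ry=y}-\dbrackets{\rx=x\land\ry=y}$ by the four truth-value cases and taking expectations under $p(\rx,\ry)$, is arguably cleaner and generalizes more readily to the $n$-event inclusion-exclusion formula. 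You are also right to flag the one genuine gap in the \emph{statement} rather than in your proof: the paper defines $p(\rx=x,\ry=y)$ but never defines the disjunctive quantity $p(\rx=x\lor\ry=y)$, so any proof must first fix its meaning as the joint mass of the union event, exactly as you do.
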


\citet{russell1995artificial} attributes this principle to Andrei Kolmogorov.

\begin{defi}
 A \emph{cross entropy} between $q(\rx)$ and $p(\rx)$ is $\E_{q(x)} \brackets{-\log p(x)}$.
\end{defi}

Cross entropy frequently appears as a loss function for classification in machine learning,
but it is not fundamental.

\begin{defi}
 A \emph{joint entropy} of $p(\rx,\ry)$ is $H(p(\rx,\ry))$, i.e., not different from normal entropy.
 Also written as $H(\rx,\ry)$ when $p$ is implied.
\end{defi}

\begin{defi}
 A \emph{conditional entropy} of $p(\rx|\ry)$ is $H(p(\rx|\ry))=\E_{p(\rx,\ry)} \brackets{-\log p(\rx|\ry)}=H(\rx,\ry)-H(\ry)$.
 Note that the expectation is over $p(\rx,\ry)$, not $p(\rx|\ry)$.
 Also written as $H(\rx|\ry)$ when $p$ is implied.
\end{defi}

\begin{defi}
 A random variable is \emph{discrete} / \emph{continuous} if its sample space is continuous / discrete.
 A distribution is \emph{discrete} / \emph{continuous} if its variable is continuous / discrete.
\end{defi}
\begin{defi}
 A random variable is \emph{multivariate} if it is a list/vector/array. It is \emph{univariate} otherwise.
 It is \emph{bivariate} if the length is two.
 A distribution is multivariate if its variable is multivariate and if they all follow the same type of distribution.
 It typically implies that variables correlates with each other.
 It is just a scarier way to call a joint distribution of all variables in a vector.
\end{defi}
\begin{defi}
 A distribution is a \emph{mixture} if it is a weighted sum of distributions.
 It is same as saying $p(x)=\sum_C p(x|C)p(C)$ where $p(C)$ is a categorical distribution.
\end{defi}
\begin{defi}
 A distribution is \emph{deterministic} when it is a Dirac's delta. It is stochastic otherwise.
\end{defi}
\begin{defi}
 A multivariate distribution has a \emph{mean field assumption} when its variables are mutually independent.
\end{defi}
\begin{defi}
 A \emph{support} of a function $f$ is where it is non-zero, $\function{supp}(f)=\braces{x| f(x)\not=0}$.
 For a measure $\mu$ (which is always positive), $\function{supp}(\mu)=\braces{x| \mu(x)>0}$.
 Probability distributions are measures (\refsec{sec:measure}), therefore the same definition applies.
\end{defi}

\subsection{Parameter Estimation}
\label{sec:parameters}

Typically, we assume $\hat{p}^*(\rx)$ and $p(\rx)$ are of the same family of functions
parameterized by $\theta$ such as neural network weights,
i.e., $\hat{p}^*(\rx)=p_{\theta^*}(\rx)$, $p(\rx)=p_{\theta}(\rx)$.
Thus, ML is often written as a task of finding $\argmax_\theta\E_{q(x)}p_{\theta}(x)$.
Formally,

\begin{defi}
Let $\vtheta$ be a vector of random variables representing the learned parameters in a machine learning system.
Then $p(\rx|\vtheta)$ below is called a \emph{likelihood}. $p(\rx)$ is in turn called a \emph{marginal likelihood}.
\[
 p(\rx) = \sum_\vtheta p(\rx|\vtheta) p(\vtheta).
\]
\end{defi}

\begin{conv}
 \emph{Maximum Likelihood Estimation (MLE)}
 is a machine learning with a MAP estimation on $\vtheta$.
\end{conv}

That being said, I avoided the term ``likelihood'' throughout this memo.
It is a particularly ill-named concept because
it calls a certain noun (distribution) with a different noun (likelihood)
with no particular reason and
disrupts the consistency of notations.

\emph{Frequentist} and \emph{Partial Bayesian} approaches
optimize $p(\rx|\vtheta)$ while treating $\vtheta$ as a point / a deterministic variable / Dirac's $\delta$
(\refsec{sec:point}, \refsec{sec:probability}).
\emph{Full Bayesian} methods instead optimize $p(\rx|\vtheta)$ and $p(\vtheta|\rx)$
using a prior distribution $p(\vtheta)$.
The difference between Frequentist and Partial Bayesian is that
the latter still has a prior on other non-weight variables.
See \refsec{sec:uncertainty} for more details on Bayesian reasoning.

\begin{ex}
 \emph{Bayesian Neural Network} \cite[BNN]{kendall2017uncertainties} is a Full Baysian method.
 Each neural network weight $w_i$ is represented as a distribution such as a Gaussian $w_i\sim\N(\mu_i,\sigma_i)$
 whose parameters $\mu_i,\sigma_i$ are optimized subject to
 the output accuracy and the KL divergence toward the prior distribution such as $\N(0,1)$.
 Each time it computes an output from the input, a new weight value is sampled from the distribution.
 While it can directly distinguish the aleatoric and the epistemic uncertainty (see \refsec{sec:uncertainty}),
 the sampling process makes it computationally intensive.
 Weight regularizations in neural networks (such as $\ell_1, \ell_2, \ell_\infty$ regularizations)
 are considered the special cases of BNNs.
\end{ex}

\section{Frequentist Approaches}
\label{sec:frequentist}

\emph{Frequentist approaches} are alternative approaches toward machine learning and hypothesis testing.
They are different from Bayesian approaches in a number of ways.

First, Frequentist approaches
use a frequentist interpretation of probability (\refconv{conv:frequency}).
When there are no observations made yet,
then the empirical probability distribution simply ``does not exist'' or is undefined,
because it is based on a frequency of the past events.
In contrast, probabilities always exist in Bayesian approaches as it uses a subjective view.

Next, they
try to obtain the \emph{true} parameters of the ground truth probability distributions.
In other words, they assume that such parameters are deterministic value with 0 variance, i.e., a Dirac's delta $\delta$.
Even when no observations are made, they still assume that there is some true value that is simply not known.
In doing so,
it relies on various \emph{limit theorems},
including the Laws of Large Numbers \cite{bernoulli1713lawsoflargenumbers,grattan2005landmark}, which says
the estimation converges to the true value given an infinite amount of data.
This is in contrast to \emph{Bayesian approaches} which admits that
the true parameter will be never known in our lifetime.
Instead, they obtain the \emph{distribution} of the parameters from a \emph{finite} amount of data.

Bayesian approaches can thus learn more effectively from limited data \cite{tenenbaum1998bayesian}.
Part of it is due to being able to leverage a fixed distribution called a prior distribution --
An initializing distribution that acts as a fake, pseudo samples of several pseudo-trials
and augments the lack of data by human intuition and common sense.
It updates this initial distribution with a finite data
and obtains a posterior distribution, a distribution closer to the ground truth.

Frequentist approaches claim that they do not use a prior.
One must be careful on these claims because they are sometimes political and dogmatic.
Any hyperparameter for a frequentist model can be seen as a prior from a Bayesian view,
but the Frequentist school of thoughts rejects the idea of subjectivity and prior knowledge.
Further discussion is out of the scope of this memo.

\begin{table}[htbp]
 \centering
 \begin{tabular}{c|cc}
\toprule
  Approach   & Frequentist   & Bayesian       \\
\midrule
Interpretation & Frequency     & Belief         \\
Result         & Deterministic & Distributional \\
Data assumed   & Infinite      & Finite         \\
Prior?         & No            & Yes            \\
\bottomrule
 \end{tabular}
 \caption{A table summarizing the difference of Frequentist and Bayesian approaches.}
 \label{tab:frequestist-bayesian}
\end{table}

\subsection{PAC Learning}

Frequentist learning theories
are built around the concept of
\emph{Provably Approximately Correct (PAC)} inequality and learnability \cite{valiant1984theory}.
PAC is a frequentist analogue of ELBO-based variational model.

Lets assume a dataset $\mathcal{D}=(\X,\Y)\subseteq X\times Y$ which consists of
an input dataset $\X=(x_i)_{i=0}^{N}\subseteq X$ and
an output dataset $\Y=(y_i)_{i=0}^{N}\subseteq Y$.

\begin{conv}
 \emph{Frequentist approaches} assume that the data distribution
 $q(\rx,\ry)$ was \iid sampled from the ground truth distribution
 $p^*(\rx,\ry)$, i.e., $(x_i,y_i)\sim p^*(\rx,\ry)$.
\end{conv}

Let a predictor function $\phi:X\to Y$ and
an arbitrary loss function $l: Y\times Y\to \R^+$.
We assume a class of predictors $\Phi \subseteq X\to Y$.
Notice that unlike Bayesian approaches, there is typically no interpretation provided to $l$.
It can be an arbitrary loss function and not necessarily connected to a likelihood of some distribution.
However, from a Bayesian point of view, you can always interpret $l$ as a NLL
by converting it back to $p(y|x)= A\exp \parens{- l(\phi(x), y)} $
with some normalizing constant $A$ that satisfies $\int p(y|x)dx = 1$.

There are two types of PAC inequalities: An empirical one and an oracle one \cite{guedj2019primer,alquier2021user}.
We first define \emph{oracle risk} and \emph{empirical risk}.
Oracle risk is not computable because $p^*(\rx,\ry)$ is unknown.

\begin{defi}
 The \emph{oracle risk} is defined as
 \begin{align}
  R(\phi)&= \E_{(x,y)\sim p^*(\rx,\ry)} [ l(\phi(x), y) ].
 \end{align}
\end{defi}

\begin{defi}
 An \emph{empirical risk} is defined as
 \begin{align}
  \tilde{R}(\phi)&= \frac{1}{N} \sum_i l(\phi(x_i), y_i).
 \end{align}
\end{defi}

\begin{defi}
 For a predictor $\phi\in \Phi$, and $\epsilon \in \R^+$,
 an \emph{empirical PAC inequality} is a condition where there exist some threshold $\delta$ such that:
 \begin{align}
  \Pr\parens{R(\phi) \leq \delta(\phi,\mathcal{D}),\ \mathcal{D}} \geq 1-\epsilon.
 \end{align}
\end{defi}

$\delta(\phi,\mathcal{D})$ is called an \emph{Empirical PAC bound}.
Empirical PAC inequality is able to quantify that,
for a given predictor $\phi$,
it is able to bind the oracle risk by a data-dependent metric $\delta(\phi,\mathcal{D})$,
where the definition of $\delta$ depends on each PAC-learning algorithm.
$\delta$ is often defined by adjusting the empirical risk $\tilde{R}$ with an additional term.
A PAC-learning algorithm optimizes the upper bound $\delta$ as a loss function
instead of $\tilde{R}$, in order to guarantee the inequality.
Notice the similarity with ELBO in a VAE, which adjusts the reconstruction loss $\log p(x|z)$
(which is a square error; similar to $\tilde{R}$)
with a KL divergence in order to keep the loss function a lower bound of the likelihood.

\begin{defi}
 An \emph{oracle PAC inequality} is a condition where there exist some fast-decaying function $\delta$ of $N$ such that:
 \begin{align}
  \Pr\parens{R(\phi) \leq \inf_{\phi\in\Phi} R(\phi) + \delta(N, \epsilon),\ \mathcal{D}} \geq 1-\epsilon.
 \end{align}
\end{defi}

Oracle PAC bound is a more theoretical concept which says the more, the merrier.
It is able to bind the oracle risk by the \emph{best} predictor among $\Phi$,
plus some residual $\delta(N,\epsilon)$ that is fast decaying as more data become available.

\subsection{Limit Theorems}

These frameworks rely on a group of mathematical theorems called \emph{limit theorems}.
Limit theorems include
\emph{law of large numbers} \cite[LLN]{bernoulli1713lawsoflargenumbers,grattan2005landmark},
\emph{central limit theorem} \cite[CLT]{laplace1812centrallimittheorem},
\emph{law of iterated logarithm} \cite[LIL]{Kolmogoroff1929}.
We first define two forms of function convergence:
\begin{defi}
 A series of functions $(f_n)_{n=0}^\infty$ converges to $f$
 \begin{align*}
  \text{\emph{pointwise} when\ } & \forall x; \forall \epsilon; \exists n; |f_n(x)- f(x)|<\epsilon.  && f_n\to f \\
  (\text{\emph{uniform}   when\ } & \forall \epsilon; \forall x; \exists n; |f_n(x)- f(x)|<\epsilon. && f_n\rightrightarrows f)
 \end{align*}
\end{defi}
\noindent Then we define three forms of probabilistic convergence with decreasing strengths \cite{rohatgi2015introduction}:
\begin{defi}
 Random variables $X_n$ converges to $X$
 \begin{align*}
  &\text{\emph{almost surely}}: \hfill \forall \delta\in\R^+; \lim_{n\to\infty} \Pr(\sup_{m\geq n}|X_m-X|<\delta)=1,
  \\
  &\text{\emph{in probability}}: \hfill \forall \delta\in\R^+; \lim_{n\to\infty} \Pr(|X_n-X|<\delta)=1,
  \\
  &\text{\spc{\emph{in distribution}\\or \emph{in law}}}: \hfill \Pr\parens{X_n}=f_n(x) \to f(x)=\Pr\parens{X},
 \end{align*}
 denoted as $X_n\to[p]\mu$, $X_n\to[a.s.]\mu$, and $X_n\to[L] X$, respectively.
\end{defi}

\begin{theo}
 $X_n\to[a.s.]X\ \then\ X_n\to[p]X\ \then\ X_n\to[L] X.$
\end{theo}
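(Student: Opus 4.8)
The plan is to establish the two implications separately and then chain them: \emph{almost sure} convergence $\Rightarrow$ convergence \emph{in probability} $\Rightarrow$ convergence \emph{in distribution}. Both steps work directly from the definitions stated just above, needing nothing beyond monotonicity of $\Pr$.

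For the first implication I would exploit a pointwise event inclusion. For every $\delta>0$ and every $n$,
\[
 \braces{\sup_{m\geq n}|X_m - X| < \delta} \subseteq \braces{|X_n - X| < \delta},
\]
because controlling the supremum over all $m\geq n$ in particular controls the single term $m=n$. Monotonicity then gives $\Pr(|X_n-X|<\delta)\geq \Pr(\sup_{m\geq n}|X_m-X|<\delta)$, and as $n\to\infty$ the right-hand side tends to $1$ by the definition of almost sure convergence. Being squeezed between this quantity and $1$, the left-hand side also tends to $1$, which is precisely convergence in probability. This step is essentially immediate.

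The second implication is the one I expect to be the main obstacle, and it is here that the notion of convergence \emph{in distribution} must be pinned down. The target is a two-sided (sandwich) estimate on the cumulative distribution function $F_{X_n}(x)=\Pr(X_n\leq x)$ at each point $x$ where the limiting $F_X$ is continuous. Fixing such an $x$ and an auxiliary $\epsilon>0$, I would partition the event $\braces{X_n\leq x}$ according to whether $|X_n-X|\leq\epsilon$ holds. On the good part the bound $X\leq x+\epsilon$ is forced, yielding $F_{X_n}(x)\leq F_X(x+\epsilon)+\Pr(|X_n-X|>\epsilon)$; a symmetric argument starting from $\braces{X\leq x-\epsilon}$ gives $F_X(x-\epsilon)-\Pr(|X_n-X|>\epsilon)\leq F_{X_n}(x)$. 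Since convergence in probability forces $\Pr(|X_n-X|>\epsilon)\to 0$, taking $\limsup$ and $\liminf$ in $n$ leaves
\[
 F_X(x-\epsilon)\leq \liminf_n F_{X_n}(x)\leq \limsup_n F_{X_n}(x)\leq F_X(x+\epsilon),
\]
and letting $\epsilon\to 0$ with continuity of $F_X$ at $x$ collapses both ends to $F_X(x)$.

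The delicate point worth flagging is that the definition written above, read literally as pointwise convergence of mass/density functions $f_n(x)\to f(x)$, makes the claim \emph{false}: the constant variables $X_n\equiv 1/n$ and $X\equiv 0$ converge almost surely, yet $f_{X_n}(0)=0\not\to 1=f_X(0)$. The theorem is correct only under the standard reading in which the \emph{cumulative} distribution functions converge at every continuity point of the limit, and the restriction to continuity points is exactly what the $\epsilon$-sandwich above consumes (at $x=0$ in the counterexample $F_X$ jumps, so nothing is required there). I would therefore state this convention explicitly before carrying out the estimate; once it is fixed, the remainder is routine bookkeeping of the two one-sided bounds.
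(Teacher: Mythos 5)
The paper states this theorem without proof (it is imported from the cited textbook of Rohatgi and Saleh), so there is no in-paper argument to compare against; your proposal supplies the standard proof and it is correct. The first implication is exactly the event-inclusion-plus-monotonicity argument one wants, and the second implication's $\epsilon$-sandwich on the cumulative distribution functions is the standard argument, carried out correctly including the restriction to continuity points of $F_X$. The most valuable part of your write-up is the flag about the definition: as written in the paper, convergence in distribution is defined by pointwise convergence of the mass/density functions $f_n \to f$, and under that literal reading the final implication is false --- your counterexample $X_n \equiv 1/n$, $X \equiv 0$, where $f_{X_n}(0)=0 \not\to 1=f_X(0)$ even though the convergence is almost sure, demonstrates this. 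The theorem holds only under the standard reading in which $F_{X_n}(x)\to F_X(x)$ at every continuity point $x$ of $F_X$, so the definitional correction you propose is not a pedantic aside but a prerequisite for the statement to be provable at all; the paper should adopt it.
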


Let $\rx_1, \rx_2, \ldots \rx_n$ be i.i.d random variables following any distribution
with mean $\E[\rx_i]=\mu$ and variance $\Var[\rx_i]=\sigma^2$ for each $i$.
Let an \emph{empirical mean} be $\rmu_n=\frac{1}{n}\sum_i \rx_i$.

\begin{theo}[Weak LLN]
 $\rmu_n\to[p]\mu$.
\end{theo}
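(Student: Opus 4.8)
The plan is to prove convergence in probability via Chebyshev's inequality, for which the only genuine computation is the variance of the empirical mean $\rmu_n$. First I would record the first two moments of $\rmu_n$. By linearity of expectation (\refdef{def:expectation}), $\E[\rmu_n]=\frac{1}{n}\sum_i \E[\rx_i]=\mu$, so the estimator is centered at the target. The key step is the variance: because the $\rx_i$ are mutually independent, the cross terms $\E[(\rx_i-\mu)(\rx_j-\mu)]$ vanish for $i\neq j$, so the variance of the sum is the sum of the variances, giving
\begin{align}
 \Var[\rmu_n]=\frac{1}{n^2}\sum_i \Var[\rx_i]=\frac{n\sigma^2}{n^2}=\frac{\sigma^2}{n}.
\end{align}
This is the heart of the matter: averaging $n$ i.i.d. samples shrinks the variance by a factor of $n$.

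Next I would apply Chebyshev's inequality, which bounds the deviation of a variable from its mean in terms of its variance:
\begin{align}
 \Pr(|\rmu_n-\mu|\geq\delta)\leq\frac{\Var[\rmu_n]}{\delta^2}=\frac{\sigma^2}{n\delta^2}.
\end{align}
Fixing any $\delta\in\R^+$ and letting $n\to\infty$, the right-hand side vanishes, hence $\lim_{n\to\infty}\Pr(|\rmu_n-\mu|\geq\delta)=0$, equivalently $\lim_{n\to\infty}\Pr(|\rmu_n-\mu|<\delta)=1$. This is precisely the definition of convergence in probability, i.e.\ $\rmu_n\to[p]\mu$, which finishes the argument.

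The main obstacle is not any single calculation, since all three steps are routine, but rather supplying the prerequisites that the memo has not stated explicitly. Chebyshev's inequality does not appear earlier, and it is itself a one-line consequence of Markov's inequality applied to the nonnegative quantity $(\rmu_n-\mu)^2$; if neither is assumed I would insert that short derivation, and it is the only external ingredient the proof needs. Likewise, the additivity of variance under independence rests on the factorization $p(\rx_i,\rx_j)=p(\rx_i)p(\rx_j)$ guaranteed by $\rx_i\perp\rx_j$, which I would cite explicitly so that the vanishing of the cross terms is justified rather than merely assumed.
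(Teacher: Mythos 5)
Your proof is correct. The memo itself gives no proof of the Weak LLN --- it is stated as a classical result in the appendix on limit theorems, with citations to the historical literature, so there is no argument in the paper to compare yours against. Your Chebyshev route is the standard textbook proof and it is fully valid under the hypotheses the memo actually states, namely that each $\rx_i$ has finite mean $\mu$ \emph{and} finite variance $\sigma^2$: the computation $\Var[\rmu_n]=\sigma^2/n$ via vanishing cross terms, followed by $\Pr(|\rmu_n-\mu|\geq\delta)\leq \sigma^2/(n\delta^2)\to 0$, is exactly right, and you correctly identify Chebyshev/Markov and the additivity of variance under independence as the only external ingredients. One remark worth keeping in mind: the weak law in its sharpest classical form (Khinchine) requires only a finite mean, not a finite variance, and that version needs a genuinely different argument (characteristic functions or truncation); your proof does not cover that case, but since the memo explicitly posits $\Var[\rx_i]=\sigma^2$, this is not a gap relative to the statement as given.
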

\begin{theo}[Strong LLN]
 $\rmu_n\to[a.s.]\mu$. \label{theo:lln}
\end{theo}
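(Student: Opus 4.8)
The statement is the Strong Law of Large Numbers, and the plan is to upgrade the Weak LLN (already available, since it follows from Chebyshev's inequality together with $\Var[\rmu_n]=\sigma^2/n$, giving only convergence in probability) to almost-sure convergence by a subsequence-plus-interpolation argument. First I would center the variables: replacing $\rx_i$ by $\rx_i-\mu$ loses no generality, so assume $\mu=0$ and write $S_n=\sum_{i=1}^n \rx_i$, reducing the goal to $S_n/n\to[a.s.]0$. Unwinding the definition of almost-sure convergence, it suffices to show that for every $\delta>0$ the event $\{|S_n/n|\geq\delta\}$ occurs only finitely often with probability one, which by the (first) Borel--Cantelli lemma follows from summability of the tail probabilities.

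The difficulty is that Chebyshev alone gives $\Pr(|S_n/n|\geq\delta)\leq \sigma^2/(n\delta^2)$, and $\sum_n 1/n$ diverges, so Borel--Cantelli cannot be applied along the full sequence --- this is exactly why we only get convergence in probability for free. The standard remedy is to thin to a sparse subsequence. Taking $n_k=k^2$, the bound becomes $\Pr(|S_{n_k}/n_k|\geq\delta)\leq \sigma^2/(k^2\delta^2)$, and now $\sum_k 1/k^2<\infty$, so Borel--Cantelli yields $S_{n_k}/n_k\to 0$ almost surely along this subsequence.

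The main obstacle is interpolating from the sparse subsequence back to every $n$. For $n_k\leq n<n_{k+1}$ I would control the within-block oscillation $\max_{n_k\leq m<n_{k+1}}|S_m-S_{n_k}|$ using Kolmogorov's maximal inequality (the maximal-deviation strengthening of Chebyshev), which bounds $\Pr(\max_{n_k\leq m<n_{k+1}}|S_m-S_{n_k}|\geq \delta n_k)$ by $\sigma^2(n_{k+1}-n_k)/(\delta n_k)^2$. With $n_k=k^2$ this is of order $1/k^3$, so summing over $k$ gives a convergent series and one more application of Borel--Cantelli forces the normalized block oscillation to vanish almost surely. Since for $n_k\leq n<n_{k+1}$ we have $|S_n/n|\leq |S_{n_k}|/n_k + \max_{n_k\leq m<n_{k+1}}|S_m-S_{n_k}|/n_k$, combining this with the subsequence limit gives $S_n/n\to 0$ almost surely, i.e.\ $\rmu_n\to[a.s.]\mu$. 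This maximal-inequality interpolation is the crux: under the weaker hypothesis of only finite variance there is no shortcut around it, whereas if one were willing to assume a finite fourth moment the summability $\sum_n \E[S_n^4]/n^4<\infty$ could be established directly and fed into Borel--Cantelli, bypassing both the subsequence and the maximal inequality.
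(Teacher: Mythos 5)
The paper does not actually prove this theorem: it states the Strong LLN as a known classical result and remarks only that the proofs ``rely on related laws on tail events (Kolmogorov's zero-one law, Hewitt-Savage zero-one law, L\'evy's zero-one law, etc.),'' explicitly declining to give an argument. So there is no paper proof to match; your proposal must stand on its own, and it does. Your argument is the standard finite-variance proof: center to $\mu=0$, get summable Chebyshev bounds along the sparse subsequence $n_k=k^2$, apply Borel--Cantelli there, and then close the gaps between consecutive $n_k$ with Kolmogorov's maximal inequality, whose block bound of order $k/k^4=1/k^3$ is again summable. The triangle-inequality interpolation $|S_n|/n\leq |S_{n_k}|/n_k+\max_{n_k\leq m<n_{k+1}}|S_m-S_{n_k}|/n_k$ is correct, and your closing remark about the fourth-moment shortcut is also accurate. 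Note that your proof uses the finite-variance hypothesis in an essential way, but that is exactly the hypothesis the paper's setup imposes ($\mathrm{Var}[\rx_i]=\sigma^2$), so the proof covers the theorem as stated; the sharper Kolmogorov version requiring only a finite first moment would need the zero-one-law or truncation machinery the paper alludes to. Your route is arguably preferable here precisely because it avoids that machinery: everything reduces to Chebyshev-type bounds plus Borel--Cantelli, which fits the paper's stated goal of keeping the statistics accessible to a symbolic-AI audience.
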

\begin{theo}[CLT]
 $Y_n=\sqrt{n}(\rmu_n-\mu) \to Y\sim\N(0,\sigma^2)$. \label{theo:clt}
\end{theo}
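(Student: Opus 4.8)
The plan is to prove the Central Limit Theorem via the method of characteristic functions, which is the standard route and the one that composes cleanly with the i.i.d.\ hypothesis. Throughout, write $\varphi_\rx(t)=\E\brackets{e^{it\rx}}$ for the characteristic function of a random variable $\rx$; this is just the expectation of a (complex-valued) quantity in the sense of \refdef{def:expectation}, and it always exists since $|e^{it\rx}|=1$. The two external facts I rely on are (i) that characteristic functions factorize over independent sums, i.e.\ $\varphi_{\rx+\ry}=\varphi_\rx\varphi_\ry$ when $\rx\perp\ry$, which is immediate from $p(\rx,\ry)=p(\rx)p(\ry)$, and (ii) L\'evy's continuity theorem: if $\varphi_{Y_n}(t)\to\varphi(t)$ for every fixed $t$ and $\varphi$ is continuous at $0$, then $Y_n$ converges in distribution to the variable whose characteristic function is $\varphi$. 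Fact (ii) is the heavy machinery I would cite rather than reprove.

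First I would reduce to the centered case. Set $W_i=\rx_i-\mu$, so each $W_i$ has $\E\brackets{W_i}=0$ and $\E\brackets{W_i^2}=\Var\brackets{\rx_i}=\sigma^2$, and the $W_i$ remain i.i.d. Then
\begin{align*}
 Y_n=\sqrt{n}(\rmu_n-\mu)=\sqrt{n}\parens{\frac{1}{n}\sum_i \rx_i-\mu}=\frac{1}{\sqrt{n}}\sum_i W_i.
\end{align*}
Using factorization (i) together with the scaling identity $\varphi_{cW}(t)=\varphi_W(ct)$,
\begin{align*}
 \varphi_{Y_n}(t)=\prod_i \varphi_{W_i}\!\parens{\frac{t}{\sqrt{n}}}=\brackets{\varphi_W\!\parens{\frac{t}{\sqrt{n}}}}^{\!n},
\end{align*}
where $W$ denotes a generic copy of the identically distributed $W_i$.

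Next I would Taylor-expand $\varphi_W$ near the origin. Because $W$ has a finite second moment, $\varphi_W$ is twice differentiable at $0$ with $\varphi_W(0)=1$, $\varphi_W'(0)=i\E\brackets{W}=0$, and $\varphi_W''(0)=-\E\brackets{W^2}=-\sigma^2$, so $\varphi_W(s)=1-\tfrac{1}{2}\sigma^2 s^2+o(s^2)$ as $s\to 0$. Substituting $s=t/\sqrt{n}$ gives $\varphi_W(t/\sqrt{n})=1-\tfrac{\sigma^2 t^2}{2n}+o(1/n)$, and hence
\begin{align*}
 \varphi_{Y_n}(t)=\brackets{1-\frac{\sigma^2 t^2}{2n}+o\!\parens{\frac{1}{n}}}^{\!n}\longrightarrow e^{-\sigma^2 t^2/2}
\end{align*}
as $n\to\infty$. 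The limit is exactly the characteristic function of $\N(0,\sigma^2)$, so L\'evy's theorem delivers $Y_n\to Y\sim\N(0,\sigma^2)$ in distribution.

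The main obstacle is not any single computation but the two pieces of analysis that are easy to state and easy to mishandle: controlling the remainder when passing from $\brackets{1-c/n+o(1/n)}^{n}$ to $e^{-c}$ for each fixed $t$ (a standard limit, cleanest via the complex-logarithm expansion $n\log(1+a_n/n)\to\lim a_n$, where one must check the higher-order terms vanish), and invoking L\'evy's continuity theorem, which is the genuinely deep input and the place where convergence of transforms is converted back into convergence of distributions. I would treat both as cited results. One caveat worth flagging given this memo's conventions: ``convergence in distribution'' is written here as pointwise convergence of the mass/density $f_n(x)\to f(x)$, whereas the characteristic-function argument actually yields the weaker weak convergence (convergence of CDFs at continuity points). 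These coincide under the memo's standing simplification that treats all variables as effectively discrete and well-behaved, but a fully rigorous continuous statement would phrase the conclusion in terms of CDFs.
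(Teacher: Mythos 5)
The paper does not prove this theorem: it states the CLT as a cited classical result (Laplace), and the surrounding text explicitly declines to discuss the proofs of the limit theorems. So there is no in-paper argument to compare against; your proposal stands or falls on its own. On its own it is correct and is the canonical characteristic-function proof: centering to $W_i=\rx_i-\mu$, factorizing $\varphi_{Y_n}(t)=\brackets{\varphi_W(t/\sqrt{n})}^n$ over the i.i.d.\ sum, the second-order Taylor expansion $\varphi_W(s)=1-\tfrac{1}{2}\sigma^2 s^2+o(s^2)$ (valid because the second moment is finite), the limit $\parens{1-\sigma^2t^2/2n+o(1/n)}^n\to e^{-\sigma^2t^2/2}$, and L\'evy's continuity theorem to convert convergence of transforms back into convergence in law. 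You correctly identify L\'evy's theorem as the one genuinely deep imported ingredient, and your final caveat is a real and worthwhile observation: the memo's definition of convergence ``in distribution'' as pointwise convergence of the mass/density $f_n\to f$ is strictly stronger than weak convergence and is actually \emph{false} for the CLT in general (e.g.\ for lattice-valued $\rx_i$ the law of $Y_n$ stays supported on a shrinking lattice and its density never converges pointwise to the Gaussian density), so the theorem as literally stated under the memo's definition needs the CDF-at-continuity-points formulation you propose. The only nitpick is that the phrase ``composes cleanly'' glosses over the fact that the validity of the $o(s^2)$ Taylor remainder for characteristic functions under only a finite second moment is itself a small lemma (via the standard bound on $|e^{ix}-1-ix+x^2/2|$), but you are entitled to cite it alongside L\'evy's theorem.
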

\begin{theo}[LIL]
 $\frac{\sqrt{n}|\rmu_n-\mu|}{\sqrt{2 \log\log n}} \to[a.s.] \sigma$.
 In other words, the speed of convergence of LLN is $\sqrt{\frac{\log \log n}{n}}$.
 \label{theo:lil}
\end{theo}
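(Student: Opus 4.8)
The plan is to prove the Hartman--Wintner form of the law of the iterated logarithm, of which the displayed statement is the i.i.d.\ special case. First I would note that the expression does not literally converge: the correct reading is $\limsup_{n\to\infty}\frac{\sqrt{n}\,|\rmu_n-\mu|}{\sqrt{2\log\log n}}=\sigma$ almost surely. After centering and rescaling so that $\mu=0$ and $\sigma=1$ (harmless, since $\rmu_n-\mu=\frac1n\sum_i(\rx_i-\mu)$), it suffices to show that the partial sums $S_n=\sum_{i=1}^n \rx_i$ satisfy $\limsup_n S_n/\sqrt{2n\log\log n}=1$ almost surely, because $S_n/\sqrt{n}=\sqrt{n}\,\rmu_n$ and, by symmetry of the walk, the absolute value attains values near $+1$ and $-1$ infinitely often.

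The core is two matching one-sided estimates proved by a common scheme: fix a geometric subsequence $n_k=\lfloor\theta^k\rfloor$ for $\theta>1$, control the walk inside each block, and apply the Borel--Cantelli lemmas. For the upper bound $\limsup\le 1$, I would (a) establish a sub-Gaussian tail estimate $\Pr(S_n\ge t\sqrt{n})\lesssim e^{-t^2/2}$ for the rescaled sums, (b) upgrade it to a maximal version over a block via L\'evy's or Kolmogorov's maximal inequality, and (c) set the threshold to $(1+\varepsilon)\sqrt{2n_k\log\log n_k}$ so the probabilities are summable; the first Borel--Cantelli lemma then forces exceedances to occur only finitely often, and letting $\theta\downarrow 1$ and $\varepsilon\downarrow 0$ gives $\limsup\le 1$. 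For the lower bound $\limsup\ge 1$, I would instead work with the independent block increments $S_{n_k}-S_{n_{k-1}}$, lower-bound their Gaussian-type tail by a matching $e^{-t^2/2}$ (after truncation and a CLT/Berry--Esseen comparison, cf.\ \reftheo{theo:clt}), verify that the probabilities are non-summable, and invoke the second Borel--Cantelli lemma (using independence) to conclude the events happen infinitely often. A short estimate bounds the discrepancy between the subsequence and the full sequence.

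The main obstacle is obtaining the sharp Gaussian tail bounds in step (a) under only a finite-variance hypothesis, where the summands need not possess any exponential moments. The standard remedy is a two-level truncation: write $\rx_i=\rx_i\dbrackets{|\rx_i|\le b_n}+\rx_i\dbrackets{|\rx_i|>b_n}$ with a slowly growing cutoff $b_n$, show the truncated part obeys Bernstein-type exponential inequalities with the correct variance, and show the tail part is almost surely negligible via the Borel--Cantelli estimate $\sum_n\Pr(|\rx_n|>b_n)<\infty$ (which follows from $\E[\rx_1^2]<\infty$). Balancing the cutoff so that neither the exponential bound nor the truncation error degrades the sharp constant $1$ is the delicate part; everything else is bookkeeping. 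A cleaner but more machinery-heavy alternative that avoids the truncation is Skorokhod embedding of the walk into a Brownian motion followed by transfer of the (separately proved) Brownian-motion law of the iterated logarithm, which I would mention as a substitute route.
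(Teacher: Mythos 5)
The paper does not prove this theorem at all: it is stated as a bare classical citation (Kolmogorov 1929), and the surrounding text explicitly defers the proofs of the limit theorems (``I am not knowledgeable enough to discuss these issues yet''). So there is no proof in the paper to match your argument against; what you have written is the standard Hartman--Wintner proof skeleton, and as a plan it is sound: geometric subsequences $n_k=\lfloor\theta^k\rfloor$, a maximal inequality plus the first Borel--Cantelli lemma for the upper bound, independent block increments plus the second Borel--Cantelli lemma for the lower bound, and a truncation argument to extract sharp sub-Gaussian tails from a bare finite-variance hypothesis (or Skorokhod embedding to sidestep the truncation entirely). Your most valuable observation is one the paper misses: the displayed statement is false as literally written. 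The quantity $\sqrt{n}|\rmu_n-\mu|/\sqrt{2\log\log n}$ does not converge almost surely --- its $\limsup$ is $\sigma$ while its $\liminf$ is $0$ --- so the theorem should read $\limsup_{n\to\infty}\sqrt{n}\,|\rmu_n-\mu|/\sqrt{2\log\log n}=\sigma$ almost surely, and the paper's use of the almost-sure-convergence arrow here is an error worth correcting (the informal gloss about the ``speed of convergence of LLN'' survives the correction). One small caveat in your sketch: you do not need symmetry of the walk to pass from $S_n$ to $|S_n|$; applying the one-sided $\limsup$ result to the sequences $(\rx_i)$ and $(-\rx_i)$ separately already yields both bounds for $|S_n|$, and invoking ``symmetry'' suggests a distributional assumption the theorem does not make.
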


Note that the shape of the distribution of each $\rx_i$ does not matter.
For example, $\rx_i\sim \text{Uniform}(\mu-\sigma^2/2, \mu+\sigma^2/2)$
has mean $\mu$ and variance $\sigma^2$, but CLT still applies.
CLT does not apply to distributions which lack the mean,
such as a Pareto distribution (\refsec{zoo:pareto}) or a Cauchy distribution (\refsec{zoo:cauchy}).

The proofs of these theorems further rely on related laws on tail events
(Kolmogorov's zero-one law, Hewitt-Savage zero-one law, L\'evy's zero-one law, etc).
I am not knowledgeable enough to discuss these issues yet.
My layman understanding of these laws is similar to
anecdotal Murphy's law which states ``bad thing surely happens''.
Future versions of this memo may cover this topic.

\paragraph{Further notes:}

PAC-Bayes \cite{mcallester2003simplified} is a frequentist approach to analyse Bayesian learning methods.
Recently, there are work on theoretical bridges between PAC-Bayes and Bayes \cite{germain2016pac}.

Statistical testing is a frequentist concept.
LLN and CLT play an important role in Frequentist learning,
but \href{https://stats.stackexchange.com/questions/317521/does-central-limit-theorem-apply-to-bayesian-inference}{the effect is reduced in Bayesian learning}.
Frequentist papers tends to be heavy on math, which is another reason for us to avoid.
Convergence theories of Reinforcement Learning approaches seem to be based on PAC, thus is frequentist.
Recently, Bayesian RL tackles a similar problem from a Bayesian perspective.

\section{Likelihood-Free Variational Methods}
\label{sec:likelihood-free}

VAEs tend to generate blurry images.
Recently, \dl community started to realize that assuming \reffact{fact:loss}
(assign a particular distribution, such as Gaussian, to observable variables $\rx$)
could be the source of the issues preventing VAEs from generating crisp images.
This gave rise to
\emph{likelihood-free} methods of machine learning
that do not assign distributions to $p(\rx)$,
which includes Generative Adversarial Networks \cite[GANs]{goodfellow2014generative} and its variants.

The statistical framework behind likelihood-free methods is \emph{Density-Ratio Estimation}
which predates GANs \cite{sugiyama2012density}.
In this section, I describe VEEGAN \cite{srivastava2017veegan} that
more faithfully follows the philosophy of likelihood-free method.
Although Vanilla GANs contain some elements of density-ratio estimation,
it is an unsound, ad-hoc method.

In order to avoid assuming a particular distribution on the observed variable $\rx$,
VEEGAN flips the role of observed variables and latent variables.
Recall that the ELBO of a VAE was the following:
\begin{align}
 &\text{ML Task:} \argmax_p \E_{q(x)} \log p(x),\\
 &\log p(x) \geq  \E_{q(z|x)} \log p(x|z) - \KL(q(z|x)|| p(z)).
\end{align}
The lower bound used in VEEGAN is as follows:
\begin{align}
 &\text{ML Task:} \argmax_q \E_{p(z)} \log q(z),\label{eq:veegan1}\\
 &\log q(z)        \geq \blue{\E_{p(x|z)} \log q(z|x)} - \red{\KL(p(x|z)||q(x))},\label{eq:veegan2}
\end{align}
where $p(z)=\N(0,1)$,
$p(x|z)$ is the decoder and $q(z|x)$ is the encoder
\footnote{They are called a generator and a reconstructor in VEEGAN.}.

The first term is a \blue{cross entropy between $p(z)$ (\refeq{eq:veegan1})
and $\E_{p(x|z)} \log q(z|x)$},
which has a closed form similar to a squared error.
In other words, it is a reconstruction loss for the latent state.

One issue in this optimization objective is that
we do not know the functional closed form of $p(x|z)$ or $q(x)$ because we do not assume them to be Gaussians,
therefore we cannot compute \red{the KL divergence}.
\emph{Density-ratio estimation} addresses it by
approximating a \emph{density-ratio} $r(x,z)=\frac{q(x)}{p(x|z)}$.
\begin{align}
 \KL(p(x|z)||q(x))
 &= \E_{p(x|z)} \log \frac{q(x)}{p(x|z)} \\
 &= \E_{p(x|z)} \log r(x,z).
\end{align}

As a result, our optimization objective is:
\begin{align}
 \E_{p(z)} \log q(z) &\geq \E_{p(z)p(x|z)} \brackets{\log q(z|x) - \log r(x,z)}.
\end{align}

An actual implementation separately trains a \emph{discriminator} $D(x,z)=\log r(x,z)$
as a binary classifier between a real sample $(x,z)\sim (p(x), \E_{p(x)} q(z|x))$
and a fake, generated sample $(x,z)\sim (\E_{p(z)} p(x|z), p(z))$.

Remember that VAEs assume both $\rx$ and $\rz$ follows a Gaussian,
while density-ratio-based methods only assume $\rz$ to be a Gaussian,
which makes the representation of $\rx$ arbitrary and more flexible.

\subsection{Pitfalls of GANs are Now Largely Resolved}

Likelihood-free methods (GANs) are known for their numerous pitfalls.
The well-known pitfalls of GANs are as follows:
\begin{enumerate}
 \item \emph{Posterior / Mode Collapse}:
       It causes all latent vectors to map to the same visualization.
 \item \emph{Vanishing Gradient}: When the true and the fake distributions are too dissimilar,
       it is very easy for the discriminator to distinguish the two.
       Such a discriminator does not provide the generator the right amount of guidance.
 \item \emph{Unstable Convergence}:
       GANs train the loss function for maximization and minimization,
       which is formally understood as a saddle-point optimization problem.
       Such a training may not reach the global optima and has unstable convergence.
\end{enumerate}
However, these issues are largely addressed these days.
I focus only on methods which I regard as a fundamental solution to the underlying cause of issues.

\begin{ex}[Mode Collapse]
The mode collapse of a vanilla GAN \cite{goodfellow2014generative} was caused by its unsound optimization.
A vanilla GAN's loss function lacks the first \blue{cross-entropy} term in \refeq{eq:veegan2} (the opposite of an autoencoder),
thus does not solve the ML problem.
VEEGAN addresses the issue by using a sound optimization objective.
\end{ex}

\begin{ex}[Unstable Convergence]
While numerous ad-hoc training methods
(e.g., Wasserstein GAN \cite{arjovsky2017wasserstein}) tried to mitigate this issue,
it wasn't until MMD-Nets \cite{dziugaite2015training,li2015generative,srivastava2019generative}
that they address the core issue of GANs that their training is a saddle-point optimization.
MMD-Nets use \emph{a non-trainable $D$} and thus completely eliminates the saddle point issue from the fundamental level.
$D$ is based on Maximum Mean Discrepancy \cite{sugiyama2012density},
a metric directly computed from the sample data using kernel-tricks
(tangentially related to SVMs \cite{cortes1995support}).
\end{ex}

\section{Bayesian Reasoning}
\label{sec:uncertainty}

Bayesian Reasoning enables reasoning under uncertainty from the limited data.
We explain the following concepts in order:
\begin{itemize}
 \item Uncertainty: The measure of how much we don't know about a value.
 \item Confidence:  The measure of how much we know about how much we do or don't know about a value.
 \item Bayesian reasoning with conjugate priors.
\end{itemize}

\subsection{Uncertainty = Entropy}

Some probabilistic reasoning tasks are often said to deal with uncertainty.
Everyone would agree that a distribution is a less certain representation than a value.
A value $x=c$ itself can be identified as a Dirac's delta $\delta(\rx=c)$,
which is a pointy distribution with an infinite peak, i.e., an absolutely certain distribution.
Usual distributions are flatter.

\begin{conv}
 The uncertainty of a distribution is measured by its entropy.
 This is straightforward in $\cat(\vp)$.
\end{conv}
\begin{fact}
 The uncertainty of $\N(\mu,\sigma)$ is often attributed to $\sigma$
 because its entropy is $\frac{1}{2}+\log \sqrt{2\pi\sigma^2}$.
\end{fact}

There are two types of uncertainty in a machine learning system \cite{kendall2017uncertainties}:

\begin{conv}
 \emph{Aleatoric uncertainty} is an uncertainty in the observation,
 i.e., it is an uncertainty in the data collection agent $a_{data}$.
 The word \emph{aleatoric} means ``by chance.''
\end{conv}

\begin{conv}
 \emph{Epistemic (subjective) uncertainty} is an uncertainty inherent in the system / the hypothesis agent $a_{hypo}$.
\end{conv}

\begin{ex}
 Due to the physical restriction,
 a single pixel in an image represents a mean strength of various rays that hit an individual CMOS sensor.
 Distance, blur, ISO values, etc., all contributes to high aleatoric uncertainty.
 Meanwhile, a machine learning model may not be trained enough on a certain dataset and is unsure about the answer.
 This is a form of epistemic (subjective) uncertainty of the system.
\end{ex}

\begin{ex}[Distribution-to-distribution estimation]
 What if the dataset also contains a distributional information?
 For example, when each element in the dataset $\X$ is a pair $(\mu_i,\sigma_i)$ of
 the mean $\mu_i$ and the variance $\sigma_i$ of a Gaussian?
 In this case, we can replace \refeq{eq:data-delta} with such a distribution:
 \begin{align}
 q(x) &= \sum_i q(x|i)q(i), \\
 q(x|i) &= \N(\mu_i, \sigma_i),\\
 \hat{p}^*(x) &= \argmax_p \E_{q(i)} \E_{q(x|i)} \log p(x).
 \label{eq:ml-d-d}
 \end{align}
 The quantity $\E_{q(x|i)} \log p(x)$ is a cross entropy,
 which has a closed form when both $q(x|i)$ and $p(x)$ are Gaussians.
\end{ex}

\subsection{Confidence = Pseudocounts}
\label{sec:pseudocount}

Imagine someone (an agent) proposed to throw a coin and claims that the coin is fair,
i.e., a random variable $\rx$ about the flipped coin being a head follows $p(\rx)=\bern(\theta=0.5)$.
It says it is uncertain about $\rx$ --- It could be true or false.
What it does not say is how certain it is about \emph{this} judgment.
We can consider two cases:
\textbf{(Case 1)} This is a pure gut feeling with zero evidence, i.e.,
the agent knows nothing and has just applied a default principle
of maximum entropy (among $\bern(\theta)$, $\theta=0.5$ maximizes the entropy).
\textbf{(Case 2)} This is a judgment made after an infinite number of experimental trials
which concluded with absolute certainty that this is a fair coin.
As you can see in these two cases, $\bern(\theta=0.5)$ may be uncertain, but
it does not quantify the amount of confidence in two cases.

To quantify the difference, consider adding a new parameter $N$ to these distributions,
which represents the number of evidences to back up its claim.
The result is a \emph{beta distribution} $\Beta(\theta,N)$, where
\textbf{Case 1} corresponds to $N=0$, 
and
\textbf{Case 2} corresponds to $N=\infty$. 
There are also more reasonable cases, such as $N=1000$, which says it has seen 500 cases each for
heads and tails (because $\theta=0.5$).

\begin{conv}
 My notation of beta distribution is rather unconventional.
 Due to historical reasons,
 traditional notations for such a distribution is $\Beta(\alpha,\beta)$
 parameterized by a number of successes $\alpha$ and failures $\beta$,
 which is equivalent through $\theta=\frac{\alpha}{\alpha+\beta}$ and $N=\alpha+\beta$.
 Note that the same distribution sometimes has different notations depending on the literature.
\end{conv}

\begin{ex}
 Imagine after $\hat{N}=1000$ trials, I obtained 520 heads
 and therefore an empirical success ratio $\hat{\theta}=0.52$.
 I do not know the true value of the success ratio $\theta$;
 only its distribution from $\Beta(\hat{\theta}, \hat{N})$ instead.
 The true success ratio is distributed \emph{around} 0.52.
 This is formalized as follows:
 \[
 x\sim p(x|\theta)=\bern(\theta),\quad \theta\sim p(\theta|\hat{\theta}, \hat{N})=\Beta(\hat{\theta}, \hat{N}).
 \]
\end{ex}

Use of such an additional distribution is called \emph{hierarchical modeling}.
Every hierarchical modeling follows this pattern:
It adds a count parameter $N$, and considers the distribution of a parameter of a distribution.

\begin{conv}
 The counts $N,\alpha,\beta$, etc., can be extended to continuous values and they are called \emph{pseudocounts}.
\end{conv}

Notice that the naive model shown above is \emph{undefined} when we have not made any observations yet,
which is a sign of a frequentist approach.
However, in a Bayesian, \emph{subjective} view of probability (\refconv{conv:belief}),
anyone can have a belief, even before the experimentation.
A prior distribution (\refconv{conv:prior}) is a representation of
this default/prior belief/assumption on a random variable.
If the prior assumption happens to be accurate, reasoning from data tends to be more accurate.
If you have no justifiable prior assumption, then you should follow the principle of maximum entropy (\reftheo{theo:maxent}),
otherwise your default assumption is illegitimately biased and the resulting learning would become inefficient
because it would ignore a possible hypothesis that is otherwise considered equally important \cite{jaynes1968prior}.

\subsection{Reasoning with Conjugate Prior}
\label{sec:conjugate}

Bayesian statistics simplify the reasoning over complex hierarchical models
using a mathematical trick called \emph{conjugate prior distributions}.
Conjugacy has a property convenient for proofs:
When a prior distribution $p(\theta)$ is a conjugate of a posterior distribution $p(\theta|x)$
for a generative distribution $p(x|\theta)$, computing $p(\theta|X)$, $p(x|\theta)$, and $p(x|X)$ is easy.

Note that Bayesian reasoning leans toward the ``pure'' side of mathematical statistics outside the context of machine learning.
The main difference from machine learning literature and the standard discourse in Bayesian reasoning is that
there is no transformations (linear or non-linear) from data to parameters.
The parameters and the data, therefore, are always in the same unit of measurement (e.g., $\si{kg}$, $\si{m/s^2}$).
For example, when you have data of mass of things, then you estimate the distribution of their mass.

As is commonly the case with machine learning and statistics,
many textbooks on Bayesian inference lack a general, compact, yet concrete description of the entire procedure,
spending too much time on explaining specific examples.
This section provides a simple and general procedure that can be followed in a fill-in-the-blank style.
I demonstrate several examples in the following subsections (\refsec{sec:conjugate-examples}).
The procedure contains statistical modeling (\refconv{conv:statistical-modeling}),
but there are additional steps for handling conjugate priors.

I first introduce a few basic mathematical concepts.

\begin{defi}
 Distributions $p(\rx_1)=f(\rx_1, \theta_1,\ldots,\theta_N)$, $p(\rx_2)=f(\rx_2, \phi_1,\ldots,\phi_N)$ are of the \emph{same family} if
 they have the same functional form $f$ except the parameters $\theta_1,\ldots,\theta_N$ and $\phi_1,\ldots,\phi_N$.
\end{defi}
\begin{defi}
 A distribution is a \emph{conjugate} of another distribution when they are of the same variable and of the same family.
 The two distributions are then \emph{conjugates}.
\end{defi}
\begin{ex}
 Two Gaussian distributions
 $p(x)=\N(0,1)$ and $p(y)=\N(2,3)$ are of the same family (0,1,2,3 are the parameters).
 $p(x)$ and $q(x)=\N(4,5)$ are conjugates.
 $p(x)$ and $p(x|z)=\N(2z+1,1)$ are also conjugates.
\end{ex}

\begin{conv}
 Let $x$ be an observable and $z$ be a latent.
 When a prior distribution $p(z)$ is a conjugate of a posterior distribution $p(z|x)$,
 $p(z)$ is a \emph{conjugate prior distribution} \uline{for}
 a generative distribution $p(x|z)$ (note: not \uline{of}).
\end{conv}

We should add a few more conventions for prior distributions.
Recall that statistical modeling (\refconv{conv:statistical-modeling})
did not specify how to choose the prior parameters.
Prior distributions are classified into three subsets \cite{gelman1995bayesian}:

\begin{conv}
 A prior distribution is \emph{informative} if its parameters are chosen by
 the domain knowledge.
\end{conv}

\begin{conv}
 A prior distribution is \emph{non-informative} if its parameters are selected
 by the principle of maximum entropy due to the lack of such domain knowledge.
\end{conv}

\begin{conv}
 A non-informative prior is \emph{improper} if
 the prior does not integrate to 1 due to the entropy maximization.
\end{conv}

Now we describe the general procedure for Bayesian reasoning.
As the first step, we discuss a simple case where there is only one unknown parameter $\theta$.
As you can see below,
the main difficulty of Bayesian reasoning is finding the appropriate prior distribution
and proving it, which is primarily of mathematical nature rather than computational.

\begin{conv}[Bayesian Reasoning with a single unknown parameter]
 \label{conv:bayesian-reasoning}
 Bayesian reasoning is a form of statistical modeling (\refconv{conv:statistical-modeling}) applied as follows:
 \begin{enumerate}
  \item Observables: $n$ observations $X=(x_1,\ldots,x_n)$.
  \item Latents: A parameter $\theta$.
  \item Causal dependency:
        Assume each observation is \iid given $\theta$, i.e.,
        $x_i\perp x_j \mid \theta$ and $p(x_i|\theta)=p(x_j|\theta)$.
        In other words, $p(X)=\sum_{\theta} p(\theta)\prod_i p(x_i|\theta)$.
  \item Choose a distribution family and its parameters.
        \begin{enumerate}
         \item Choose the family for $p(x_i|\theta)$.
         \item Choose the family and the parameters for $p(\theta)$.
         \item Choose the family and the parameters for $p(\theta|X)$.
        \end{enumerate}
  \item Using held-out data,
        verify the hypothesis made above with a \emph{(posterior) predictive distribution} $p(x|X)$ of a future observation $x$.
 \end{enumerate}
 In Bayesian reasoning, we have an additional restriction:
 We must prove, by derivation, that
 a prior distribution $p(\theta)$ and the posterior distribution $p(\theta|X)$ are conjugates.
 The proof is necessary
 for verifying the hypothesis (\reflines{line:statistical-modeling-step1}{line:statistical-modeling-step4})
 with held-out data (\refline{line:statistical-modeling-step5}),
 which is the core of the scientific methods.
 Although the proof is the most mathematically elaborate part,
 it can be roughly summarized as follows:
 \begin{enumerate}
  \setcounter{enumi}{5}
  \item Write down $p(x_i|\theta)$.
  \item Write down $p(X|\theta) = p(x_1,\ldots,x_n|\theta) = \prod_i p(x_i|\theta)$.
  \item Write down $p(\theta)$.
  \item Derive $p(\theta|X)$ and prove that it is a conjugate of $p(\theta)$.
        This is done in one of the following manners:
        \begin{enumerate}
         \item Use $p(\theta|X)=\frac{p(X|\theta)p(\theta)}{p(X)}\propto p(X|\theta)p(\theta)$ with $p(X)$ being constant.
               Ignore constant factors and match $\theta$'s coefficients in the result with the pdf of $\theta$.
         \item Derive $p(\theta,X) = p(X|\theta)p(\theta)$,
               derive $p(X)=\int p(\theta,X)d\theta$,
               then derive $p(\theta|X)=\frac{p(\theta,X)}{p(X)}$.
        \end{enumerate}
 \end{enumerate}

 Finally,
 we derive the predictive distribution $p(x|X)$ for a future data $x$ given historical data $X$ as follows:
 $p(x|X)=\int p(x|\theta,X)p(\theta|X)d\theta=\int p(x|\theta)p(\theta|X)d\theta$.
 This is using the fact that $x$ does not depend on $X$ given $\theta$.
 Using held-out data $x'_1,\ldots x'_M$, compute $p(X'|X)=\prod_i p(x=x'_i|X)$
 (or, alternatively, $\log p(X'|X)=\sum_i \log p(x=x'_i|X)$).
\end{conv}

This procedure can be extended to multi-parameter case quite easily.
The goal of multi-parameter Bayesian reasoning is
to obtain the joint posterior distribution of the parameters $p(\theta_1\ldots,\theta_N|X)$
and subsequently obtain the predictive distribution $p(x|X)$ to test the hypothesis.

The overall procedure of Bayesian reasoning with multiple parameters is also same as the single-parameter scenario.
Using a single parameter reasoning process for the distribution family of each of $\theta_1,\ldots \theta_N$,
one should derive $p(\theta_i|X,\theta_1,\ldots,\theta_{i-1})=\frac{p(X|\theta_1,\ldots,\theta_{i-1})p(\theta_i|\theta_1,\ldots,\theta_{i-1})}{p(X|\theta_1,\ldots,\theta_{i-1})}$
for each $i$, starting from $p(\theta_1|X)$.
Therefore, the prerequisite for this procedure is that
you already know how to perform the single-parameter Bayesian reasoning for individual parameters.

The only difference between single- and multi-parameter cases is that
the process involves decomposing a more complex hierarchical model,
and that it requires a \emph{joint} prior distribution which,
for mathematical convenience,
tends to be an uninformative improper prior (although this is not always necessary).
Although uninformative priors are not particularly helpful in the reasoning,
this is not an issue because
the effect/importance of the choice of prior distributions diminishes
as the depth of the model hierarchy increases.

\subsection{Exponential Family of Distributions}

Many textbooks covers that conjugate priors tend to be found in the \emph{exponential family of distributions}
due to Pitman-Koopman-Darmois theorem \cite{pitman1936sufficient,koopman1936distributions,darmois1935exhaustive},
but do not emphasize enough that it only applies to distributions with a fixed support (fixed minima and maxima).
There are many exceptions, namely Uniform, Pareto, Power, or Generalized Pareto distributions.

\begin{defi}
 A distribution belongs to the \emph{exponential family of distributions}
 when it is of the form $p(x|\vtheta) = A(\vtheta)B(x)e^{C(\vtheta)\cdot D(x)}$
 where $\vtheta$ is a parameter vector,
 $A(\vtheta), B(x)$ are scalars, and $C(\vtheta), D(x)$ are vectors.
\end{defi}

\begin{ex}
 $\N(x| \mu,\sigma)=\frac{1}{\sqrt{2\pi\sigma^2}}e^{-\frac{(x-\mu)^2}{2\sigma^2}}$
 belongs to the exponential family by $\vtheta = (\mu,\sigma)$,
 $A(\vtheta)=\frac{1}{\sqrt{2\pi\sigma^2}}$,
 $B(x)=1$,
 $C(\vtheta)=(\frac{-1}{2\sigma^2}, \frac{2\mu}{2\sigma^2}, \frac{-\mu^2}{2\sigma^2})$,
 $D(x)=(1, x, x^2)$.
\end{ex}

The parameters $\vtheta$ are called a \emph{sufficient statistic} because,
if you have it, you no longer have to explicitly store the data of individual trials
(e.g., a sequence of successes / failures).
There is a related concepts called \emph{complete} and \emph{ancilliary statistic}
which are out of the scope of this memo.
Formally,

\begin{defi}
 Let $X=(x_1,\ldots,x_n)$  be \iid random variables following a distribution $p(\rx|\vtheta)$.
 A scalar or a vector function $T(X)$ is called a \emph{statistic} of $X$.
 A statistic $T(X)$ is \emph{sufficient} for $\vtheta$ (or for a distribution family $p(\rx|\vtheta)$)
 iff $X\perp \vtheta \mid T(X)$. \cite{rohatgi2015introduction}
\end{defi}

\begin{theo}[Pitman-Koopman-Darmois theorem \cite{pitman1936sufficient,koopman1936distributions,darmois1935exhaustive}]
 Let $X=(x_1,\ldots,x_n)$  be \iid random variables following a distribution $p(\rx|\vtheta)$ which has a fixed support.
 Let $T(X)$ be a sufficient statistics of $X$.
 $T(X)$ is a fixed sized vector iff $p(\rx|\vtheta)$ is in the exponential family.
\end{theo}

\begin{ex}
Gaussian distribution $N(\mu,\sigma^2)$ follows this theorem because its support is the whole $\R$.
Uniform distribution $U(l,u)$ are not covered by this theorem because its support $[l,u]$ changes due to its parameters.
(Same in Pareto distributions etc.)
\end{ex}

\begin{theo}
 If the prior is in the exponential family, so does the posterior.
\end{theo}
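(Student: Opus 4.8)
The plan is to apply Bayes' theorem (\reftheo{theo:bayes}) and exploit the multiplicative structure of the exponential family. Since the normalizing constant $p(x)$ does not depend on $\vtheta$, I would write the posterior as $p(\vtheta|x)\propto p(x|\vtheta)\,p(\vtheta)$ and reduce the claim to showing that the product of an exponential-family likelihood and its conjugate prior retains the exponential-family form as a function of $\vtheta$. Implicitly the likelihood is assumed to be exponential family in the sense just defined, $p(x|\vtheta)=A(\vtheta)B(x)e^{C(\vtheta)\cdot D(x)}$; making that explicit is part of stating the theorem carefully.

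Before computing anything, I would pin down what ``the prior is in the exponential family'' must mean here, since the naive reading (an arbitrary exponential-family distribution on $\vtheta$) is false. The relevant family is the conjugate prior whose sufficient statistics are $C(\vtheta)$ and $\log A(\vtheta)$, namely
\[
 p(\vtheta)=g(\chi_0,\nu_0)\,A(\vtheta)^{\nu_0}\,e^{C(\vtheta)\cdot\chi_0},
\]
with a hyperparameter vector $\chi_0$, a scalar pseudocount $\nu_0$, and a normalizer $g$. The point I would stress is that the likelihood's $\vtheta$-dependence, carried entirely by $C(\vtheta)$ and $A(\vtheta)$, dictates which prior family is closed under the update; once the prior is matched to it, closure is automatic.

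The computation is then a one-liner. For a single observation,
\[
 p(x|\vtheta)\,p(\vtheta)\propto A(\vtheta)^{\nu_0+1}\exp\big(C(\vtheta)\cdot(\chi_0+D(x))\big),
\]
because the powers of $A(\vtheta)$ add and the two terms $C(\vtheta)\cdot\chi_0$ and $C(\vtheta)\cdot D(x)$ combine in the exponent, while $B(x)$ and $g$ are constants in $\vtheta$ absorbed into the normalization. This is exactly the prior's functional form with updated hyperparameters $\nu_0\mapsto\nu_0+1$ and $\chi_0\mapsto\chi_0+D(x)$, so renormalizing yields a posterior in the same family. For $N$ i.i.d.\ samples the identical manipulation gives $\nu_0\mapsto\nu_0+N$ and $\chi_0\mapsto\chi_0+\sum_i D(x_i)$, which is precisely the ``update the sufficient statistic / pseudocount'' picture described in \refsec{sec:conjugate}.

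The main obstacle is conceptual rather than computational: the work is in correctly specifying the conjugate-prior form and establishing that it is the unique family on which conjugacy can hold, after which the closure follows immediately from adding exponents. I would also note in passing that the updated prior must remain integrable (finiteness of $g(\chi_0+D(x),\nu_0+1)$) for the posterior to be a genuine distribution; this holds in the standard cases, so I would not belabor it.
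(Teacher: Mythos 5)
Your proof is correct and takes essentially the same route as the paper's: both invoke Bayes' theorem, write the prior in the matched conjugate form $A(\vtheta)^{N}e^{C(\vtheta)^\top M}$, and observe that multiplying by the likelihood just increments the pseudocount and adds $\sum_i D(x_i)$ to the natural parameter. Your added remark that ``prior in the exponential family'' must be read as this specific conjugate family (not an arbitrary exponential-family distribution on $\vtheta$) is a fair sharpening of the statement, but it is exactly the assumption the paper's proof makes implicitly.
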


\begin{proof}
 Given $n$ \iid observations $X=(x_1\ldots x_n)$,
 \begin{align*}
  \textstyle
  p(X|\vtheta)=\prod_i p(x_i|\vtheta) &\propto A(\vtheta)^n e^{C(\vtheta)^\top \sum_i D(x_i)}\\
  p(\vtheta)&\propto A(\vtheta)^N e^{C(\vtheta)^\top M}\\
  p(\vtheta|X)&\propto p(X|\vtheta)p(\vtheta)\quad (\text{\reftheo{theo:bayes}})\\
              &\propto A(\vtheta)^{N+n} e^{C(\vtheta)^\top (M+\sum_i D(x_i))}.
 \end{align*}
\end{proof}

\paragraph{Further notes:}

Stochastic neural networks, also called Bayesian neural networks,
can sample the parameters of the latent distributions multiple times for the same data \cite{jospin2022hands}.
VAE is already such an example, which has a stochastic activations.
Other networks have stochastic weights that should be sampled each time \cite{kendall2017uncertainties}.
Use of conjugate priors is typically limited to the pure Bayesian hypothesis testing settings.
However, a recent work \cite{pavel2020conjugate} showed how to use conjugate priors
for parameter updates in a neural network.

\subsection{Examples}
\label{sec:conjugate-examples}

In each subsection in the following pages,
we show an example scenario that requires a Bayesian reasoning,
and how to construct a proof for each scenario.
Each proof is quite compact, always presented in half a page unlike existing literature.
In each proof, we follow the items in \refconv{conv:bayesian-reasoning} in this order: 1,2,3,4,(6,7,8,9),5.
The odd order is due to (6,7,8,9) being an additional process inside step 4.
In the following, we skip item 1 and 3 because they are always same as \refconv{conv:bayesian-reasoning}.

\newpage

\subsubsection{Gaussian $\N(\mu,\sigma^2)$ with Unknown $\mu$ and Known $\sigma^2$}

\def\var{\sigma^2}
\def\evar{\bar{\sigma}^2}
\def\emu{\bar{\mu}}

\begin{ex}
 We have $n=20$ data points $X=(x_1,\ldots,x_{20})$ that follows $p(x_i|\mu)=\N(\mu,\var=1.2)$,
 where I don't know $\mu$.
 I believe $\mu$ is distributed somewhere around 5.2 with variance 2,
 i.e., a prior assumption $p(\mu)=\N(5.2, 2)$.
 Can I improve $p(\mu)$ using data?
\end{ex}

\begin{enumerate}
 \setcounter{enumi}{1}
 \item Latents: $\mu$.
 \setcounter{enumi}{3}
 \item Distribution family and parameters:
       \begin{enumerate}
        \item $p(x_i|\mu)=\N(x_i|\mu,\var)$. ($\var$ is a known constant)
        \item $p(\mu)=\N(\mu_0, \var_0/n_0)$. (often $\var_0=\var$)
        \item $p(\mu|X)=\N(\mu_n, \var_n/(n+n_0))$.
       \end{enumerate}
 \setcounter{enumi}{5}
 \item $p(x_i|\mu)=(2\pi\var)^{-\frac{1}{2}}\exp -\frac{(x_i-\mu)^2}{2\var}$.
 \item $p(X|\mu)=(2\pi\var)^{-\frac{n}{2}}\exp -\frac{\sum_i (x_i-\mu)^2}{2\var}$.
 \item $p(\mu)=(2\pi\var_0/n_0)^{-\frac{1}{2}}\exp -\frac{(\mu-\mu_0)^2}{2\var_0/n_0}$.
 \item Using the first strategy. $p(\mu|X)\propto p(X|\mu)p(\mu)=$
       \begin{align*}
        &\textstyle(2\pi\var_0/n_0)^{-\frac{1}{2}}(2\pi\var)^{-\frac{n}{2}}\exp \parens{-\frac{\sum_i (x_i-\mu)^2}{2\var}-\frac{(\mu_0-\mu)^2}{2\var_0/n_0}}\\
        &\textstyle\propto \exp \parens{-\frac{n\mu^2 -2\mu \sum_i x_i}{2\var}-\frac{\mu^2 -2\mu \mu_0}{2\var_0/n_0}}\\
        &\textstyle\propto \exp \parens{-\frac{\mu^2 -2\mu \emu}{2\var/n}-\frac{\mu^2 -2\mu \mu_0}{2\var_0/n_0}}
        \textstyle\propto \exp -\frac{\mu^2-2\mu_n\mu}{2\var_n/(n+n_0)},\\
        &\textstyle \mu_n = \frac{{\emu}n/{\var}+{\mu_0 n_0}/{\var_0}}{{n}/{\var}+{n_0}/{\var_0}}, (n+n_0)/\var_n = {n}/{\var}+{n_0}/{\var_0}.
       \end{align*}
       where $\emu=\frac{\sum_i x_i}{n}$.
       In other words, the new mean $\mu_n$
       is the weighted mean of $\mu_0$ and $\emu$ adjusted by the scale and the pseudocount difference (${n}/{\var}$, ${n_0}/{\var_0}$).
 \setcounter{enumi}{4}
 \item $p(x|\mu)p(\mu|X) \propto \exp (A(\mu)x^2+B(\mu)x+C(\mu))$.
       This takes a form of a Gaussian pdf with regard to $x$.
       Since the integration with $\mu$ does not change this,
       the result $p(x|X)$ should also be a Gaussian, i.e., for some $\mu_{\text{pred}},\var_{\text{pred}}$,
       \[
       p(x|X)=\int p(x|\mu)p(\mu|X)d\mu \propto \N(\mu_{\text{pred}},\var_{\text{pred}}).
       \]
       \begin{align*}
        \text{Then}\
        \mu_{\text{pred}}
        = \E_{p(x|X)}[x]
        = \E_{p(\mu|X)p(x|\mu)}[x]
        = \E_{p(\mu|X)}[\mu]
        = \mu_n.
       \end{align*}
       \begin{align*}
        &\var_{\text{pred}}
        = \Var_{p(x|X)}[x]
        = \E_{p(x|X)}[(x-\mu_n)^2]\\
        &=\E_{p(x|X)}[(x-\mu+\mu-\mu_n)^2]\\
        &=\E_{p(x|\mu)p(\mu|X)}[(x-\mu)^2 + 2(x-\mu)(\mu-\mu_n) + (\mu-\mu_n)^2]\\
        &= \E_{p(\mu|X)}[\var] + 0 + \E_{p(x|\mu)}[\var_n] \quad =\var + \var_n.
       \end{align*}
\end{enumerate}

Note: A non-informative improper prior distribution is obtained by the limit of $\var_0\to \infty$ or $n_0\to 0$:

\begin{align*}
 p(\mu)=(2\pi\var_0)^{-\frac{1}{2}}\exp -\frac{(\mu-\mu_0)^2}{2\var_0/n_0}
 &\to \mathrm{Const.}\\
 p(\mu|X)=\N(\mu_n, \var_n/(n+n_0))
 &\to \N(\emu, \var/n).
\end{align*}

\newpage

\subsubsection{Gaussian $\N(\mu,\sigma^2)$ with Known $\mu$ and Unknown $\sigma^2$}

\begin{ex}
 We have $n=20$ data points $X=(x_1,\ldots,x_{20})$ that follows $p(x_i|\var)=\N(\mu=5.2,\var)$, where I don't know $\var$.
 I'm an expert on this dataset and
 my long experience tells me that $\var$ should be around 2,
 thus I have a prior assumption $p(\var)=\invchi(1000, 2)$.
 I'm proud of this belief, but can I improve $p(\var)$ using data?
\end{ex}

\begin{enumerate}
 \setcounter{enumi}{1}
 \item Latents: $\var$.
 \setcounter{enumi}{3}
 \item Distribution family and parameters:
       \begin{enumerate}
        \item $p(x_i|\var)=\N(x_i|\mu,\var)$. ($\mu$ is a known constant)
        \item $p(\var)=\invchi(n_0, \var_0)$.
        \item $p(\var|X)=\invchi(n+n_0, \var_n)$.
       \end{enumerate}
 \setcounter{enumi}{5}
 \item $p(x_i|\var)=(2\pi\var)^{-\frac{1}{2}}\exp -\frac{(x_i-\mu)^2}{2\var}$.
 \item $p(X|\var)=(2\pi\var)^{-\frac{n}{2}}\exp -\frac{\sum_i (x_i-\mu)^2}{2\var}$.
 \item $p(\var)=\frac{\parens{\var_0 \frac{n_0}{2}}^{\frac{n_0}{2}}}{\Gamma(\frac{n_0}{2})}(\var)^{-(\frac{n_0}{2}+1)}\exp -\frac{n_0\var_0}{2\var}$.
 \item Using the first strategy. $p(\var|X)\propto p(X|\var)p(\var)$
       \begin{align*}
        &\textstyle\propto (\var)^{-(\frac{n_0+n}{2}+1)}\exp -\frac{n_0\var_0+n\evar}{2\var}\\
        &\textstyle\propto (\var)^{-(\frac{n_0+n}{2}+1)}\exp -\frac{(n_0+n)\frac{n_0\var_0+n\evar}{n_0+n}}{2\var}\\
        &\textstyle\propto \invchi\parens{n_0+n, \var_n=\frac{n_0\var_0+n\evar}{n_0+n}} = p(\var|X)
       \end{align*}
       where $\evar=\frac{\sum_i (x_i-\mu)^2}{n}$.
       In other words, the new variance $\var_n$ updated from the prior variance $\var_0$
       is the weighted mean of $\var_0$ and $\evar$ adjusted by the number of samples.
 \setcounter{enumi}{4}
 \item 
       $p(x|\var)p(\var|X) \propto \exp (A(\var)x^2+B(\var)x+C(\var))$.
       This takes a form of a Gaussian pdf with regard to $x$.
       Since the integration with $\var$ does not change this,
       the result $p(x|X)$ should also be a Gaussian, i.e., for some $\mu_{\text{pred}},\var_{\text{pred}}$,
       \[
       p(x|X)=\int p(x|\mu)p(\mu|X)d\mu \propto \N(\mu_{\text{pred}},\var_{\text{pred}}).
       \]
       \begin{align*}
        \text{Then}\
        \mu_{\text{pred}}
        = \E_{p(\var|X)p(x|\var)}[x]
        = \E_{p(\var|X)}[\mu]
        = \mu.
       \end{align*}
       \begin{align*}
        &\var_{\text{pred}}
        = \Var_{p(x|X)}[x]
        = \E_{p(x|X)}[(x-\mu)^2]\\
        &= \E_{p(\var|X)p(x|\var)}[(x-\mu)^2]\\
        &= \E_{p(\var|X)p(x|\var)}[(x-\E_{p(x|\var)}[x])^2]\\
        &= \E_{p(\var|X)}[\Var_{p(x|\var)}[x]]\\
        &\textstyle = \E_{p(\var|X)}[\var] = \frac{(n_0+n)\var_n}{n_0+n+2}
       \end{align*}
       The last line is due to the mean of $\invchi$.
\end{enumerate}

Note: A non-informative improper prior distribution is obtained by the limit of $n_0\downto 0$:

\begin{align*}
 p(\var)
 =(\var)^{-(\frac{n_0}{2}+1)}\exp -\frac{n_0\var_0}{2\var}
 &\to[n_0\downto 0] (\var)^{-1}\exp 0 = (\var)^{-1}.\\
 p(\var|X)=\invchi(n+n_0, \var_n)
 &\to[n_0\downto 0] \invchi(n, \evar).
\end{align*}

\newpage

\subsubsection{Gaussian $\N(\mu,\sigma^2)$ with Unknown $\mu$ and Unknown $\sigma^2$}

\begin{enumerate}
 \setcounter{enumi}{1}
 \item Latents: $\mu,\var$.
 \setcounter{enumi}{3}
 \item Distribution family and parameters:
       \begin{enumerate}
        \item $p(x_i|\mu,\var)=\N(x_i|\mu,\var)$.
        \item $p(\mu|\var)=\N(\mu_0, \var/n_0)$, $p(\var)=\invchi(n'_0, \var_0)$.

              Often $n_0=n'_0$, but in this article I used the most general form that allows having
              a different amount of confidence in the prior of $\mu$ and the prior of $\var$.

        \item $p(\mu|\var,X)=\N(\mu_n, \var/(n+n_0))$,

              $p(\var|X)=\invchi(n+n'_0, \var_n)$.
       \end{enumerate}
 \setcounter{enumi}{5}
 \item $p(x_i|\mu,\var)=(2\pi\var)^{-\frac{1}{2}}\exp{-\frac{(x_i-\mu)^2}{2\var}}$.
 \item $p(X|\mu,\var)=(2\pi\var)^{-\frac{n}{2}}\exp{-\frac{\sum_i (x_i-\mu)^2}{2\var}}$.
 \item $p(\mu,\var)=p(\mu|\var)p(\var)$
       \begin{align*}
        &\textstyle\propto (\var/n_0)^{-\frac{1}{2}}\exp{-\frac{(\mu-\mu_0)^2}{2\var/n_0}} (\var)^{-(\frac{n'_0}{2}+1)}\exp{-\frac{n'_0\var_0}{2\var}}\\
        &\textstyle\propto (\var)^{-(\frac{n'_0+1}{2}+1)}\exp{-\frac{n'_0\var_0+n_0(\mu-\mu_0)^2}{2\var}}
       \end{align*}

 \item Using the first strategy.
       First, note that $\sum_i (x_i-\mu)^2$
       \begin{align*}
        &\textstyle = \sum_i (x_i-\emu+\emu-\mu)^2
        \textstyle = \sum_i (x_i-\emu)^2 + 0 + n(\emu-\mu)^2\\
        &\textstyle = n\evar + n(\emu-\mu)^2.
        \ (\emu=\frac{\sum_i x_i}{n},
        \evar=\frac{\sum_i (x_i-\emu)^2}{n})
       \end{align*}

       Then $p(\mu|\var,X)p(\var|X)\propto p(X|\mu,\var)p(\mu,\var)$
       \begin{align*}
        &\textstyle \propto(\var)^{-(\frac{n_0+n+1}{2}+1)}\exp{-\frac{n'_0\var_0+n_0(\mu-\mu_0)^2}{2\var} -\frac{n\evar + n(\emu-\mu)^2}{2\var}}\\
        &\textstyle \propto {\parens{\frac{\var}{n+n_0}}^{-\frac{1}{2}}\exp{-\frac{(\mu-\mu_n)^2}{2\var/(n+n_0)}}} \\
        &\textstyle \hspace{0.3\linewidth}\cdot   {(\var)^{-(\frac{n'_0+n}{2}+1)}\exp{-\frac{(n+n'_0)\var_n}{2\var}}}\\
        &\textstyle \propto\N(\mu_n, \var/(n+n_0)) \cdot \invchi(n+n'_0, \var_n)
       \end{align*}
       where
       $\mu_n=\frac{n\emu + n_0 \mu_0}{n+n_0}$, and $(n+n'_0)\var_n$
       \begin{align*}
        &\textstyle = n\evar + n'_0\var_0 + n \emu^2 + n_0\mu_0^2 - (n+n_0)\mu_n^2\\
        &\textstyle = n\evar + n'_0\var_0 + \frac{nn_0}{n+n_0}(\emu-\mu_0)^2.
       \end{align*}

 \setcounter{enumi}{4}

 \item $p(x|X)=\N(\mu_{\text{pred}},\var_{\text{pred}})$, where $\mu_{\text{pred}}=\E_{p(x|X)}[x]$
       \begin{align*}
        &
        = \E_{p(\mu,\var|X)p(x|\mu,\var)}[x]
        = \E_{p(\mu,\var|X)}[\mu]\\
        &
        = \E_{p(\var|X)p(\mu|\var,X)}[\mu]
        = \E_{p(\var|X)}[\mu_n]
        = \mu_n.
        \\
        &\var_{\text{pred}}
        = \Var_{p(x|X)}[x]
        = \E_{p(x|X)}[(x-\E_{p(x|X)}[x])^2]\\
        &= \E_{p(x|X)}[(x-\mu_n)^2]\\
        &= \E_{p(x|\mu,\var)p(\mu|\var,X)p(\var|X)}[(x-\mu+\mu-\mu_n)^2]\\
        &= \E_{p(x|\mu,\var)p(\mu|\var,X)p(\var|X)}[(x-\mu)^2+(\mu-\mu_n)^2]\\
        &=
        \E_{p(x|\mu,\var)p(\mu|\var,X)p(\var|X)}[(x-\E_{p(x|\mu,\var)}[x])^2]\\
        &
        +
        \E_{p(\mu|\var,X)p(\var|X)} [(\mu-\E_{p(\mu|\var,X)}[\mu])^2]\\
        &=
        \E_{p(\mu|\var,X)p(\var|X)}[\var]
        +
        \E_{p(\var|X)}[\var/(n+n_0)]\\
        &=
        \E_{p(\var|X)}[\var]
        +
        \E_{p(\var|X)}[\var/(n+n_0)]\\
        &=
        \textstyle
        \E_{p(\var|X)}[\var\frac{n+n_0+1}{n+n_0}]
        =
        \textstyle
        \frac{n'_0+n}{n'_0+n+2}\frac{n+n_0+1}{n+n_0}\var_n.
       \end{align*}
       The rest is omitted.
\end{enumerate}

\newpage
\subsubsection{Pareto with Known $\alpha$ and Unknown Lower Bound $l$}

\begin{ex}
 Checking laptop prices online,
 I found $n=20$ offers and I believe it follows $p(x_i|l)=\pareto(\alpha=1.2, l)$
 for some cheapest / minimum price $l$ that I want to know.
 I know conservatively a laptop should cost at least $l_0=100$ USD, i.e., a prior assumption.
 Can we improve $l_0$ using data?
\end{ex}

\begin{enumerate}
 \setcounter{enumi}{1}
 \item Latents: $l$.
 \setcounter{enumi}{3}
 \item Distribution family and parameters:
       \begin{enumerate}
        \item $p(x_i|l)=\pareto(x_i|\alpha,l)$, with $0<\alpha$ known, $0<l<x_i$.
        \item $p(l)=\power(\alpha n_0,l_0)$, $0<l<l_0$.
        \item $p(l|X)=\power(\alpha (n_0+n),l_n)$, $0<l<l_n$.
       \end{enumerate}
 \setcounter{enumi}{5}
 \item $p(x_i|l)=\alpha l^\alpha x_i^{-\alpha-1}$ where $0<l<x_i$.
 \item $p(X|l)=\alpha^n l^{\alpha n} \prod_i x_i^{-\alpha-1}$ where $0<l<\min_i x_i=\el$.
 \item $p(l)= \alpha n_0 l_0^{-\alpha n_0}l^{\alpha n_0-1}$ where $0<l<l_0$.
 \item Using the second strategy.
       Let $l_n=\min (l_0, \el)$.
       \begin{align*}
        p(l,X)
        &=p(X|l)p(l)\\
        &= {\alpha^n \prod_i x_i^{-\alpha-1}} {\alpha n_0l_0^{-\alpha n_0}} l^{\alpha (n+n_0)-1}\\
        &= A l^{\alpha (n+n_0)-1}. \  (0<l<l_n)\\
        p(X)
        &\textstyle
        =\int_{0}^{l_n} p(l,X)dl
        =\frac{A}{\alpha (n+n_0)} l_n^{\alpha (n+n_0)} - 0.\\
        p(l|X)&
        =\frac{p(l,X)}{p(X)}
        =\alpha (n+n_0) l_n^{-\alpha (n+n_0)} l^{\alpha (n+n_0)-1} \\
        &
        =\power(\alpha (n_0+n),l_n).
       \end{align*}
       In other words, the new lower bound $l_n$ updated from the prior lower bound $l_0$
       is the minimum of $l_0$ and the empirical minimum $\epi$.
 \setcounter{enumi}{4}
 \item $p(x|X)=\int_{0}^{l_n} p(x|l)p(l|X)dl$
       \begin{align*}
        &=\int_{0}^{l_n} \alpha x^{-\alpha-1} \alpha (n+n_0) l_n^{-\alpha (n+n_0)} l^{\alpha (n+n_0+1)-1} dl\\
        &=\brackets{\alpha x^{-\alpha-1} \alpha (n+n_0) l_n^{-\alpha (n+n_0)} \frac{l^{\alpha (n+n_0+1)}}{\alpha (n+n_0+1)}}_{0}^{l_n}\\
        &=x^{-\alpha-1} \frac{\alpha (n+n_0)}{n+n_0+1} l_n^{\alpha}
       \end{align*}
\end{enumerate}

Note: A non-informative improper prior distributions is obtained by the limit of $n_0\downto 0$:
\begin{align*}
 p(l)
 &\to[n_0\downto 0] \alpha n_0 l_0^{-\alpha n_0} l^{-1}\propto l^{-1}.\\
 p(l|X)
 &\to[n_0\downto 0] \power(\alpha n, \el)\\
\end{align*}

\newpage

\subsubsection{Pareto with Known Lower Bound and Unknown $\alpha$}

\begin{ex}
 Checking laptop prices online,
 I found $n=20$ offers which follow $p(x_i|\alpha)=\pareto(\alpha, l=100)$.
 I want to know $\alpha$ which tells the variability.
 I have a a prior assumption $p(\alpha)=\Gamma(2, 2)$ (Gamma distribution).
\end{ex}

\begin{enumerate}
 \setcounter{enumi}{1}
 \item Latents: $\alpha$.
 \setcounter{enumi}{3}
 \item Distribution family and parameters:
       \begin{enumerate}
        \item $p(x_i|\alpha)=\pareto(x_i|\alpha,l)$, with $0<l$ known and $0<\alpha$.
        \item $p(\alpha)=\Gamma(n_0, n_0\log \frac{l_0}{l})$.
        \item $p(\alpha|X)=\Gamma(n_0+n, (n+n_0)\log \frac{l_n}{l})$.
       \end{enumerate}
 \setcounter{enumi}{5}
 \item $p(x_i|\alpha)=\alpha l^\alpha x_i^{-(\alpha+1)}$ where $l<x_i$, otherwise 0.
 \item $p(X|\alpha)=\alpha^n l^{n\alpha} \prod_i x_i^{-(\alpha+1)}$.

       Let the geometric mean of the data be $\epi=\prod_i x_i^{\frac{1}{n}}$.

       Then
       $p(X|\alpha)= \alpha^n l^{n\alpha} \epi^{-n(\alpha+1)}\propto \alpha^n \parens{\frac{l}{\epi}}^{n\alpha}$.

 \item $p(\alpha)\propto \alpha^{n_0-1}e^{-n_0 \log \frac{l_0}{l}\alpha}=\alpha^{n_0-1}\parens{\frac{l}{l_0}}^{n_0\alpha}$.
 \item Using the first strategy.
       $p(\alpha,X)=p(X|\alpha)p(\alpha)$
       \begin{align*}
        \textstyle \propto \alpha^{(n+n_0)-1} \parens{\frac{l^{n_0}l^n}{l_0^{n_0}\epi^n}}^{\alpha}
        \hspace{-0.7em}\propto \alpha^{(n+n_0)-1} \parens{\frac{l}{l_n}}^{(n+n_0)\alpha}\hspace{-2.2em}\propto\hspace{0.7em} p(\alpha|x),
       \end{align*}
       where $l_n=\parens{l_0^{n_0}\epi^n}^{\frac{1}{n+n_0}}$.
       In other words, the new lower bound $l_n$ updated from the prior lower bound $l_0$
       is a weighted geometric mean of $l_0$ and the empirical geometric mean $\epi$.
 \setcounter{enumi}{4}
 \item Let $p(\alpha|X)=\Gamma(A=n_0+n, B=(n+n_0)\log \frac{l_n}{l})$.
       \begin{align*}
        p(x,\alpha|X)
        &\textstyle =p(x|\alpha)p(\alpha|X)\\
        &\textstyle = \alpha l^\alpha x^{-(\alpha+1)} \cdot \frac{B^A}{\Gamma(A)}\alpha^{A-1} e^{-B\alpha}\\
        &\textstyle = \frac{B^A}{x\Gamma(A)}\alpha^{A} e^{-(B+\log \frac{x}{l})\alpha}\\
        p(x|X)
        &=\int_0^\infty p(x,\alpha|X) d\alpha \\
        &= \frac{B^A}{x\Gamma(A)}\frac{\Gamma(A+1)}{(B+\log \frac{x}{l})^{A+1}}\\
        &= \frac{AB^A}{x(B+\log \frac{x}{l})^{A+1}}\\
       \end{align*}

       The final step is due to $\int_0^\infty x^ae^{-bx}dx=\frac{\Gamma(a+1)}{b^{a+1}}$
       and $\Gamma(z+1)=z\Gamma(z)$.
\end{enumerate}

Note: A non-informative improper prior distribution is obtained by the limit of $n_0\downto 0$:

\begin{align*}
 p(\alpha)&=\alpha^{n_0-1}\parens{\frac{l}{l_0}}^{n_0\alpha} &&\to[n_0\downto 0] \alpha^{-1}.\\
 p(\alpha|X)&=\Gamma(n_0+n, (n+n_0)\log \frac{l_n}{l})       &&\to[n_0\downto 0] \Gamma(n, n\log \frac{\epi}{l}).
\end{align*}

\newpage

\subsubsection{Uniform $U(l,u=l+w)$ with Unknown Width $w$}

\begin{ex}[German Tank Problem]
The Allies have captured $N=100$ Nazi tanks each of which has a serial number painted on the side, starting from $l=1$.
Currently, the maximum number observed so far is $\eu=993$.
Assuming that the number is assigned uniformly,
how many tanks were likely produced?
\end{ex}

\begin{enumerate}
 \setcounter{enumi}{1}
 \item Latents: $w$.
 \setcounter{enumi}{3}
 \item Distribution family and parameters:
       \begin{enumerate}
        \item $p(x_i|w)=U(x_i|l,l+w). \quad (w>0)$
        \item $p(w)=\pareto(n_0, w_0)$.
        \item $p(w|X)=\pareto(n_0+n, w_n)$.
       \end{enumerate}
 \setcounter{enumi}{5}
 \item $p(x_i|w)=w^{-1}$, $l<x_i<l+w$.
 \item $p(X|w)=w^{-n}$, $l<\min_i x_i < \max_i x_i<l+w$.

       Let $\ew=\max_i x_i-l$.
 \item $p(w)=n_0u_0^{n_0} w^{-n_0-1}$ where $0<w_0<w$, otherwise 0.
 \item Using the second strategy.
       $p(w,X)=p(X|w)p(w)=n_0u_0^{n_0} w^{-n-n_0-1}$ where $0<\max (w_0, \ew)<w$,
       otherwise 0. Let $w_n=\max (w_0, \ew)$.
       \begin{align*}
        p(X)
        &
        \textstyle=\int_\R p(w,X)dw
        \textstyle=\int_{w_n}^\infty p(w,X)dw\\
        &
        \textstyle=\brackets{n_0u_0^{n_0} \frac{w^{-n-n_0}}{-n-n_0}}_{w_n}^\infty
        \textstyle=\frac{n_0u_0^{n_0}}{n+n_0} w_n^{-n-n_0}.\\
        p(w|X)
        &=\frac{p(w,X)}{p(X)}=\frac{(n+n_0)w_n^{n+n_0}}{w^{(n+n_0)+1}} = \pareto(n_0+n, w_n).
       \end{align*}
       In other words, the new max $w_n$ updated from the prior max $w_0$
       is the max of $w_0$ and the empirical max width $\ew$.
 \setcounter{enumi}{4}
 \item $p(x|w)p(w|X) = \frac{(n+n_0)w_n^{n+n_0}}{w^{(n+n_0)+2}}$.

       \begin{align*}
        p(x|X)
        &=\int_{w_n}^\infty p(x,w|X)dw = \int_{w_n}^\infty p(x|w)p(w|X)dw \\
        &=0-\frac{(n+n_0)w_n^{n+n_0}}{-(n+n_0+1)w_n^{(n+n_0)+1}}.\\
        &=\frac{n+n_0}{n+n_0+1}w_n^{-1}\\
        &=U\parens{l, l+\frac{n+n_0+1}{n+n_0}w_n}.
       \end{align*}

       Note that the updated uniform posterior predictive distribution has a wider range
       than the empirical distribution $U(l,l+w_n)$,
       thus ``has the ability to extrapolate from the data'' \cite{tenenbaum1998bayesian}.
\end{enumerate}

Note: A non-informative improper prior is obtained by the limit of $w_0\downto 0$:

\begin{align*}
 p(w)=n_0w_0^{n_0} w^{-n_0-1}
 &\to[w_0\downto 0] \text{Const.}\\
 p(w|X)=\pareto(n_0+n, w_n)
 &\to[w_0\downto 0] \pareto(n, \ew).
\end{align*}

\newpage

\subsubsection{Uniform $U(l,u=l+w)$ with Unknown Lower Bound $l$}

\begin{ex}
The Allies have captured $N=100$ latest Nazi tanks each of which has a serial number painted on the side.
We know they produced $u=10000$ tanks in total.
The minimum number on these latest tanks that we observed so far is $\el=1945$.
When did they stop producing the older version?
\end{ex}

\begin{enumerate}
 \setcounter{enumi}{1}
 \item Latents: $l$.
 \setcounter{enumi}{3}
 \item Distribution family and parameters:
       \begin{enumerate}
        \item $p(x_i|l)=U(x_i|l,l+w). \quad (w>0)$
        \item $p(l)=U(u_0-w, l_0).\quad (u_0-w<l<l_0)$.
        \item $p(l|X)=U(u_n-w, l_n)$.
       \end{enumerate}
 \setcounter{enumi}{5}
 \item $p(x_i|l)=w^{-1}$, $l<x_i<l+w$.
 \item $p(X|l)=w^{-n}$, $l<\min_i x_i < \max_i x_i<l+w$.

       Let $\eu=\max_i x_i$ and $\el=\min_i x_i$.
       Then $\eu-w<l<\el$.
 \item $p(l)=(l_0-u_0+w)^{-1}$.
 \item Using the second strategy.
       $p(l,X)=p(X|l)p(l)=\text{Const}.$ where $\max(u_0-w,\eu-w)<l<\min(l_0,\el)$,
       otherwise 0.
       Let $u_n=\max (u_0, \eu)$, $l_n=\min (l_0, \el)$.
       \begin{align*}
        p(X)
        &
        \textstyle=\int_\R p(l,X)dl
        \textstyle=\int_{u_n-w}^{l_n} p(l,X)dl\\
        &
        \textstyle=\brackets{w^{-n}(l_0-u_0+w)^{-1}\cdot l}_{u_n-w}^{l_n}
        \textstyle=w^{-n}\frac{l_n-u_n+w}{l_0-u_0+w}.\\
        p(l|X)
        &=\frac{p(l,X)}{p(X)}=(l_n-u_n+w)^{-1} = U(u_n-w,l_n).
       \end{align*}
 \setcounter{enumi}{4}
 \item $p(x,l|X)=p(x|l)p(l|X) = w^{-1}(l_n-u_n+w)^{-1}.$
       Note that $l<x<l+w$ for $p(x|l)$, thus $x-w<l<x$. Therefore
       \begin{align*}
        p(x|X)
        &=\int_{\max(u_n, x)-w}^{\min(l_n,x)} p(x,l|X)dl\\
        &=\frac{\min(l_n,x)-\max(u_n, x)+w}{w(l_n-u_n+w)}\\
        &=\left\{
        \begin{array}{ll}
         \frac{l_n-x+w}{w(l_n-u_n+w)},& (u_n<x<l_n+w)\\
         w^{-1}                      ,& (l_n<x<u_n)\\
         \frac{x-u_n+w}{w(l_n-u_n+w)}.& (l_n>x>u_n+w)
        \end{array}
        \right.
       \end{align*}

\end{enumerate}

Note: A non-informative improper prior is obtained by the limit of $l_0\to\infty, u_0\to-\infty$.

\newpage

\subsubsection{Uniform $U(l,u=l+w)$ with Unknown $l,w$}

\begin{enumerate}
 \setcounter{enumi}{1}
 \item Latents: $l,w$.
 \setcounter{enumi}{3}
 \item Distribution family and parameters:
       \begin{enumerate}
        \item $p(x_i|l,w)=U(x_i|l,l+w). \quad (w>0)$
        \item $p(w)=\pareto(w|n_0,w_0), \quad (w_0<w)$

              $p(l|w)=U(l|u_0-w, l_0).\quad (u_0-w<l<l_0)$

       \end{enumerate}
 \setcounter{enumi}{5}
 \item $p(x_i|l,w)=w^{-1}$, $l<x_i<l+w$.
 \item $p(X|l,w)=w^{-n}$, $l<\min_i x_i < \max_i x_i<l+w$.

       Let $\eu=\max_i x_i$, $\el=\min_i x_i$, $\ew=\eu-\el$.

       Then $\eu-w<l<\el$ and $\ew<w$.
 \item $p(l,w)=p(l|w)p(w)=(l_0-u_0+w)^{-1}\cdot n_0w_0^{n_0} w^{-n_0-1}$

       $=(l_0-u_0+w)^{-1}\cdot A w^{-n_0-1}. \quad (A=n_0w_0^{n_0})$
 \item Using the second strategy.

       Let $u_n=\max (u_0, \eu)$, $l_n=\min (l_0, \el)$, $w_n=u_n-l_n$.
       (This implies $w_0=u_0-l_0$. Not sure if this is necessary.)
       \begin{align*}
        p(l,w,X)&=p(X|l,w)p(l|w)p(w)\\
        &=w^{-n}\cdot (w-w_0)^{-1} \cdot A w^{-n_0-1}\\
        &=A w^{-(n+n_0+1)}(w-w_0)^{-1}.\\
        p(w,X)
        &=\int_{u_n-w}^{l_n} p(l,w,X)dl
         =A w^{-(n+n_0+1)}\frac{w-w_n}{w-w_0}.\\
        p(l|w,X)
        &=\frac{p(l,w,X)}{p(w,X)}=\parens{w-w_n}^{-1}
         =U(l|u_n-w,l_n).
       \end{align*}

       Let $z=w^{-1}$, $z_n=w_n^{-1}$, $w\in[w_n,\infty]$, $z\in [0,z_n]$, $N=n+n_0$.
       Then $p(X)=\int_{w_n}^\infty p(w,X)dw$
       \begin{align*}
        &\hspace{-2em}\textstyle=\int_{z_n}^0 A z^{N+1}\frac{z^{-1}-z_n^{-1}}{z^{-1}-z_0^{-1}}\parens{\frac{dz}{-z^2}}
         \textstyle=\int_{z_n}^0-A z^{N-1}\frac{z-z_n}{z-z_0}\frac{-zz_0}{-zz_n}dz\\
        &\hspace{-2em}\textstyle=\int_0^{z_n} \parens{\frac{z^{N}}{z-z_0}-\frac{z_nz^{N-1}}{z-z_0}} A\frac{z_0}{z_n} dz
         \textstyle=\int_0^{\frac{z_n}{z_0}} \parens{\frac{z_0}{z_n} \frac{t^{N}}{1-t} - \frac{t^{N-1}}{1-t}} A\frac{z_0^{N}}{z_n} dt\\
        &\hspace{-2em}\textstyle=\parens{\frac{z_0}{z_n} \Beta\parens{\frac{z_n}{z_0}; N+1, 0} - \Beta\parens{\frac{z_n}{z_0}; N, 0}}A\frac{z_0^{N}}{z_n}\\
        &\hspace{-2em}\textstyle=\parens{\frac{w_n}{w_0} \Beta\parens{\frac{w_0}{w_n}; N+1, 0} - \Beta\parens{\frac{w_0}{w_n}; N, 0}}A\frac{w_n}{w_0^{N}}
         \textstyle=AC(N).\\
        &\textstyle p(w|X)=\frac{p(w,X)}{p(X)}=w^{-(N+1)}\frac{w-w_n}{w-w_0} C(N)^{-1}.
       \end{align*}
       Note that the posterior $p(w|X)$ is not conjugate with $p(w)=\pareto(w|n_0,w_0)$.
       However, as we see below, this does not affect the posterior predictive $p(x|X)$.

 \setcounter{enumi}{4}
 \item $p(x|X)=\iint p(x|w,l)p(l|w,X)p(w|X)dldw$
       \begin{align*}
        &\hspace{-2em}\textstyle =\iint w^{-1} \cdot (w-w_n)^{-1} \cdot w^{-(N+1)}\frac{w-w_n}{w-w_0} C(N)^{-1} dldw\\
        &\hspace{-2em}\textstyle =\iint \frac{w^{-(N+2)}}{w-w_0} C(N)^{-1} dldw
         \textstyle =\int w^{-(N+2)}\frac{w-w_n}{w-w_0} C(N)^{-1} dw \\
        &\hspace{-2em}\textstyle = \frac{C(N+1)}{C(N)} = U(x|u_n-\frac{C(N+1)}{C(N)},u_n).\\
        &\textstyle\frac{C(N+1)}{C(N)}=w_0^{-1}\frac{\frac{w_n}{w_0} \Beta\parens{\frac{w_0}{w_n}; N+2, 0} - \Beta\parens{\frac{w_0}{w_n}; N+1, 0}}{\frac{w_n}{w_0} \Beta\parens{\frac{w_0}{w_n}; N+1, 0} - \Beta\parens{\frac{w_0}{w_n}; N, 0}}.
       \end{align*}

\end{enumerate}

\newpage

\subsubsection{Bernoulli with an Unknown Success Ratio $r$}

\begin{ex}
 I've thrown a coin $N=100$ times, and the result was the head 75 times,
 giving me an empirical success ratio $\er=0.75$.
 I thought the coin is fair with some confidence equivalent to 1000 trials, but now I am in doubt.
 How much I should update my belief and suspect that the coin is rigged?
\end{ex}

\begin{enumerate}
 \setcounter{enumi}{1}
 \item Latents: $r$.
 \setcounter{enumi}{3}
 \item Distribution family and parameters:
       \begin{enumerate}
        \item $p(x_i|r)=\bern(x_i|r).\ (x_i\in \braces{0,1}, r\in [0,1])$
        \item $p(r)=\Beta(n_0, r_0)$.
        \item $p(r|X)=\Beta(n_0+n, r_n)$.
       \end{enumerate}
 \setcounter{enumi}{5}
 \item $p(x_i|r)=r^{x_i}(1-r)^{1-x_i}$
 \item $p(X|r)=r^{\sum_i x_i}(1-r)^{n-\sum_i x_i}$.
       Let $\er=\frac{1}{n}\sum_i x_i$. Then
       $p(X|r)=r^{n\er}(1-r)^{n(1-\er)}$.
 \item $p(r)\propto r^{n_0r_0-1}(1-r)^{n_0(1-r_0)-1}$.
 \item Using the first strategy.
       \[
       p(X|r)p(r)\propto r^{n_0r_0+n\er-1}(1-r)^{n_0(1-r_0)+n(1-\er)-1}.
       \]
       Let $r_n=\frac{n_0r_0+n\er}{n_0+n}$. Then
       \begin{align*}
        p(r|X)
        &\propto r^{(n_0+n)r_n-1}(1-r)^{(n_0+n)(1-r_n)-1}\\
        &=\Beta(n_0+n, r_n).
       \end{align*}
       In other words, the new success ratio $r_n$ updated from the prior success ratio $r_0$
       is a weighted average of $r_0$ and the empirical success ratio $\er$.
 \setcounter{enumi}{4}
 \item $p(x,r|X)=p(x|r)p(r|X)$
       \begin{align*}
        &\propto r^{(n_0+n)r_n+x-1}(1-r)^{(n_0+n)(1-r_n)+(1-x)-1}\\
        &p(x|X)= \int_0^1 p(x,r|X)dr \propto \Beta(A+x, B+1-x)\\
        &\textstyle = \Beta((n_0+n)r_n+x, (n_0+n)(1-r_n)+1-x).\\
        &\textstyle p(x=0|X)\propto \Beta(A, B+1)=\Beta(A,B)\frac{B}{A+B}\\
        &\textstyle p(x=1|X)\propto \Beta(A+1, B)=\Beta(A,B)\frac{A}{A+B}\\
        &\textstyle \therefore p(x=0|X)=\frac{A}{A+B}=\frac{n_0+n}{n_0+n+1}(1-r_n)\\
        &\textstyle \therefore p(x=1|X)=\frac{B}{A+B}=\frac{n_0+n}{n_0+n+1}r_n
       \end{align*}
\end{enumerate}

Note: Bernoulli has several non-informative priors
with historically important specific names.
The modeler should select the prior that corresponds to the belief.

\emph{Haldane's improper prior} is the limit of $n_0\downto 0$ and $r_0=1/2$.
It converges to a sum of Dirac's deltas ($\infty$ at $r=0$ and $r=1$), i.e.,
encodes a belief that \emph{the coin flip should be deterministic, but I do not know which result (tail/head) is true}.

\emph{Jeffery's prior} $n_0=1$ is most common,
which has a mathematical justification due to Fisher Information matrix and
encodes a moderate belief of ``I don't know''.

\emph{Bayes-Laplace prior} $n_0=2$ is oldest historically
and is due to the conventional notation for Beta distribution $B(1,1)$.

\begin{align*}
 p(r) &\to[n_0\downto 0] \delta(\theta=0)\delta(\theta=1) \propto r^{-1}(1-r)^{-1},\\
 p(r) &\to[n_0\to 1] \Beta\parens{1,1/2},\\
 p(r) &\to[n_0\to 2] \Beta\parens{2,1/2}.
\end{align*}

\newpage

\section{Distribution Zoo}
\label{sec:zoo}

Textbook sources and Wikipedia articles are not useful
because they are usually littered with unnecessary detailed information for users.
In particular, while existing textbooks and such articles describe what they are,
they do not give you an \emph{instruction} of how and when to use them,
as is done in a documentation of a program library.
Documentations of \href{https://pytorch.org/docs/stable/distributions.html}{pytorch distributions}
list plenty of mainstream distributions, but they do not contain much information for each distribution.
This section provides a down-to-earth explanation and a clear-cut instruction for how/when to use them.
As a complementary material, I also recommend \cite{leemis2008univariate} as a useful comprehensive
source of the list of distributions.

\begin{table*}[tb]
 \centering
 \begin{tabular}{|c|l|c|c|}
  \toprule
  Name & Use it for variables that are... & Long tail? & Sparse? \\
  \midrule
  \multicolumn{4}{|c|}{Continuous Distributions}\\
  \midrule
 Gaussian                  &  Unbounded and centered around the mean.       & & \\
 Gamma (and special cases) &  Monotonically increasing sum.                 & & \\
 Cauchy                    &  Tangent $\tan x$, slope, ratio between Gaussians. & Yes &\\
 Logistic                  &  Modeling the logit of a probability.          & Yes &\\
 Laplace                   &  Gaussian with outliers, or sparse (mostly 0). & Yes & Yes \\
 Horseshoe                 &  Sparse (mostly 0). (Superior to Laplace)      & Yes & Yes \\
  \midrule
  \multicolumn{4}{|c|}{Continuous Distributions whose Tails Matter}\\
  \midrule
 Uniform                   &  Lower/upper-bounded.                          & & \\
 Pareto                    &  Upper limits of something, and many other.    & Yes &\\
 Weibull/Gumbel/Fr\'echet  &  Block maxima of \iid short/Gaussian/long tail distributions. & & \\
 Generalized Pareto        &  Tail data above a certain threshold (Peaks-Over-Threshold).    & & \\
 Truncated Gaussian        &  Bounded and centered around the mean.         & & \\
  \midrule
  \multicolumn{4}{|c|}{Discrete Distributions}\\
  \midrule
    Bernoulli   &  Boolean.                                             & & \\
    Beta        &  Boolean with uncertainty.                            & & \\
    Categorical &  Unordered Categorical.                               & & \\
    Dirichlet   &  Unordered Categorical with uncertainty.              & & \\
    Binomial    &  Interval Categorical (ordinal + uniform spacing).    & & \\
  \midrule
  \multicolumn{4}{|c|}{Directional Distributions}\\
  \midrule
  von Mises-Fisher             &  A direction in a Euclid space.                           & & \\
  Riemannian Normal            &  A direction in a non-Euclid (Elliptic/Hyperbolic) space. & & \\
  \bottomrule
 \end{tabular}
\end{table*}
\subsection{Continuous Distributions}

All distributions in this subsection are instances of so-called
exponential family of distributions (\refsec{sec:conjugate}).
Gaussian distribution occupies a special place due to the Central Limit Theorem.

\paragraph{Gaussian $\N(\mu,\sigma)$}
\label{zoo:gaussian}
\begin{itemize}
 \item Use it for unbounded continuous variables.
 \item Max-entropy distribution for
       $X\in\R$ with a known $\E[X]$ and a known $\Var[X]$.
 \item The mean has a conjugate prior $\mu\sim\N(\mu_0,\sigma^2/N)$.
       Its frequentist characterization is Student's t distribution.
 \item The variance has a conjugate prior $\sigma\sim\text{Inv}\chi^2(\sigma_0,N)$.
\end{itemize}

\paragraph{Gamma $\Gamma(k, \theta)$}
\label{zoo:gamma}

\begin{itemize}
 \item Use it for a positive, continuous aggregated sum
       that increases monotonically with the same speed.
 \item For example,
       when $k\in\Z^+$, it is a wait time until the $k$-th event happens when each event occurs roughly every $\theta$ seconds.
 \item Max-entropy distribution for
       $X\in\R^+$ with a fixed $\E[X]$ and a fixed $\E[\log X]$.
 \item Scaled-Inv$\chi^2$ (Chi-Squared) distribution is a distribution of variances.
       As more observations are made, the variance $\sigma^2$ decreases and
       its inverse, the \emph{precision} $1/\sigma^2$, increases at a constant rate.
       In other words, if $X\sim \text{Scaled-Inv}\chi^2$, then $1/X\sim \Gamma$.
 \item See below for a summary of special cases.
\end{itemize}

\begin{center}
 \begin{tabular}{|c|c|c|c|}\toprule
  Special case & $X$    & $k$        & $\theta$ \\\midrule
  Gamma        & $\R^+$ & $\R^+$     & $\R^+$   \\
  Poisson      & $\Z^{0+}$ & $\Z^{0+}$ & $\R^+$   \\
  Exponential  & $\R^+$ & $k=1$      & $\R^+$   \\
  Erlang       & $\R^+$ & $\Z^{0+}$ & $\R^+$   \\\bottomrule
 \end{tabular}
\end{center}

\paragraph{Multivariate normal $\N(\mu,\Sigma)$}
\label{zoo:multivariate-normal}

\begin{itemize}
 \item Multiple random variables that correlates with each other with a covariance $\Sigma$.
 \item Is a max-entropy distribution.
 \item Its conjugate prior is Normal-inverse-Wishart distribution.
\end{itemize}

\paragraph{Cauchy $C(x_0, \gamma)$}
\label{zoo:cauchy}

\begin{itemize}
 \item Use it as a tangent $\tan X$ of a random variable.
 \item Use it as a ratio between two Gaussian random variables $X/Y$ each with mean 0.
 \item It has a longer tail than Gaussian.
 \item Max-entropy distribution for
       $X\in\R$ with $\E[\log(1+(X-x_{0})^{2}/\gamma ^{2})]=\log 4$.
 \item A Cauchy distribution has a median, but lacks the mean and the variance,
       therefore the CLT (\reftheo{theo:clt}) does not apply,
       i.e., even with an infinite sample, it does not converge to the mean.
 \item A half-Cauchy distribution $C^+(x_0, \gamma)$ has only one side of the median.
\end{itemize}

\paragraph{Logistics}
\label{zoo:laplace}

\begin{itemize}
 \item Use it for a logit of probability.
 \item It has a longer tail than Gaussian.
 \item Max-entropy distribution for
       $X\in\R$ with $\E[X]=\mu$ and $\E[ \log (e^{\frac{x-\mu}{2s}}+e^{-\frac{x-\mu}{2s}})]=1$.
\end{itemize}

\paragraph{$\text{LogNormal}(\mu,\sigma^2)$}
\label{zoo:lognormal}

\begin{itemize}
 \item Use it for a variable that is logarithm of a Gaussian variable.
 \item Max-entropy distribution for
       $X\in\R^+$ with a known $\E[\log X]$ and a known $\Var[\log X]$.
       It is different from pareto because it only assumes a known mean.
\end{itemize}

\subsection{Sparse Distributions}
\label{zoo:sparse}

Sparse distributions have a stronger concentration toward 0.
This is helpful for obtaining a distribution that is mostly 0.
\citet{carvalho2009handling} unified several sparse distributions into a single framework.
I describe Laplace and Horseshoe only.

\paragraph{Laplace}
\label{zoo:laplace}

\begin{itemize}
 \item Use it for distributions with outliers.
 \item Use it for sparse modeling. See Horseshoe prior.
 \item It has a longer tail than Gaussian.
 \item Max-entropy distribution for
       $X\in\R$ with a known $\E[X]=\mu$
       and a known $\E[|X-\mu|]=b$.
       \cite{kotz2001laplace}
 \item It is a mixture of Gaussians with $\N(\mu,\lambda^2\sigma^2)$
       and $\lambda^2\sim \text{Exp}(2)$.
\end{itemize}

\paragraph{Horseshoe}
\label{zoo:horseshoe}

\begin{itemize}
 \item Use it for sparse modeling.
 \item It has a longer tail than Gaussian.
 \item Unlike Laplace, it has an infinitely large density at 0,
       resulting in a much sparser distribution than Laplace.
 \item It is a mixture of Gaussians with $\N(\mu,\lambda^2\sigma^2)$
       and $\lambda^2\sim \text{C}^+(0,1)$.
 \item So far, it is not shown to be a maximum entropy distribution.
\end{itemize}

\subsection{Continuous Distributions whose Tails Matter}

Regular statistics are typically built around the Central Limit Theorem,
which deals with the limit behavior of a sum/average of multiple samples.
In contrast, a branch of statistics called \emph{Extreme Value Theory} \cite{beirlant2004statistics}
is built around the \emph{Extremal Limit Theorem},
a theorem that describes the limit behavior of the maximum of multiple samples.

I believe they are underrepresented in the current mainstream ML research
due to its focus on the most likely value (MAP estimate).
In essence, extreme value theory was built for predicting the \emph{least likely} worst case
that is at the edge of the distribution.
However, in decision making tasks that are traditionally handled by symbolic AI,
these rare, least likely values are often precisely what we want to know --- For example,
a predictor for a maximum/minimum value should be highly useful because
they typically focus on some form of optimization problems.
I hope to see more frequent adaptations of these distributions in the future.

\paragraph{Uniform $U(l,u)$}
\label{zoo:uniform}

\begin{itemize}
 \item Use it for continuous variables when its maximum and the minimum (an upper and a lower bound) matters.
 \item Max-entropy distribution for $X\in[l,u]$.
 \item The conjugate prior for $l$ and $u$ is a Pareto distribution.
       Uniform distribution is a rare case that has a conjugate prior
       despite not being in an exponential family of distributions (\refsec{sec:conjugate}).
\end{itemize}

A Bayesian approach for predicting the minimum/maximum of a random variable is
done by Uniform-Pareto Conjugate Prior \cite{kiefer1952sequential,degroot1970optimal,rossman1998bayes,tenenbaum1998bayesian}.
It models the variable with a uniform distribution,
and further model their upper/lower bounds with Pareto distributions.

An illustrative example of Uniform-Pareto conjugate is called a \emph{taxicab problem}:
Watching the streets in a train going through a dense city, you notice each taxi is assigned a number.
Assuming that the number is assigned uniformly, and only seeing finite taxis,
can you guess the maximum number used in this entire city?
The estimated distribution of such an upper bound is slightly higher than
the largest number you would actually observe.
You might have seen a very large number,
but it would be probably overconfident to believe that
you actually saw the largest number in this city.
This showcases an example where a Bayesian method can predicts a range slightly wider
than what is seen in the limited data, making more realistic assumption than the frequentist approach.
\citet{tenenbaum1998bayesian} observed that humans show a similar reasoning/learning behavior.

\paragraph{Pareto $\pareto(\alpha,\theta)$}
\label{zoo:pareto}

\begin{itemize}
 \item Use it for a variable that shows power law \cite{newman2005power,lin2015power}.
       See if your variable fits one of several mechanisms that cause it.
 \item Max-entropy distribution for
       $X\in[\theta,\infty)$ with a known $\E[\log X]$.
       \cite{preda1984informational}
 \item Example: Use it as a conjugate prior distribution for the maximum/minimum of a Uniform distribution.
 \item Example: \emph{Self-Organized Criticality}.
       Constant cumulative effects cause an avalanche.
       For example, the size of an earthquake (caused by accumulating stress).
 \item Example: \emph{Yule process}, in which each species in a genus will get
       an equal chance of splitting into two new species / forming a new species,
       and new species sometimes form a new genus by chance.
       For example, a taxonomy of biological species or research fields,
       or Zipf's law (vocabulary tends to diverge).
 \item Example: \emph{Preferential Attachment}.
       The size accelerates the accumulation.
       For example, sales of a book/movie tickets (driven by reputation),
       the size of social clusters and cities (larger ones attract more people),
       and wealth distribution (richer gets richer).
       Preferential attachment and Yule process are almost identical
       because a large genus gets more new species.

 \item It does not have a variance for $\alpha>2$, due to \refdef{def:expectation}.
 \item It does not have a expectation for $\alpha>1$, due to \refdef{def:expectation}.
\end{itemize}

\paragraph{Generalized Extreme Value Distributions $\mathrm{GEV}(\mu,\sigma,\gamma)$}
\label{zoo:gumbel}

\begin{itemize}
 \item Use it for the \emph{block maxima} of data, i.e.,
       maximum values of multiple blocks that contain \iid measurements.
 \item For example, the annual maximum discharge of a river.
       Each discharge is supposed to follow a Gaussian distribution,
       and the block maxima is the maximum value over the year.
       You will predict the maximum of the next year from the multi-year historical data of maxima.
\end{itemize}

CLT says that the limit average of \iid variables asymptotically follows a Gaussian distribution.
In contrast, Fisher--Tippett--Gnedenko theorem \cite{fisher1928limiting,gnedenko1943limiting} shows that
the maximum of \iid variables asymptotically follows one of three \emph{Extreme Value Distributions} (EVDs):
Gumbel, Fr\'echet, or Weibull distributions.
If each measurement follows an exponential-tail distributions (e.g. Gaussian),
then its block maxima follows a Gumbel distribution.
It follows a Fr\'echet distribution when each measurement has a heavier tail than a Gaussian,
and a Weibull distribution when it has a lighter tail.
The heaviness of a tail distribution is characterized by \emph{Extreme Value Index} (EVI) typically denoted by $\gamma$;
A Gaussian distribution has $\gamma=0$.
In other literature, a term \emph{Tail Index} $\alpha$ is also used.

EVDs are used in \emph{Block-Maxima} modeling of the maximum.
The typical application is as follows:
Given a set of time-series data $(t_i,x_i)$,
divide it into blocks with equal intervals, e.g., an hourly / daily / weekly / monthly block $t_{kM}\ldots t_{(k+1)M}$.
If you extract the maximum in each block,
then the maximum of each block and the maximum of future blocks follow EVDs.

\begin{center}
 \begin{tabular}{|c|c|}\toprule
  Special case & $\gamma$   \\\midrule
  Fr\'echet    & $\gamma>0$ \\
  Gumbel       & $\gamma=0$ \\
  Weibull      & $\gamma<0$ \\\bottomrule
 \end{tabular}
\end{center}

\paragraph{Generalized Pareto Distribution $\gp(\mu,\sigma,\xi)$}

\begin{itemize}
 \item Use it for the \emph{Peaks-Over-Threshold} modeling of the rare events that happen in the distribution tail.
 \item For example, the distribution of the excess over the safety threshold
       of the water level of a river at an embankment.
\end{itemize}

As a second limit theorem in Exterme Value Theory,
Pickands--Balkema--de~Haan theorem
\cite{pickands1975statistical,balkema1974residual} showed that
the \emph{excess} of a random variable over a certain threshold
asymptotically follows a Generalized Pareto (GP) distribution.

Exponential, Uniform, Pareto, Lomax, (reversed) Power distributions are special cases of GP distribution.
They share the characteristics that the probability density is 0 below a certain threshold $\mu$.
This makes sense when you try to predict the \emph{true} maximum from an \emph{empirical} maximum
--- The true maximum must be above the largest value an agent has seen before.

The typical application is as follows:
Given a set of time-series data $(t_i,x_i)$,
extract a subset whose $x_i$ exceeds a certain threshold $\mu$.
Then $x_i$ in the subset, as well as the future exceeding data, follow a GP distribution.

\begin{center}
 \begin{tabular}{|c|c|c|c|}\toprule
  Special case                            & $\mu$ & $\sigma$                 & $\xi$                \\\midrule
  Exponential     $\mathrm{Exp}(\lambda)$ & $0$   & $\frac{1}{\lambda}$      & $0$                  \\
  Uniform     $U(0,\sigma)$               & $0$   & $\sigma$                 & $-1$                 \\
  Pareto   $\pareto(\alpha,x_m)$          & $x_m>0$ & $\frac{x_m}{\alpha}$     & $\frac{1}{\alpha}>0$ \\
  Lomax  $\lomax(\alpha,\lambda)$         & $0$   & $\frac{\lambda}{\alpha}$ & $\frac{1}{\alpha}>0$ \\
  $-x\sim \power(\alpha,\beta)$ & $-\beta<0$ & $\frac{\beta}{\alpha}$ & $-\frac{1}{\alpha}<0$ \\\bottomrule
 \end{tabular}
\end{center}

\paragraph{Truncated Gaussian $\N(\mu,\sigma, l, u)$}
\label{zoo:truncated-gaussian}

\begin{itemize}
 \item Use it for continuous variables when its mean, variance, maximum, and minimum all matters.
 \item It can be seen as a combination of Uniform and Gaussian.
 \item Max-entropy distribution for
       $X\in[l,u]$ with a known $\E[X]$ and a known $\Var[X]$.
 \item Note that the $\mu$ and $\sigma$ are the mean/variance \emph{before the truncation}.
 \item Naive calculation method is numerically unstable.
       Use a existing statistics library to compute it.
\end{itemize}

\subsection{Discrete Distributions}
\paragraph{Bernoulli $\bern(p)$}
\label{zoo:bernoulli}

\begin{itemize}
 \item Use it for boolean variables.
 \item Probability for being true, $p$, has a conjugate prior $B(\alpha,\beta)$ or $B(p_0, N)$.
\end{itemize}

\paragraph{Beta $B(\alpha,\beta), B(p_0, N)=B(\frac{\alpha}{\alpha+\beta}, \alpha+\beta)$}
\label{zoo:beta}

\begin{itemize}
 \item Use it for a variable representing a boolean with uncertainty.
 \item Use it for a variable representing a probability of success.
 \item Do not confuse it with ``a variable representing a success,'' which is $\bern$.
 \item In the traditional notation $B(\alpha,\beta)$,
       $\alpha$ is the pseudocount of observed success and $\beta$ is the pseudocount of observed failures.
 \item The second notation $B(p_0, N)$ is instead parameterized by
       the empirical success rate and the total pseudocount.
\end{itemize}

\paragraph{Categorical $\cat(\vp)$}
\label{zoo:categorical}

\begin{itemize}
 \item Use it for categorical variables.
 \item Probability for each category, $\vp$, has a conjugate prior $\mathrm{Dir}(\valpha)$ or $\mathrm{Dir}(\vp_0, N)$.
\end{itemize}

\paragraph{Dirichlet $\dir(\valpha)$, $\dir(\vp_0,N)=\dir(\valpha/N, \sum_i \valpha_i)$}
\label{zoo:dirichlet}

\begin{itemize}
 \item Use it for categorical variables with uncertainty.
 \item Use it for a variable representing a categorical distribution.
 \item Do not confuse it with ``a variable representing a categorical choice,'' which is $\cat$.
 \item In the traditional notation $\dir(\valpha)$,
       $\valpha$ is a vector of pseudocounts for categories.
 \item The second notation $\dir(\vp_0,N)$ is instead parameterized by
       the empirical distribution and the total pseudocount.
\end{itemize}

\paragraph{Binomial $\bin(n, p)$}
\label{zoo:binomial}

\begin{itemize}
 \item Use it for an \emph{interval} variable, i.e.,
       an \emph{ordinal} categories with \emph{equal intervals} between them.
       Equal intervals imply that the distances between the categories are meaningful.
 \item Use it for a random positive integer $\rx\in \braces{0..n}$.
 \item Use it for a counter.
 \item Use it for a discrete quantity with a rough mean $np$ and a variance $np(1-p)$.
 \item The variable is typically explained as a number of success among $n$ trials, with probability of success $p$.
 \item Probability for being true, $p$, has a conjugate prior $B(\alpha,\beta)$ or $B(p_0, N)$.
 \item If they are labels, assign 0 to the first element according to the order.
 \item For example, imagine classifying the hotness of curries served in a nearby Indian curry restaurant.
       The restaurant owner ensures that
       the amount of spice they add between the hotness levels
       (``not spicy,'' ``mild spicy,'' ``very spicy,'' and ``crazy spicy'')
       is constant. Then it is probably safe to use $\bin$.
\end{itemize}

\subsection{Directional Distributions}

Directional distributions deals with \emph{directions} in a unit ball.
Related keywords: Spherical, hyperspherical, Poincare ball, Riemannian.

\paragraph{von Mises-Fisher $\text{vMF}(\vmu, \kappa)$}
\label{zoo:vmf}

\begin{itemize}
 \item Use it for a random high-dimensional unit vector $\vmu$ and spread $\kappa$.
       It assumes a Euclidean space.
 \item A max-entropy distribution.
 \item For example, word embedding.
 \item KL divergence and $\log p(x)$ yields a cosine distance.
\end{itemize}

\paragraph{Riemannian Normal $(\vmu, \kappa)$}
\label{zoo:riemannian}

\begin{itemize}
 \item Use it for a random high-dimensional unit vector $\vmu$ and spread $\kappa$.
       It generalizes vMF by assuming non-Euclidean space, e.g., hyperbolic/elliptic space.
 \item A max-entropy distribution.
 \item For example, word embedding.
\end{itemize}

\section{Peripheral Topics}

\label{sec:future-work}

Finally,
I discuss peripheral topics that I could not include in this memo
in order to limit its scope and maintain the focus.
I have varying levels of understandings of these topics and
some of the topics below have a section that I have completed but did not make it into this memo.

\emph{Markov Chain Monte Carlo} (MCMC) is
an exact Bayesian method for computing an expectation.
Simulated Annealing, a nature-inspired optimization method, is theoretically an instance of MCMC.
Due to its sequential nature,
it is difficult to leverage highly parallel accelerator (GPU)
used to build neural networks.
However, recently,
a theoretical connection was made between MCMC and
\emph{Diffusion models} \cite{sohl2015deep}, a group of methods that achieved a \sota performance
and was subsequently adapted in image-generation models including DALL-E 2 \cite{dalle2}.
Related keywords include:
Gibbs sampling, rejection sampling, importance sampling, ergodicity.

\emph{Causal Inference} \cite{pearl2018bookofwhy} is a general framework for
discovering causal relationship between random variables, while eliminating pure correlations.
A large effort was spent on pruning spurious correlations in an undirected graphical model
by interventions (additional experiments).
Since these interventions are performed sequentially,
it is also currently considered incompatible with modern machine learning frameworks.
Related keywords include:
Do-calculus, intervention, causality, causation, D-separation.

\emph{Multi-Armed Bandit} (MAB) is
a fundamental group of methods for making an optimal decision under uncertainty
while balancing the exploration and exploitation.
Related keywords include:
Upper Confidence Bound \cite{auer2002finite},
Cumulative Regret vs.\ Simple Regret \cite{feldman2014simple}, Best Arm Identification,
Monte-Carlo Tree Search \cite{kocsis2006bandit}.

\emph{Active Learning} (AL) \cite{burr2012active} is a group of methods that
tries to expand the dataset dynamically and selectively by a certain strategy.
It has connections with MAB in terms of maximizing the information gain \cite{antos2008active}.
Related keywords include:
Uncertainty Sampling, Mutual Information Maximization, Fisher Information Minimization.

\emph{Reinforcement Learning} (RL) \cite{sutton2018reinforcement,bertsekas2019reinforcement}
can be, frankly speaking, seen in various forms.
It is particularly complicated because there are three factions that claim the dominance in this field:
(1) Optimal Control community from which RL inherited various theoretical and algorithmic ideas.
(2) Symbolic MDP community who sees it as Stochastic Shortest Path problem.
(3) Psychology / connectionist community who believes RL is how living being learns from the environment.

In one sense, it is a generalization of dynamic programming.
Optimal Control community sees it as an instance of Optimal Control.
Symbolic AI community sees it as an instance of stochastic shortest path.
I personally see it as Active Learning + Supervised Learning.
Since this is a large topic, I would rather avoid discussing this topic in depth.

\fontsize{9.5pt}{10.5pt}
\selectfont

\section*{Acknowledgments}

I appreciate the proofreading by the following people (in alphabetic order):
Akash Srivastava,
Carlos N\'u\~nez Molina,
Dan Gutfreund,
Hiroshi Kajino,
Hector Palacios,
Marlyse Reeves,
Ryo Kuroiwa,
Sarath Sreedharan.

\end{document}